\DeclareFontFamily{U}{mathx}{\hyphenchar\font45}
\DeclareFontShape{U}{mathx}{m}{n}{
      <5> <6> <7> <8> <9> <10>
      <10.95> <12> <14.4> <17.28> <20.74> <24.88>
      mathx10
      }{}
\DeclareSymbolFont{mathx}{U}{mathx}{m}{n}
\DeclareMathSymbol{\bigtimes}{1}{mathx}{"91}
\definecolor{DarkRed}{rgb}{0.1,0.1,0.8}
\definecolor{DarkBlue}{rgb}{0.1,0.1,0.5}
\definecolor{ForestGreen}{rgb}{0.1333,0.5451,0.1333}
\definecolor{DarkRed}{rgb}{0.8,0,0.4}
\definecolor{Red}{rgb}{0.8,0,0.4}
\crefname{property}{property}{Property}
\crefname{equation}{eq}{Eq}
\def\BState{\State\hskip-\ALG@thistlm}
\newcolumntype{P}[1]{>{\RaggedRight\hspace{0pt}}p{#1}}
\newcolumntype{L}{>{\arraybackslash}m{3cm}}
\newcolumntype{q}{>{\arraybackslash}m{4.5cm}}
\definecolor{aliceblue}{rgb}{0.94, 0.97, 1.0}
\definecolor{blizzardblue}{rgb}{0.67, 0.9, 0.93}
\newtheorem{lemma}{Lemma}[section]
\newtheorem{theorem}[lemma]{Theorem}
\newtheorem{definition}[lemma]{Definition}
\newtheorem*{claim*}{Claim}
\newtheorem*{proposition*}{Proposition}
\newtheorem*{lemma*}{Lemma}
\newtheorem*{problem*}{Problem}
\crefname{lemma}{Lemma}{Lemmas}
\crefname{note}{Note}{Notes}
\crefname{claim}{Claim}{Claims}
\newtheorem{mdresult}{Result}
\newtheorem{remark}[lemma]{Remark}
\newtheorem{example}{Example}
\newtheoremstyle{restate}{}{}{\itshape}{}{\bfseries}{~(restated).}{.5em}{\thmnote{#3}}
\theoremstyle{restate}
\theoremstyle{definition}
\newtheorem{mdalg}{Algorithm}
\renewcommand{\qed}{\nobreak \ifvmode \relax \else
      \ifdim\lastskip<1.5em \hskip-\lastskip
      \hskip1.5em plus0em minus0.5em \fi \nobreak
      \vrule height0.75em width0.5em depth0.25em\fi}
\newcommand{\R}{\ensuremath{\mathbb{R}}}
\DeclareMathOperator*{\Prob}{\ensuremath{\mathbb{P}}}
\renewcommand{\Pr}{\Prob}
\newenvironment{tbox}{\begin{tcolorbox}[
		enlarge top by=5pt,
		enlarge bottom by=5pt,
		 breakable,
		 boxsep=0pt,
                  left=4pt,
                  right=4pt,
                  top=10pt,
                  arc=0pt,
                  boxrule=1pt,toprule=1pt,
                  colback=white
                  ]
	}
{\end{tcolorbox}}
\newcommand{\II}{\ensuremath{\mathbb{I}}}
\newcommand{\mireal}[1][]{
  \ifx\relax#1\relax%
    \II(\mione \,; \mitwo)%
  \else%
    \II(\mione \,; \mitwo\mid #1)%
  \fi
}
\newcommand{\norm}[1]{\ensuremath{\left\lVert #1 \right\rVert}}
\newif\ifhidecomments
    \newcommand{\avrim}[1]{}
    \newcommand{\idan}[1]{}
    \newcommand{\donya}[1]{}
    \newcommand{\dravy}[1]{}
    \newcommand{\kn}[1]{}
    \newcommand{\matt}[1]{}
    \newcommand{\avrim}[1]{\noindent{\textcolor{blue}{\{{\bf Avrim} \em #1\}}}}
    \newcommand{\idan}[1]{\noindent{\textcolor{violet}{\{{\bf Idan} \em #1\}}}}
    \newcommand{\donya}[1]{\noindent{\textcolor{magenta}{\{{\bf Donya:} \em #1\}}}}
    \newcommand{\dravy}[1]{\noindent{\textcolor{olive}{\{{\bf Dravy:} \em #1\}}}}
    \newcommand{\kn}[1]{\noindent{\textcolor{orange}{\{{\bf kn:} \em #1\}}}}
    \newcommand{\matt}[1]{\noindent{\textcolor{teal}{\{{\bf Matt:} \em #1\}}}}
\newcommand{\lrset}[1]{\mathopen{}\left\{#1\right\}}
\newcommand{\clos}{\mathrm{CLOS}}
\newcommand{\ir}{\mathrm{IR}}
\newcommand{\bs}{\mathrm{BS}}
\title{\textbf{PAC Learning with Improvements}  
}
{\author{
{\bf Idan Attias$^{1,2}$ \qquad
Avrim Blum$^2$ \qquad
Keziah Naggita$^2$}\\[1ex]
 {\bf Donya Saless$^2$ \qquad
Dravyansh Sharma$^{2,3}$\qquad
 Matthew Walter$^2$}\footnote{Alphabetical order}
 \\[3ex]
$^1$University of Illinois at Chicago \\[1ex]
$^2$Toyota Technological Institute at Chicago\\[1ex]
$^3$Northwestern University\\[1ex]
{\text{\{idan, avrim, knaggita, donya, dravy, mwalter\}@ttic.edu}}
}}
\date{}
\begin{document}

\maketitle

\begin{abstract}
One of the most basic lower bounds in machine learning is that in nearly any nontrivial setting, it takes {\em at least} $1/\epsilon$ samples to learn to error $\epsilon$ (and more, if the classifier being learned is complex).  However, suppose that data points are agents who have the ability to improve by a small amount if doing so will allow them to receive a (desired) positive classification.  In that case, we may actually be able to achieve {\em zero} error by just being ``close enough''.  
For example, imagine a hiring test used to measure an agent's skill at some job such that for some threshold $\theta$, agents who score above $\theta$ will be successful and those who score below $\theta$ will not (i.e., learning a threshold on the line). 
Suppose also that by putting in effort, agents can improve their skill level by some small amount $r$.  In that case, if we learn an approximation $\hat{\theta}$ of $\theta$ such that $\theta \leq \hat{\theta} \leq \theta + r$ and use it for hiring, we can actually achieve error zero, in the sense that (a) any agent classified as positive is truly qualified, and (b) any agent who truly is qualified can be classified as positive by putting in effort.  Thus, the ability for agents to improve has the potential to allow for a goal one could not hope to achieve in standard models, namely zero error.

In this paper, we explore this phenomenon more broadly, giving general results and examining under what conditions the ability of agents to improve can allow for a reduction in the sample complexity of learning, or alternatively, can make learning harder.  We also examine both theoretically and empirically what kinds of improvement-aware algorithms can take into account agents who have the ability to improve to a limited extent when it is in their interest to do so.

\end{abstract}

\section{Introduction}

There has been growing interest in recent years in machine learning settings where a deployed classifier will influence the behavior of the entities it is aiming to classify.  For example, a classifier that maps loan applicants to credit scores and then uses a particular cutoff $\hat{\theta}$ to determine whether an applicant should receive a loan will induce those below the cutoff value to take actions to improve their score. This setting is called strategic classification \cite{hardt2016strategic} or measure management \cite{bloomfield2016counts} when the actions taken do not truly improve the agent's quality, and performative prediction \cite{perdomo2021performativeprediction} more generally.  
In this work, our focus is on the case that the improvements are real e.g., paying off high-interest credit card debt, taking a money management class, etc.\ for genuinely improving one's loan application.  
{That is, the agent  responds to the classifier in order to potentially improve 
their classification~\cite{jkleinhowdo,causalstratmiller2020}, changing its true features in the process. The classifier must take this ``strategic improvement'' response into account.}

{Unlike previous works on strategic improvement that focus extensively on efficiently incentivizing and maximizing agent improvement (e.g., \cite{causalstratmiller2020,jkleinhowdo,incentiveawarenika,pmlr-v119-shavit20a}, among others), we aim to understand how an agent's capacity for improvement impacts learnability, sample complexity, and algorithm design for accurate classification.}
One high-level take-away from our theoretical analysis and empirical results is that the ability of agents to improve favors algorithms that are more ‘‘conservative'' in their decisions.  This is both due to the reduced concern over false-negative errors (since agents in those regions may still be able to improve to be classified as positive) and the increased concern over false-positive errors (which may cause individuals to ``improve'' incorrectly).

To illustrate the potential reduction in 
sample complexity that result from agents' ability to improve, one of the most basic lower bounds in machine learning is that in nearly any nontrivial setting, it takes {\em at least} $1/\epsilon$ samples to learn to error $\epsilon$ (and more, if the classifier being learned is complex).  However, if agents have the ability to improve by a small amount, we may actually be able to achieve {\em zero} error by just being ``close enough''.   
{To the best of our knowledge, this  has not been previously observed in the strategic improvement literature.} 
Returning to the loan example above, suppose that by putting in effort, agents can improve their credit score by some small amount $r$, and suppose we are in the realizable case that there is some true threshold $\theta$ such that agents with credit score above $\theta$ will be good customers and those who score below $\theta$ will not. In that case, if we learn an approximation $\hat{\theta}$ of $\theta$ such that $\theta \leq \hat{\theta} \leq \theta + r$ and use it as a cutoff to determine who should receive a loan, we can actually achieve zero error in  
that (a) any agent classified as positive is truly qualified, and (b) any agent who truly is qualified can get classified as positive by putting in effort.  Thus, the ability for agents to improve can potentially allow for a goal one could not hope to achieve otherwise.

{We also observe fundamental differences in the inherent learnability of concept classes, compared to both standard PAC learning where the agents cannot respond to the classifier, as well as the strategic PAC setting where the agent tries to deceive the classifier to obtain a more favorable classification. Somewhat surprisingly, learning with improvements can sometimes be easier than the standard PAC setting, and it can sometimes be harder than strategic classification. We show that proper learnability with improvements in the realizable setting is closely linked to the concept class being intersection-closed.}

Concretely, our contributions are as follows:

\begin{itemize}
    \item In Section~\ref{sec:separating-PAC-and-improvements}, we show a separation between the standard PAC model and our model of PAC learning with improvements. Specifically, we show that
    a finite VC dimension is neither necessary nor sufficient for PAC learnability with improvements. We further show a similar separation from the more recently studied  PAC model for strategic classification \citep{hardt2016strategic,strategicPAC}.
    
    \item In Section~\ref{sec:geometric-concepts}, we study learnability of geometric concepts in $\mathbb{R}^d$. We show that any intersection-closed concept class is learnable under our model, and show that the generalization error can be smaller than the standard PAC setting for interesting cases including thresholds and high-dimensional rectangles. We also show that the intersection-closed property is essentially necessary for proper learnability in our setting.

    \item In Section~\ref{sec:graph-model}, we study a graph model in which each node represents an agent and the improvement set of an agent is the set of its neighbors in the graph. We establish near-tight bounds on the number of labeled points the learner needs to see to learn a hypothesis which achieves zero error with high probability, given the ability to learn the labels of uniformly random nodes. We further show that it is possible to learn a ``fairer'' hypothesis that also enables improvement whenever it leads to a better classification for an agent. We also study a \emph{teaching} setting where the teacher aims to find the smallest set of labels needed to ensure that a risk-averse student achieves zero-error, and show that providing the labels for the dominating set of the positive subgraph (induced by the true positive nodes) is sufficient.

    \item In Section~\ref{sec:experiments}, 
    we conduct experiments on three real-world and one fully synthetic binary classification tabular datasets to investigate how the error rate of a model function (\(h\)) decreases when test-set agents that it initially classified as negative improve. Our results indicate that while risk-averse models may start with higher error rates, their errors rapidly drop as the negatively classified test agents improve and the improvement budget (\(r\)) increases. 
    A stricter penalty for false positives typically leads to more accurate positive classifications, resulting in greater gains from agent improvements. In most 
    cases, test errors decline sharply, sometimes reaching zero (e.g., in Figure~\ref{fig:synthetic_0.5}).
    \footnote{Our code is publicly available \href{https://github.com/ripl/PLI/tree/main}{here}.}
    
\end{itemize}

\paragraph{Related Work.}
Learning in the presence of strategic (``gaming''), utility-maximizing agents has gained increasing attention in recent years (\cite{disparateeffects,smithasocialcost,braverman_et_al,strategicperceptron,incentiveawarenika}, among others). Early research framed this problem as a Stackelberg competition \cite{hardt2016strategic,stacklerbergames}, where negatively classified agents manipulate their features to obtain more favorable outcomes if the benefits outweigh the costs. Kleinberg and Raghavan~\cite{jkleinhowdo} extend this model by considering agents who can both manipulate and genuinely improve their features, proposing a mechanism that incentivizes authentic improvement. {This model has been studied under a causal lens, where the learner may not a priori know which features correspond to manipulation or improvement. Strategic learning from observable data requires solving a causal inference problem in this setting~\cite{causalstratmiller2020}, and the ability to test different decision rules can be helpful~\cite{pmlr-v119-shavit20a}.}
Ahmadi et al.~\cite{ahmadi2022classificationstrategicagentsgame} consider a similar setting and propose classification models that balance maximizing true positives with minimizing false positives. 

We extend this line of work by 
analyzing the inherent learnability of classes, the sample complexity of learning, and the ability to achieve zero-error classification, when agents can truly improve. {Inherent learnability of concepts has been studied in the strategic manipulation setting \cite{strategicPAC,cohen2024learnability,lechner2022learning}, but not in the strategic improvement setting. In Section \ref{sec:comparison-strategic}, we show how our improvement setting differs from  strategic manipulation with respect to learnability.
The sample complexity of learning in the presence of purely improving agents has  been studied by Haghtalab et al.~\cite{incentiveawarenika}, but from a social welfare perspective where the goal is to maximize the true positives after improvement. In contrast, our primary focus is classification error, which is more sensitive to false positives. In Section \ref{sec:enabling-improvement}, we show that these two objectives need not be in conflict and may be simultaneously optimized. Finally, the ability of a learner to achieve zero-error for non-trivial concept classes and distributions has not been previously observed in any strategic or non-strategic setting.}

Our work also relates to research in reliable machine learning \cite{rivest1988learning,yaniv10a}, where a learner may abstain from classification to avoid mistakes, balancing coverage (the proportion of classified points) against error. In contrast, we strive for a zero false positive rate and minimal false negative rate, aligning with learning under one-sided error \cite{natarajan1987learning,kalaiReliable}.
We include a more detailed discussion of the related work in Appendix~\ref{app:related-work}.

\section{Formal Setting: PAC Learning with Improvements}\label{sec:setting}
 Let $\mathcal{X}$ denote the instance space consisting of agents with the ability to {\it improve}, as defined below. We restrict our attention to the case of binary classification, i.e., the label space is $\{0,1\}$. Without loss of generality, we refer to label $0$ as the {\it negative} class and label $1$ as the {\it positive} class. Let $\Delta:\mathcal{X}\rightarrow 2^\mathcal{X}$ denote the {\it improvement function} that maps each point (agent) $x\in \mathcal{X}$ to a set of points $\Delta(x)$ (the {\it improvement set}) to which $x$ can potentially {\it improve} itself in order to be classified positively. For example, if $\mathcal{X}$ is a metric space, we can define $\Delta(x)$ as the  $\ell_p$-ball centered at $x$. 
 Let $\mathcal{H}\subseteq \{0,1\}^\mathcal{X}$ denote the concept space, that is, the set of candidate classifiers.
 We will focus on the {\it realizable} setting, i.e.\ we assume the existence of an unknown (to the learner) target concept $f^*:\mathcal{X}\rightarrow\{0,1\}$ that correctly labels all points in $\mathcal{X}$ and satisfies $f^*\in \mathcal{H}$. 

The intuition behind the model is as follows. The learner first publishes a classifier $h:\mathcal{X}\rightarrow\{0,1\}$ 
(potentially based on some data sample labeled according to $f^*$). 
We will refer to this function $h$ as the learner’s hypothesis.
Each agent then reacts to 
$h$~\cite{zrnic2022leadsfollowsstrategicclassification, hardt2016strategic}---if it was classified negatively by $h$, the agent attempts to find a point in its improvement set that is positively classified by $h$ and moves to it. Note that the agents do not know the true function $f^*$ and as a result cannot react with respect to the ground truth, only based on $h$.

We formalize this as the {\it reaction set} with respect to $h$,

\begin{align*}
    \Delta_{h}(x) =\!
    \begin{cases}
        \{x\} \text{\ \ \ if } h(x) = 1,\\
        \{x\} \text{\ \ \ if } \{x'\in \Delta(x) \mid h(x')=1\}= \emptyset, \\
        \{x'\in \Delta(x):  h(x')=1\} \hfill \text{ otherwise.}
    \end{cases}%
    \end{align*}
In other words, if $h$ classifies $x$ as positive, the agent $x$ stays in place and does not attempt to improve. If $h$ classifies $x$  as negative, there are two types of reactions. Either, there is no point in its improvement set that can improve the agent's classification according to $h$ and the agent again stays put. Otherwise, the agent reacts and moves to be predicted positive by $h$. This corresponds to utility-maximizing agents that have a utility of $1$ for being classified as positive, a utility of $0$ for being classified as negative, and that incur a cost for moving, where $\Delta(x)$ corresponds to the points that $x$ can move to at a cost less than $1$. 

We say that a test point $x$ has been misclassified if there exists a point in the reaction set of $x$ where $h$ disagrees with $f^*$, formally, 
\begin{align}\label{def:loss-function}
    \textsc{Loss}(x; h, f^*)=\max_{x'\in \Delta_{h}(x)} \mathbb{I}\left[h\left(x'\right)\neq f^*\left(x'\right)\right]. 
\end{align}

\begin{remark}\label{rmk:misleading-improvement}
  The formulation of the loss function in \eqref{def:loss-function} allows for scenarios where an input \( x \) initially satisfies \( f^*(x) = 1\) and \( h(x) = 0 \), but under \( \Delta_h(x) \), it may transition to a setting where \( f^*(x') = 0 \) and \( h(x') = 1 \) for some \( x' \in \Delta_h(x) \). Think of an example where there are two features such that improving one often comes at the expense of the other. For instance, consider the trade-off between strength and endurance in athletics. Let $f^*(x)$ represent a person's endurance (e.g., marathon running capability), and $h(x)$ represent their strength (e.g., sprinting power). Focusing on increasing $h(x)$ through strength training enhances power, but this often comes at the expense of endurance, thus reducing $f^*(x)$. This reflects the natural conflict between optimizing for one feature while sacrificing the other.
\end{remark}

In words, this corresponds to an assumption that agents will improve to a point in their reaction set while breaking ties adversarially, or equivalently, that they will break ties in favor of points $x'$ for which $f^*(x')=0$. 
This assumption is natural if we want our positive results to be robust to unknown tie-breaking mechanisms, and would also hold if improving to points $x' \in \Delta_h(x)$ whose true label according to $f^*$ is negative is less effort than improving such points whose true label is positive.
Note that this loss function favors classifiers that label uncertain points as negative rather than positive.
For example, if $\{x \mid h(x)=1\} \subseteq \{x \mid f^*(x)=1\}$ then $h$ may still have zero loss if all points $x$ in the difference have at least one point $x' \in \Delta(x)$ for which $h(x')=1$.  The fact that true positives might need to put in effort to improve in order to be classified as positive (or that some negative points are not able to improve themselves to be classified as positive by $h$ even if they would have been able to do so with respect to $f^*$) does not count as an error in our setting. 

See Section~\ref{subsec:thresholds} for a concrete example.

Analogous to standard PAC learning, we assume the learner has access to a finite set of samples $S\in \mathcal{X}^m$ drawn randomly according to some fixed but unknown distribution $\mathcal{D}$ over $\mathcal{X}$, and labeled by $f^*$. The learner's population loss is given by $\textsc{Loss}_{\mathcal{D}}(h,f^*)=\mathbb{P}_{x\sim \mathcal{D}}\left[\textsc{Loss}(x; h, f^*)\right]$. This is formalized in the following.

\begin{definition}[PAC Learning with improvements]\label{def:pac-learning-with-improvements}
Algorithm $\mathcal{A}$ {\em PAC-learns with improvements} a concept class $\mathcal{H}$ with respect to improvement function $\Delta$ and data distribution $\mathcal{D}$ using sample size $M\coloneqq M(\epsilon,\delta,\Delta,\mathcal{H},\mathcal{D})$\footnote{We say the sample complexity of $\mathcal{A}$ is the smallest such $M$.}, if for any $f^* \in \mathcal{H}$, any $\epsilon > 0$ and $\delta>0$, the following holds. Algorithm \( \mathcal{A} \), with access to a sample $S\overset{\text{i.i.d.}}{\sim}\mathcal{D}^M$ labeled according to $f^*$, produces with probability at least \( 1 - \delta \) a hypothesis 
$h$ 
with \( \textsc{Loss}_{\mathcal{D}}(h,f^*)\le \epsilon \). We further say that $\mathcal{A}$ learns  $\mathcal{H}$ w.r.t.\ $\Delta$ and $\mathcal{D}$ with zero-error with sample size $M$ if for any $\delta>0$, given $S\overset{\text{i.i.d.}}{\sim}\mathcal{D}^M$ labeled by $f^*$, it returns 
$h$
with \( \textsc{Loss}_{\mathcal{D}}(h,f^*)=0\) with probability at least $1-\delta$. We will also consider distribution-independent learning, where the guarantee should hold for all distributions $\mathcal{D}$ and proper learning where we require $h \in \mathcal{H}$.

\end{definition}
\noindent Note that in our learning with improvements setting zero-error can be achieved by learning (with high probability) from a finite sample in several interesting cases, which is impossible to achieve in the standard PAC model.

\section{Separating PAC Learning with Improvements from the Standard and Strategic PAC Models}\label{sec:separating-PAC-and-improvements}

In this section, we prove that learning with improvements diverges from the 
behavior of the standard PAC model for binary classification, and also from the more recently studied PAC learning model for strategic classification \cite{hardt2016strategic,strategicPAC}.

\subsection{Comparison with the standard PAC model}

In the standard PAC model, the learnability of a concept class is equivalent to the class having a finite VC dimension. However, in our setting, where agents can improve, this condition is neither necessary nor sufficient for learnability. Concretely, we demonstrate that a class with an infinite VC dimension can still be learnable with improvements. We also provide examples of hypothesis classes with finite VC dimensions and corresponding improvement sets that cannot be learned in our framework. 
\begin{theorem}
    Finite VC dimension is neither necessary nor sufficient for PAC learnability with improvements. 
\end{theorem}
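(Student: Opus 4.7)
The theorem has two directions; I handle each by an explicit construction.

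For the direction that finite VC dimension is \emph{not necessary}, I exhibit a class of infinite VC dimension that is PAC-learnable with improvements using zero samples. Let $\mathcal{X}=\mathbb{R}$, fix any $x_0 \in \mathcal{X}$, and define $\mathcal{H}=\{h:\mathcal{X}\to\{0,1\}\mid h(x_0)=1\}$. Every labelling on any finite subset of $\mathcal{X}\setminus\{x_0\}$ extends to an $h\in\mathcal{H}$ by setting $h(x_0)=1$, so $\mathcal{H}$ shatters every finite subset of $\mathcal{X}\setminus\{x_0\}$ and thus has infinite VC dimension. Choose any improvement function with $x_0 \in \Delta(x)$ for every $x$ (for example, $\Delta(x)=\{x,x_0\}$). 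The learner ignores the sample and outputs $\hat h(x)\coloneqq \mathbb{I}[x=x_0]$. Verify zero loss: the only positive point of $\hat h$ is $x_0$, and $f^*(x_0)=1$ for every $f^*\in\mathcal{H}$, so $\hat h$ produces no false positive; moreover, for every $x\neq x_0$ the reaction set $\Delta_{\hat h}(x)$ equals $\{x_0\}$ because $x_0$ is the unique positive point of $\hat h$ inside $\Delta(x)$, and $\hat h(x_0)=f^*(x_0)=1$ leaves no error at the improved point, even under adversarial tie-breaking. Hence $\textsc{Loss}_\mathcal{D}(\hat h,f^*)=0$ for every $\mathcal{D}$ and every $f^*\in\mathcal{H}$.

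For the direction that finite VC dimension is \emph{not sufficient}, I construct a VC-dimension-$1$ class together with an improvement function and a distribution under which no learner can PAC-learn with improvements. Take $\mathcal{X}=\{1,2,3\}$ and $\mathcal{H}=\{h_1,h_2\}$ with $h_i(x)=\mathbb{I}[x=i]$; then $\mathrm{VCdim}(\mathcal{H})=1$. Set $\Delta(3)=\{1,2\}$ and $\Delta(1)=\{1\}$, $\Delta(2)=\{2\}$. Let $\mathcal{D}$ be concentrated on the point $3$. Because $f^*(3)=0$ for both $h_1$ and $h_2$, every sample is labelled $0$ and is uninformative about which of $h_1, h_2$ is $f^*$. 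Any proper learner must commit to one of $\{h_1,h_2\}$; against a uniformly random $f^*\in\mathcal{H}$, with probability $1/2$ it outputs the wrong hypothesis, and then the agent at $3$ has $\Delta_h(3)$ equal to the singleton positive point of the chosen $h$, where $h$ outputs $1$ but $f^*$ outputs $0$, yielding $\textsc{Loss}=1$. This rules out proper PAC learnability with improvements for this $(\mathcal{H},\Delta)$.

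The main obstacle will be extending the \emph{not sufficient} direction to improper learners: for the triangular gadget above, the trivial output $h\equiv 0$ has no false positives and achieves zero loss since the agent at $3$ then has no positive point in $\Delta(3)$ to improve toward. To eliminate this escape, the construction must be modified so that $h\equiv 0$ is precluded, for instance by embedding the gadget into an infinite family where each candidate target has nontrivial positive mass under the chosen $\mathcal{D}$, forcing any successful $h$ to be positive somewhere while making every such commitment adversarially exploitable by some $f^*\in\mathcal{H}$. This trade-off---no safe positive region to commit to when the common-positive escape is removed---will be the heart of the non-learnability argument.
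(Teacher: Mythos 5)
The first half of your proposal (finite VC dimension is not necessary) is correct. It differs mildly from the paper: the paper takes an \emph{arbitrary} class of infinite VC dimension, sets $\Delta(x)=\mathcal{X}$, and uses $O\!\left(\frac{1}{\epsilon}\log\frac{1}{\delta}\right)$ samples to either find a positive anchor or certify the positive mass is at most $\epsilon$, whereas you hard-wire the anchor $x_0$ into the class so that a zero-sample learner outputting $\mathbb{I}[x=x_0]$ has loss exactly $0$ for every $f^*$ and every $\mathcal{D}$. Your computation of the reaction sets and of $\textsc{Loss}$ is right, and since the theorem only requires exhibiting \emph{some} infinite-VC class that is learnable with improvements, this direction stands (it is just less general than the paper's, which shows every infinite-VC class becomes learnable under the all-of-$\mathcal{X}$ improvement function).

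The second half, however, is not proven, and you say so yourself: your three-point gadget only defeats \emph{proper} learners, and the improper output $h\equiv 0$ achieves zero loss (the agent at $3$ sees no positive point in $\Delta(3)$, stays put, and is correctly labeled negative). The theorem's notion of PAC learning with improvements does not require $h\in\mathcal{H}$ — properness is an additional restriction in the paper's definition, and the hardness of \emph{proper} learning for non-intersection-closed classes is a separate result there — so ruling out proper learners does not establish that finite VC dimension is insufficient. Your closing paragraph correctly identifies what is missing (the target must carry positive mass that cannot be served by the all-negative hypothesis, while any commitment to a positive region must be exploitable), but it is a plan, not an argument. The paper executes exactly this plan: $\mathcal{H}$ is unions of two half-open intervals on $[0,1]$, $\mathcal{D}$ is uniform, $\Delta(x)=[0,1]$ for $x\in[0,1/4)\cup(3/4,1]$ and $\Delta(x)=\{x\}$ for $x\in[1/4,3/4]$, and the target is $[1/4,b)\cup(b,3/4]$ with $b$ uniform in $[1/4,3/4]$. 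Then the immovable positives have mass $1/2$, so any hypothesis whose positive region has mass at most $1/4$ already errs at rate at least $1/4$; and any hypothesis with positive mass above $1/4$ contains a false positive with probability at least $1/2$ over the choice of $b$, in which case the movable negatives of mass $1/2$ all improve onto it under adversarial tie-breaking, again forcing error at least $1/2$ — so every learner has expected error at least $1/4$. Until you carry out a construction of this kind (quantifying how much positive mass is unavoidable and why every positive commitment is exploitable against \emph{arbitrary}, not just proper, hypotheses), the ``not sufficient'' direction of the theorem remains open in your write-up.
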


\begin{proof}
    The proof is in Examples \ref{ex:example_notnecessary} and \ref{ex:example_notsufficient} below.
\end{proof}

\begin{example}[Finite VC dimension is not necessary for learnability with improvements]
    Consider any class $\mathcal{H}$ of infinite VC-dimension, and define $\Delta(x)=\mathcal{X}$ for all examples $x\in \mathcal{X}$.  We can learn this class $\mathcal{H}$ with respect to this improvement function $\Delta$ with sample complexity $M(\epsilon,\delta)=\frac{1}{\epsilon}\ln(\frac{1}{\delta})$ for any data distribution $\mathcal{D}$ as follows.  First, draw a sample $S$ of size $M(\epsilon,\delta)$. Next, if all examples in $S$ are negative, then output the ``all-negative'' classifier; otherwise, select any positive example $x^* \in S$ and output the classifier $h(x)=\mathbb{I}[x=x^*]$. Note that in the latter case, the hypothesis $h$ has error zero, because all agents will improve to $x^*$.  Therefore, if $\Pr_{x\sim \mathcal{D}}[f^*(x)=1] > \epsilon$, then $h$ will have zero error with probability at least $1-\delta$, whereas if $\Pr_{x\sim \mathcal{D}}[f^*(x)=1] \leq \epsilon$, then $h$ will have error at most $\epsilon$ with probability $1$.
    
    \label{ex:example_notnecessary}
\end{example}

\begin{example}[Finite VC dimension is not sufficient for learnability with improvements]
    Let the instance space $\mathcal{X}$ be $[0,1]$, let ${\mathcal H}=\{h_{abcd} : h_{abcd}(x)=1 \mbox{ iff }x\in [a,b)\cup(c,d]\}$ (i.e., $\mathcal{H}$ is the class of unions of two intervals, where to make the example easier, we define the intervals to be half-open), and let $\mathcal{D}$ be the uniform distribution over $[0,1]$. We define $\Delta$ as follows.  For $x\in [0,1/4) \cup (3/4,1]$ let $\Delta(x)=[0,1]$; for $x\in [1/4,3/4]$, let $\Delta(x)=\{x\}$. 

    We claim that no algorithm with finite training data can guarantee an expected error of less than $1/4$, even though the class is easily PAC-learnable without improvements.

    Consider a target function defined as the union of two intervals $[1/4, b) \cup (b, 3/4]$ where the number $b$ was randomly chosen in $[1/4, 3/4]$.  With probability $1$, the learner will not see the point $b$ in its training data, so it learns nothing from its training data about the location of $b$.  Finally, if the learner outputs a classifier whose positive region has probability mass $\leq 1/4$, then its error rate is at least $1/4$ because the positive examples cannot move so at least half of their probability mass will get misclassified.  On the other hand, if the learner outputs a classifier whose positive region has probability mass greater than $1/4$, then it has at least a $50 \%$ chance of including a negative point in its positive region (it will surely include a negative point if it is not contained in $[1/4, 3/4]$ and has at least a 50\% chance of doing so otherwise, since $b$ was uniformly chosen from $[1/4,3/4]$).  If the classifier has a negative point in its positive region, then it will have an error rate at least $50\%$, because all the negatives in $[0, 1/4)$ and $(3/4, 1]$ will move to a false positive (here we use that agents break ties adversarially).  So, either way, its expected error is $25\%$.
\label{ex:example_notsufficient}
\end{example}

Union of two intervals is arguably the simplest class that is not intersection-closed (Definition \ref{def:intersection-closed}). Indeed, we show in Section~\ref{sec:intersection-closed} that such an example could not be possible for intersection-closed classes. 

As another example of the separation of our model from the standard PAC setting, we show it is possible that for some target function $f^*$ the learner using a hypothesis space $\tilde{\mathcal{H}}$ achieves no error in the standard PAC setting, but it is impossible to avoid a constant error rate when learning the same target with improvements using the same hypothesis space $\tilde{\mathcal{H}}$.

\begin{theorem}
    Consider a hypothesis class $\tilde{\mathcal{H}}$ and target function $f^* \not\in \tilde{\mathcal{H}}$. Let $d(f^*,\tilde{\mathcal{H}})$ denote the error of the best hypothesis in $\tilde{\mathcal{H}}$ w.r.t.\ the target $f^*$.  It is possible to have $d(f^*,\tilde{\mathcal{H}})=0$ in the standard PAC setting (there exists $h\in \tilde{\mathcal{H}}$ that achieves error $0$) but $d(f^*,\tilde{\mathcal{H}}) = 1/2$ for PAC learning with improvements (i.e., all  $h\in \tilde{\mathcal{H}}$ have error at least $1/2$).
\end{theorem}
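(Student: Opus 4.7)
The plan is to exhibit a small but telling example in which the ``one point of disagreement'' between $f^*$ and each candidate hypothesis is invisible to the classical $0/1$ loss but irresistibly attractive to an improving agent. Take $\mathcal{X}=[0,1]$, let $\mathcal{D}$ be uniform on $[0,1]$, let $\mathcal{H}=\{h_\theta:\theta\in[0,1]\}$ with $h_\theta(x)=\mathbb{I}[x=\theta]$, and let $f^*\equiv 0$. Define the improvement function $\Delta(x)=[0,1]$ for $x\in[0,1/2]$ and $\Delta(x)=\{x\}$ for $x\in(1/2,1]$. Because every $h_\theta$ takes the value $1$ at $\theta$, the identically-zero $f^*$ does not lie in $\mathcal{H}$.

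First I would verify the standard PAC half: for any $\theta$ the disagreement set $\{x:h_\theta(x)\neq f^*(x)\}=\{\theta\}$ is $\mathcal{D}$-null, so every $h_\theta$ has classical error $0$ and $d(f^*,\mathcal{H})=0$ in the standard PAC sense.

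Next I would compute the improvement loss $\textsc{Loss}_{\mathcal{D}}(h_\theta,f^*)$ for an arbitrary $\theta$. For $x\in[0,1/2]$ with $x\neq\theta$, we have $h_\theta(x)=0$ and the set $\{x'\in\Delta(x):h_\theta(x')=1\}=\{\theta\}$ is nonempty, so $\Delta_{h_\theta}(x)=\{\theta\}$ and the loss equals $\mathbb{I}[h_\theta(\theta)\neq f^*(\theta)]=1$; the case $x=\theta$ also yields loss $1$ since the agent stays put at a true-negative point it is told is positive. For $x\in(1/2,1]$ with $x\neq\theta$, the improvement set $\Delta(x)=\{x\}$ contains no $h_\theta$-positive point, so the agent stays and incurs loss $0$. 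Integrating, $\textsc{Loss}_{\mathcal{D}}(h_\theta,f^*)=\Pr_{x\sim\mathcal{D}}[x\in[0,1/2]]=1/2$, and since this value is attained by every $h_\theta$ we conclude $d(f^*,\mathcal{H})=1/2$ in the improvement setting.

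The only subtlety is calibrating $\Delta$: setting $\Delta(x)=[0,1]$ for all $x$ would inflate the improvement loss to $1$ rather than $1/2$, whereas a too-restrictive $\Delta$ would prevent agents from being lured into the trap and the loss would collapse to $0$; allowing exactly a $1/2$-mass subset of the population to reach the singleton $\{\theta\}$ pins the value at $1/2$. Conceptually, the construction crystallizes the phenomenon flagged in Remark~\ref{rmk:misleading-improvement}: a measure-zero discrepancy between $h$ and $f^*$ is ignored by the classical loss but can be exactly where agents improve to, so the improvement loss can be driven far from the classical loss even for otherwise ``perfect'' approximators.
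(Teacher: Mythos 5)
Your construction is correct: every $h_\theta\in\mathcal{H}$ disagrees with $f^*\equiv 0$ only on the $\mathcal{D}$-null set $\{\theta\}$, so its standard error is $0$, while under the improvement dynamics every agent in $[0,1/2]$ has $\{x'\in\Delta(x):h_\theta(x')=1\}=\{\theta\}$, reacts by moving to $\theta$, and incurs loss $\mathbb{I}[h_\theta(\theta)\neq f^*(\theta)]=1$, whereas agents in $(1/2,1]$ cannot move and incur loss $0$; hence $\textsc{Loss}_{\mathcal{D}}(h_\theta,f^*)=1/2$ for every $\theta$, exactly as the theorem requires. The paper proves the statement by a genuinely different example (Example 4 in Appendix~\ref{app:example-error-gap}): there $\mathcal{X}=[-1,1]$, $\mathcal{H}$ is the class of unions of up to $k$ open intervals, the improvement set of every point is $\mathcal{X}\cap\mathbb{Q}$, and $f^*$ is negative on $[-1,0)$ and on the rationals and positive elsewhere. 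The measure-zero ``trap'' is then the set of rationals: by density, any $h\in\mathcal{H}$ whose positive region is a nonempty union of open intervals must contain a rational false positive, to which the entire negative half $[-1,0)$ (mass $1/2$) improves, while an all-negative $h$ misclassifies the positive half; meanwhile $\mathbb{I}\{x\in(0,1)\}$ has standard error $0$. Your route replaces the density argument with singleton-positive classifiers, which makes the analysis uniform over $\mathcal{H}$ (every hypothesis has improvement loss exactly $1/2$) and is more elementary, though your improvement function must be tuned region-by-region as you note; the paper's route buys a more natural geometric hypothesis class and an improvement set that is the same for all agents, at the cost of the density/measure-zero case analysis. Both exploit the identical phenomenon you highlight via Remark~\ref{rmk:misleading-improvement}: a $\mathcal{D}$-null disagreement region is invisible to the classical loss but becomes precisely the destination of improving agents.
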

\begin{proof}
    See Example~\ref{ex:standard_improve} in Appendix~\ref{app:example-error-gap}. 
\end{proof}

\subsection{Comparison with the  PAC model for strategic classification}\label{sec:comparison-strategic}

We first observe that the  strategic classification loss can be obtained by a subtle modification to our loss function (\ref{def:loss-function}),

\begin{align*}\label{def:loss-strategic}
    \textsc{Loss}^{\textsc{Str}}(x; h, f^*)=\max_{x'\in \Delta_{h}(x)} \mathbb{I}\left[h\left(x'\right)\neq f^*(x)\right]. 
\end{align*}

\noindent Intuitively,  for a negative point with $f^*(x)=0$, $\Delta_h(x)$ here denotes the set of points that the agent $x$ can ``pretend'' to be in order to potentially deceive the classifier $h$ into incorrectly classifying the agent positive. Since the movement within $\Delta_h(x)$ is viewed as a manipulation by the agent $x$, the prediction on the strategically perturbed point is compared with the original label of $x$, i.e.\ $f^*(x)$.

Prior work has shown that learnability in the strategic classification setting is captured by the {\it strategic VC dimension} (SVC) introduced by Sundaram et al.~\citep{strategicPAC}. We state below the   definition of SVC, adapted to our improvement-set based setting above which is a special case of the general cost functions studied under the strategic classification setting.

\begin{definition}[Strategic VC dimension, \cite{strategicPAC}]\label{def:strategic-vc}
    Define the $n$-th shattering coefficient of a strategic classification problem as
\[\sigma_n(\mathcal{H},\Delta)=\max_{(x_1,\dots,x_n)\in\mathcal{X}^n}|\{(h(x_1'),\dots,h(x_n')):h\in\mathcal{H},x_i'\in\Delta_h(x_i)\}|.\]
\noindent Then SVC$(\mathcal{H},\Delta)=\sup\{n\ge 0:\sigma_n(\mathcal{H},\Delta)=2^n\}$.
\end{definition}

\noindent A natural question to ask is whether learning with improvements is ``easier'' than strategic classification. That is, if a concept space $\mathcal{H}$ is learnable w.r.t.\ $\Delta$ and $\mathcal{D}$ in the strategic classification setting, then is it also learnable with improvements? Interestingly, we answer this question in the negative. More precisely, we show that finite SVC (which is known from prior work to be a sufficient condition for  strategic PAC learning) is actually not a  sufficient condition for PAC learnability with improvements.

\begin{theorem}\label{thm:strategic-easier}
    Finite strategic VC dimension \cite{strategicPAC} does not necessarily imply PAC learnability with improvements. 
\end{theorem}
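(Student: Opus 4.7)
The plan is to reuse the construction from Example 2 of the excerpt and show that, while it is not PAC-learnable with improvements (as already established there), it does have finite strategic VC dimension. That is, take $\mathcal{X}=[0,1]$, let $\mathcal{H}$ be the class of unions of two half-open intervals, and let $\Delta(x)=[0,1]$ for $x\in[0,1/4)\cup(3/4,1]$ while $\Delta(x)=\{x\}$ for $x\in[1/4,3/4]$. Example 2 already supplies the negative direction (no algorithm achieves expected error below $1/4$ under this $\Delta$), so the only thing left is to upper-bound $\operatorname{SVC}(\mathcal{H},\Delta)$.

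To bound SVC, fix any points $x_1,\dots,x_n\in\mathcal{X}$ and split the indices into ``inside'' indices $I=\{i:x_i\in[1/4,3/4]\}$ and ``outside'' indices $O=\{i:x_i\in[0,1/4)\cup(3/4,1]\}$. For any $h\in\mathcal{H}$, I would observe the following two structural facts about the reaction set $\Delta_h$. First, for $i\in I$ we have $\Delta_h(x_i)=\{x_i\}$, so the strategic label $h(x_i')$ is forced to equal $h(x_i)$. Second, for $i\in O$, either $h\equiv 0$ (in which case $h(x_i')=0$), or $h$ has some positive region and the agent can always reach it, making $h(x_i')=1$. Consequently every strategic labeling $(h(x_1'),\dots,h(x_n'))$ is either the all-zero vector or a vector with $1$s on all outside coordinates and restriction to $I$ belonging to $\mathcal{H}|_{(x_i)_{i\in I}}$.

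Since unions of two half-open intervals have (standard) VC dimension at most a small constant, Sauer--Shelah gives $|\mathcal{H}|_{(x_i)_{i\in I}}|=O(n^{4})$. Adding the all-zero labeling, we obtain $\sigma_n(\mathcal{H},\Delta)=O(n^{4})$, which is strictly less than $2^{n}$ for $n$ large enough. Hence $\operatorname{SVC}(\mathcal{H},\Delta)$ is finite (in fact bounded by a small absolute constant), completing the separation between finite SVC and PAC-learnability with improvements.

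The main obstacle is the case analysis on outside points: one must argue cleanly that adversarial tie-breaking together with $\Delta(x)=[0,1]$ collapses their strategic label to $1$ whenever $h$ is not identically zero. Once that collapse is established, the bound on the shattering coefficient follows by reducing to the non-strategic VC dimension of $\mathcal{H}$ restricted to the middle interval, and everything else is bookkeeping.
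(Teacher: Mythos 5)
Your proof is correct, but it takes a different route from the paper's. The paper does not reuse Example~2: it builds a new instance with the same concept class (unions of two half-open intervals) but a ball-based improvement function ($\Delta(x)=(x-1/4,x+1/4)\cap[0,1]$ for $x\in[0,3/4)$, singletons on $[3/4,1]$), proves a fresh non-learnability bound (expected error at least $1/16$, via positives near $[3/4,7/8]$ that cannot move and a randomly placed negative point $b$), and certifies finite SVC by exhibiting a concrete witness: for the five points $(0,1/4,1/2,3/4,1)$ the strategic labeling $(1,0,1,0,1)$ is unachievable, so $\mathrm{SVC}\le 4$. You instead recycle Example~2 wholesale for the non-learnability half (legitimate, since that lower bound of $1/4$ is established independently of this theorem) and argue finite SVC structurally: points in $[1/4,3/4]$ have frozen reaction sets so their strategic label equals $h(x_i)$, while points with $\Delta(x)=[0,1]$ all receive label $1$ unless $h$ has empty positive region, so every achievable strategic labeling is either all-zero or constant-$1$ outside with an $\mathcal{H}$-realizable pattern inside; Sauer--Shelah then bounds $\sigma_n$ by $1+O(n^4)<2^n$ for $n$ beyond a small constant. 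Your case analysis on the outside points is sound: whether $h(x_i)=1$ (reaction set is $\{x_i\}$) or $h(x_i)=0$ with a reachable positive point (every element of $\Delta_h(x_i)$ is $h$-positive), the strategic label is forced to $1$, and it is $0$ only when $h$ is identically negative. What each approach buys: yours is more economical and shows the entire strategic growth function is polynomial, making finiteness of SVC immediate and robust to the exact constant; the paper's is self-contained, uses a perhaps more natural local ($r$-ball style) improvement set, and pins down a small explicit bound $\mathrm{SVC}\le 4$ with a direct witness rather than a counting argument.
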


\begin{proof}
Let the instance space $\mathcal{X}$ be $[0,1]$, let ${\mathcal H}=\{h_{abcd} : h_{abcd}(x)=1 \mbox{ iff }x\in [a,b)\cup(c,d]\}$, and let $\mathcal{D}$ be the uniform distribution over $[0,1]$. We define $\Delta$ as follows.  For $x\in [0,3/4)$ let $\Delta(x)=\mathcal{B}(x,1/4)=(x-1/4,x+1/4)\cap [0,1]$; for $x\in [3/4,1]$, let $\Delta(x)=\{x\}$. 

We claim that no algorithm with finite training data can guarantee an expected error of less than $1/16$ for the above  when learning with improvements, even though the class is  PAC-learnable in the strategic classification setting. To see the latter, note that SVC$(\mathcal{H},\Delta)\le 4$. Indeed, consider the points $(0,1/4,1/2,3/4,1)\in\mathcal{X}^5$. Notice the (strategic) labeling $(1,0,1,0,1)$ cannot be achieved for any $h\in\mathcal{H}$, which establishes the claim.

Now consider a target function defined as the union of two intervals $[1/2, b) \cup (b, 1]$ where the number $b$ was randomly chosen in $[3/4, 1]$.  The learner will not see the point $b$ given a finite training set, so it learns nothing about the location of $b$ (almost surely).  Now, we consider two cases. Either, the learner outputs a classifier whose positive region has probability mass at most $1/16$ over the interval $[3/4, 7/8]$. Then its error rate is at least $1/16$ because the positive examples in $[3/4,7/8]$ cannot move so at least half of their probability mass will get misclassified.  On the other hand, if the learner outputs a classifier whose positive region has probability mass greater than $1/16$ on the interval $[3/4,7/8]$, then it has at least a $50 \%$ chance of including the negative point $b$ in its positive region (over the random choice of the target function).  If the classifier has a negative point in $[3/4,7/8]$ that is incorrectly predicted to be positive, then it will have an error rate at least $1/16$, because all the positives in $[5/8, 3/4)$ will move to a false positive (here we use that agents break ties adversarially, see also Remark \ref{rmk:misleading-improvement}).  So, either way, its expected error is  $\ge 1/16$.
\end{proof}

\noindent On the other hand, it is not too hard to come up with examples where it is easier to learn in the improvements setting when compared to the strategic setting. Example~\ref{ex:improveasier} shows that it is possible to learn perfectly with improvements (with zero error) in a setting where avoiding a large constant error is unavoidable in the strategic classification setting.

\begin{example}[Learnability with improvements may be easier than strategic classification]
    Define $\Delta(x)=\mathcal{X}$ for all examples $x\in \mathcal{X}$. Suppose the ``all-negative'' classifier $h_-(x)=0$,  the ``all-positive'' classifier $h_+(x)=1$, and all ``singleton-positive'' classifiers $h_{x^*}(x)=\mathbb{I}[x=x^*]$ lie in the concept space $\mathcal{H}$. Select any $f^*\in\mathcal{H}$ and any data distribution $\mathcal{D}$ over $\mathcal{X}$ such that $\mathbb{P}_{x\sim \mathcal{D}}[f^*(x)=0]=\mathbb{P}_{x\sim \mathcal{D}}[f^*(x)=1]=\frac{1}{2}$. Now with $O(\log \frac{1}{\delta})$ examples, the learner sees a positive example, say $x^+$, in its training set with probability $\ge\delta$. Outputting $h_{x^+}$ achieves zero-error in the learning with improvements setting, as all negative points can improve to $x^+$. In contrast, a learner in the strategic classification setting must suffer an error of at least $1/2$ here. Indeed, either the learner outputs $h_-$ and suffers an error of $1/2$ on the positive points. Or, the learner selects an $h$ that labels at least one point as positive and incurs an error $1/2$ on the negative points, all of which  successfully deceive the learner.
\label{ex:improveasier}
\end{example}

\noindent Furthermore, let's consider an improvement function $\Delta$ that takes into account $f^*$, such that $f^*(x')=1$ for $x'\in\Delta(x)$ for any $x\in\mathcal{X}$. That is the improvement function is in a certain sense consistent with $f^*$, guaranteeing positive classification after any move. In this setting, any classifier $h$ will have lower error in the improvements setting compared to strategic classification. This is because a negative point that moves and becomes positive is an error in  strategic learning but the point would have genuinely improved in this case.
Contrasting this with Remark \ref{rmk:misleading-improvement}, when $\Delta$ does not satisfy the above property, we note that the reason it is possible to do worse in the improvement setting (e.g.\ Theorem \ref{thm:strategic-easier}) is because some true positive examples can potentially become negative when moving in response to a false positive for the learner's hypothesis $h$.

\section{PAC Learning of Geometric Concepts}\label{sec:geometric-concepts}
In this section, we first demonstrate the gain of the learner when agents can improve for the natural class of thresholds on the real line, where agents can move by a distance of at most $r$. We then study intersection-closed classes. In particular, we derive sample complexity bounds for the class of axis-aligned hyperrectangles, where the improvement sets are the $\ell_\infty$ balls. We further establish negative results for proper learners in the absence of the intersection-closed property. Lastly, we study the class of homogeneous halfspaces under the uniform distribution over the unit ball, where agents can improve by adjusting their angle. Complete proofs for this section are located in Appendix~\ref{app:proofs-geometric}.\looseness-1

We will use $(a)_+$ to denote $\max\{a,0\}$.

\subsection{Warm-up: Zero  Error for Learning Thresholds}\label{subsec:thresholds}
Let \(\mathcal{H} = \{ h_t : t \in \mathbb{R} \}\) be the class of one-sided threshold functions, where $h_t(x) = \mathbb{I}\{x \geq t\}$.

The improvement set of $x$ is simply the closed ball centered at $x$ with radius $r$, i.e., $\Delta(x) =\{ x' \mid \lvert x - x'\rvert \leq r\} $.
Suppose the data distribution $\mathcal{D}$ is uniform over $[0,1]$, and labels are generated according to a target threshold $h_{t^*} \in \mathcal{H}$ for some $t^* \in [0,1]$. Let $S = \{(x_i, y_i)\}_{i=1}^m$ be the set of training samples, where $x_i \overset{\text{i.i.d.}}{\sim} \mathcal{D}$ and $y_i = h_{t^*}(x_i)$. 

There are several options for choosing a threshold that achieves zero empirical error on $S$, as shown by the shaded area in Figure~\ref{fig:threshold}. Due to the asymmetry of the loss function (Eqn.~\ref{def:loss-function}), we choose the rightmost threshold consistent with $S$. This is the most ``conservative'' option, as any $x$ that improves up to this threshold is guaranteed to be positive with respect to the unknown ground-truth $h_{t^*}$.  This is a property that would not necessarily hold for lower thresholds. We define this threshold with respect to $S$ as follows,

\[
t_{S^+} = 
\begin{cases} 
\min(S^+), & \text{if } S^+ \neq \emptyset, \\
1, & \text{if } S^+ = \emptyset,
\end{cases}
\]
where \(S^+ = \{x_i \in S : y_i = 1\}\) is the set of positive examples in \(S\).
The hypothesis \(h_{S^+}\) is defined as $h_{S^+} = \mathbb{I}\{x \geq t_{S^+}\}$.

Notice that using classifier $h_{S^+}$ will induce agents (at test time) $x\in [t_{S^+}-r,t_{S^+})$ 
to improve to be classified as positive by $h_{S^+}$, which will be a correct classification since $t_{S^+} \geq t^*$. 

\begin{figure}[!t]
    \centering
    \begin{tikzpicture}
        \node (m1) {\Large \bf \strut --};
        \node[right=1mm of m1] (m2) {\Large \bf \strut--};
        \node[right=1mm of m2] (m3) {\Large \bf \strut--};
        \node[right=1mm of m3] (m4) {\Large \bf \strut--};

        \node[rectangle, fill=purple!10, thick, minimum width=0.75cm, minimum height=0.5cm, right=-1.0mm of m4] (r1) {};
        \node[rectangle, fill=blue!10, thick, minimum width=0.75cm, minimum height=0.5cm, right=0mm of r1] (r2) {};
        \node[rectangle, fill=green!10, thick, minimum width=0.75cm, minimum height=0.5cm, right=0mm of r2] (r3) {};

        \coordinate (m1) at ($(r1)!0.5!(r2)$);
        \draw[thick] (m1) -- ++(0,-0.26) -- ++(0,0.52);
        \coordinate (m2) at ($(r2)!0.5!(r3)$);
        \draw[dashed] (m2) -- ++(0,-0.26) -- ++(0,0.52);
        \coordinate (m3) at (r3.east);
        \draw[thick] (m3) -- ++(0,-0.26) -- ++(0,0.52);

        \draw[<->, thick] ([yshift=3mm]r1.west) -- ([yshift=3mm]r1.east) node[midway, above] {$r$}; %
        \draw[<->, thick] ([yshift=3mm]r3.west) -- ([yshift=3mm]r3.east) node[midway, above] {$r$};

        \coordinate (ht) at ($(r1.south)!0.5!(r2.south)$);
        \node[below=0.0mm of ht] {$h_{t^*}$};

        \coordinate (hs) at ($(r2.south)!0.5!(r3.south)$);
        \node[below=0.0mm of m3] at (m3 |- ht) {$h_{S^+}$};

        \node[right=-1.0mm of r3] (p1) {\Large \bf \strut +};
        \node[right=1mm of p1] (p2) {\Large \bf \strut+};
        \node[right=1mm of p2] (p3) {\Large \bf \strut+};
        \node[right=1mm of p3] (p4) {\Large \bf \strut+};
    \end{tikzpicture}
    \caption{Learning thresholds with improvements.
\label{fig:threshold}}
\end{figure}
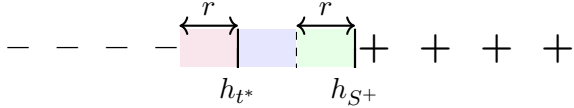
\begin{theorem}[Thresholds, uniform distribution]
    Let the improvement set $\Delta$ be the closed ball with radius $r$, 
    $\Delta(x) =\{ x'\mid \lvert x - x' \rvert \leq r\} $.
    Let $\mathcal{D}$ be the uniform distribution on $[0,1]$. For any \(\epsilon,\delta \in (0,1/2)\), with probability $1-\delta$,
\begin{align*}
   \textsc{Loss}_{\mathcal{D}}(h_{S^+},h_{t^*})
   \leq (\epsilon -r)_+,
\end{align*}
with sample complexity $M = O\left(\frac{1}{\epsilon} \log \frac{1}{\delta}\right).$
\label{thm:thresholds-uniform}
\end{theorem}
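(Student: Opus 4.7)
The plan is to (i) characterize exactly which test points contribute to the loss of $h_{S^+}$, (ii) reduce the problem to a standard one-dimensional threshold estimation question, and (iii) apply a routine coupon-collector / empirical-coverage argument on the interval $[t^*, t^*+\epsilon)$.

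First I would observe the key structural fact that $t_{S^+} \geq t^*$ always. Indeed, every positive sample $x_i \in S^+$ satisfies $y_i = h_{t^*}(x_i) = 1$, hence $x_i \geq t^*$, so taking the min (or $1$, when $S^+ = \emptyset$) preserves $t_{S^+} \geq t^*$. Next I would unpack $\textsc{Loss}(x; h_{S^+}, h_{t^*})$ by cases using the definition of $\Delta_{h_{S^+}}(x)$:
\begin{itemize}
\item If $x \geq t_{S^+}$: then $\Delta_{h_{S^+}}(x)=\{x\}$, $h_{S^+}(x)=1$, and since $x \geq t_{S^+} \geq t^*$ we have $h_{t^*}(x)=1$. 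No contribution.
\item If $x < t_{S^+}$ and $x + r \geq t_{S^+}$: the reaction set is $[\max(t_{S^+}, x-r), x+r]$, on which $h_{S^+}\equiv 1$; since every point in it is $\geq t_{S^+} \geq t^*$, $h_{t^*}\equiv 1$ too. No contribution.
\item If $x + r < t_{S^+}$: reaction set is $\{x\}$, $h_{S^+}(x)=0$; this is an error precisely when $h_{t^*}(x)=1$, i.e., $x \geq t^*$.
\end{itemize}
Hence the error region is exactly $[t^*, t_{S^+}-r) \cap [0,1]$ (empty if $t_{S^+} \leq t^*+r$), which has $\mathcal{D}$-mass $(t_{S^+} - t^* - r)_+$.

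Next I would bound $t_{S^+}-t^*$. If $1-t^* < \epsilon$, then $t_{S^+}\leq 1$ gives $t_{S^+}-t^* < \epsilon$ deterministically. Otherwise the interval $I = [t^*, t^* + \epsilon)$ has $\mathcal{D}$-mass exactly $\epsilon$, and the event $\{t_{S^+}-t^* \geq \epsilon\}$ coincides with the event that no sample of $S$ lies in $I$. By independence this has probability $(1-\epsilon)^M \leq e^{-\epsilon M}$, which is at most $\delta$ for $M \geq \tfrac{1}{\epsilon}\ln\tfrac{1}{\delta}$.

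Combining the two steps, with probability at least $1-\delta$ we have $t_{S^+} - t^* < \epsilon$, hence
\[
\textsc{Loss}_{\mathcal{D}}(h_{S^+}, h_{t^*}) = (t_{S^+}-t^*-r)_+ \leq (\epsilon-r)_+,
\]
where the last inequality splits into the case $\epsilon \geq r$ (both sides equal $\epsilon-r$ or smaller) and $\epsilon < r$ (right side is $0$, and the left side is $0$ since $t_{S^+}-t^* < \epsilon < r$). There is no serious obstacle here; the only subtlety is the case analysis showing that the loss is truly $(t_{S^+}-t^*-r)_+$ rather than $t_{S^+}-t^*$, which is what yields the $r$-improvement over the standard PAC rate and is the point of the theorem.
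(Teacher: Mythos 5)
Your proposal is correct and follows essentially the same route as the paper's proof: bound $t_{S^+}-t^*$ by $\epsilon$ via the standard one-dimensional argument (the sample misses $[t^*,t^*+\epsilon)$ with probability at most $(1-\epsilon)^M\le\delta$), then use $t_{S^+}\ge t^*$ to show the only errors lie in $[t^*,t_{S^+}-r]$ since points within $r$ of $t_{S^+}$ improve to true positives. Your explicit case analysis of $\Delta_{h_{S^+}}(x)$ just makes precise what the paper states verbally, so there is nothing substantively different to flag.
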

\begin{proof}
    See Appendix~\ref{subsec:threshold-uniform-D}.
\end{proof}

\noindent Note that the population error is improved from $\epsilon$ (in the standard PAC setting) to $\epsilon-r$ for the same sample size, and we can achieve zero error as long as we set $\epsilon\leq r$.\\
In Appendix~\ref{subsec:threshold-arbitrary-D}, we prove a similar result for arbitrary distribution $\mathcal{D}$, where instead of getting $\epsilon-r$ population error, the reduction in the error  
is replaced by the following distribution-dependent quantity
    \begin{align}\label{eq:IR-threshold}
        p(h_{S^+};h_{t^*},\mathcal{D},r)= \mathbb{P}_{x\sim \mathcal{D}}\left[x\in[t_{S^+}-r,t_{S^+}]\right].
    \end{align}

\noindent Note that the class of thresholds is closed under intersection: 
$
\bigcap_{i=1}^n h_{t_i} = h_{\max\{t_1, t_2, \dots, t_n\}}.
$
In the following, we extend the analysis to such hypothesis classes, more generally.
\subsection{Intersection Closed Classes}\label{sec:intersection-closed}

The learnability of intersection-closed hypothesis classes in the standard PAC model has been extensively studied \cite{Helmbold1990,auer1997learning,auer1998line,auer2007new,darnstadt2015optimal}. In this section, we study  the learnability with improvements of these classes. 
We start with the following definitions. 

\begin{definition}[Closure operator of a set]
    For any set $S\subseteq \mathcal{X}$ and any hypothesis class $\mathcal{H}\subseteq 2^\mathcal{X}$, the \emph{closure of $S$ with respect to $\mathcal{H}$}, denoted by $\clos_\mathcal{H}(S):2^\mathcal{X}\rightarrow 2^\mathcal{X}$, is defined as the intersection of all hypotheses in $\mathcal{H}$ that contain $S$, that is, $\clos_{\mathcal{H}}(S)=\underset {h\in {\mathcal{H}}, S\subseteq h}{\bigcap} h$. In words, the closure of $S$ is the smallest hypotheses in ${\mathcal{H}}$ which contains $S$. If $\lrset{h:{\mathcal{H}}: S\subseteq h}=\emptyset$, then $\clos_{\mathcal{H}}(S)=\mathcal{X}$.
    
\end{definition}

\begin{definition}[Intersection-closed classes]

    A hypothesis class $\mathcal{H}\subset 2^\mathcal{X}$ is \emph{intersection-closed} if for all finite $S\subseteq \mathcal{X}$, $\clos_{\mathcal{H}}(S)\in \mathcal{H}$. In words, the intersection of all hypotheses in $\mathcal{H}$ containing an arbitrary subset of the domain belongs to $\mathcal{H}$. For finite hypothesis classes, an equivalent definition states that for any $h_1,h_2\in \mathcal{H}$, the intersection $h_1\cap h_2$ is in $\mathcal{H}$ as well \cite{natarajan1987learning}.
    \label{def:intersection-closed}
\end{definition}

\noindent Many natural hypothesis classes are intersection-closed, for example, axis-parallel $d$-dimensional hyperrectangles, intersections of halfspaces, $k$-CNF boolean functions, and subspaces of a linear space.

The \textit{Closure algorithm} is a learning algorithm that generates a hypothesis by taking the closure of the positive examples in a given dataset, and negative examples do not influence the generated hypothesis. The hypothesis returned by this algorithm is always the smallest hypothesis containing all of the positive examples seen so far in the training set.
\begin{definition}[Closure algorithm \citep{natarajan1987learning,Helmbold1990}]
    Let $f^*\in {\mathcal{H}}$ and let $S=\{(x_1,y_1=f^*(x_1)), $ $\ldots,  (x_m, y_m=f^*(x_m))\}$ be a set of labeled examples, where  each  \(x_i \in \mathcal{X}\). The hypothesis \(h^c_S\) produced by the closure algorithm is defined as:
    \[
    h^c_S(x) =
    \begin{cases}
    1, & \text{if } x \in \clos_{\mathcal{H}}\left(\{x_i \in S : y_i = 1\}\right), \\
    0, & \text{otherwise}.
    \end{cases}
    \]
    Here, $\clos_{\mathcal{H}}\left(\{x_i \in S : y_i = 1\}\right)$ denotes the closure of the set of positive examples in $S$ with respect to ${\mathcal{H}}$.\label{def:closure-algorithm}
\end{definition}

\noindent The closure algorithm learns intersection-closed classes with VC dimension $d$ with an optimal sample complexity of $\Theta\left(\frac{1}{\epsilon}(d+\log \frac{1}{\delta})\right)$ \citep{auer2004new,darnstadt2015optimal}.

We apply the closure algorithm for learning with improvements. In order to quantify the improvement gain of the returned hypothesis, we define the \emph{improvement region} of $h$ as the set of points that can improve from a negative classification to a (correct) positive classification by 
$h$.

\begin{definition}[Improvement Region] The improvement region of hypothesis $h\subseteq f^*$, w.r.t. $f^*$ and $\Delta$ is
\begin{align}\label{def:IR}
\begin{split}
    \ir(h;f^*,\Delta)
    \coloneq
    \lrset{x:h(x)=0, \exists x'\in\Delta(x):h(x')=f^*(x')=1}.
\end{split}
\end{align}

The gain from improvements is the probability mass of the improvement region under $\mathcal{D}$: \\ $\mathbb{P}_{x \sim \mathcal{D}}\left[x \in \ir(h;f^*,\Delta) \right].$

\end{definition}
\noindent Note that for the class of thresholds, the closure algorithm returns exactly the hypothesis $h_{S^+}$, 
and the probability mass of the improvement region is $p(h_{S^+};h_{t^*},\mathcal{D},r)$ (cf.\ Eqn.~\ref{eq:IR-threshold}).

\paragraph{Axis-Aligned  Hyperrectangles in $[0,1]^d$.}
An axis-aligned hyperrectangle classifier assigns a value of 1 to a point if and only if the point lies within a specific rectangle. 
Formally, let $a=(a_1,\ldots,a_d),b=(b_1,\ldots,b_d)\in[0,1]^d$ where $a_i\leq b_i$ for $i\in\{1,\ldots,d\}\coloneq[d]$. 
A hyperrectangle 
$R_{(a,b)}=\prod_{i\in [d]}[a_i,b_i]$
classifies a point $x=(x_1,\ldots,x_d)$ as: 
$ R_{(a,b)}(x)=\mathbb{I}\{x_i
\in [a_i,b_i],\; \forall i \in [d]\}$.

In the following, we show that the closure algorithm learns with improvements the hypothesis class $\mathcal{H}_{\text{rec}}=\{R_{(a,b)}: a,b\in[0,1]^d\}.$

\begin{theorem}[Axis-aligned Hyperrectangles]\label{thm:rectangles}
    Let the improvement set $\Delta$ be the closed $\ell_\infty$ ball with radius $r$, $\Delta(x) =\{ x'\mid \norm{x - x'}_\infty \leq r\} $.
    Let $R_S^c$ be the rectangle returned by the closure algorithm given $S\overset{\text{i.i.d.}}{\sim} \mathcal{D}^m$, and $R^*$ be the target rectangle. For
    any distribution $\mathcal{D}$ 
    ,
    for any \(\epsilon,\delta \in (0,1/2)\), with probability $1-\delta$,
\begin{align*}
   \textsc{Loss}_{\mathcal{D}}(R_S^c,R^*)
   \leq \left(\epsilon - \mathbb{P}_{x \sim \mathcal{D}}\left[x \in \ir(R_S^c;R^*,\Delta) \right]\right)_+,
\end{align*}
with sample complexity $M = O\left(\frac{1}{\epsilon}\left(d+ \log \frac{1}{\delta}\right)\right)$.

When $\mathcal{D}$ is the uniform distribution on $[0,1]^2$, we can get the following expression.
Denote by $l_1$ and $l_2$ the width and height (respectively) of the rectangle $R_S^c$. Then, 
\begin{align*}
\mathbb{P}_{x \sim \mathcal{D}}\left[x \in \ir(R_S^c;R^*,\Delta) \right]
=
2r(l_1+l_2)+4r^2.
\end{align*}
\end{theorem}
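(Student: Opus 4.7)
Proof plan. The plan is to combine the classical sample-complexity analysis of the closure algorithm on intersection-closed classes with a geometric loss decomposition tailored to the $\ell_\infty$ inflation of $R_S^c$. First I would run the closure algorithm to obtain $R_S^c$, the smallest axis-aligned rectangle containing all positive examples; by realizability, $R_S^c\subseteq R^*$. A case analysis on whether a test point $x$ lies in $R_S^c$, in $(R_S^c\oplus\Delta)\setminus R_S^c$, or outside $R_S^c\oplus\Delta$ (where $R_S^c\oplus\Delta$ denotes the $r$-inflation of $R_S^c$ in $\ell_\infty$) shows that $\textsc{Loss}(x;R_S^c,R^*)=\mathbb{I}\{x\in R^*\setminus(R_S^c\oplus\Delta)\}$: a point in $(R_S^c\oplus\Delta)\setminus R_S^c$ improves to some $x'\in R_S^c\subseteq R^*$, where $h$ and $f^*$ agree and so contributes zero loss, whereas a true positive of $R^*$ outside $R_S^c\oplus\Delta$ cannot reach $R_S^c$ via improvement and remains misclassified. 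Because $R_S^c\subseteq R^*$, one also gets $\ir(R_S^c;R^*,\Delta)=(R_S^c\oplus\Delta)\setminus R_S^c$.

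Next, since $\mathcal{H}_{\mathrm{rec}}$ is intersection-closed with VC dimension $2d$, I would invoke the classical sample-complexity guarantee for the closure algorithm on intersection-closed classes (e.g., Auer--Ortner~\cite{auer2007new}): with probability $\ge 1-\delta$, $\Pr[R^*\setminus R_S^c]\le\epsilon$ using $M=O\!\left(\frac{1}{\epsilon}(d+\log(1/\delta))\right)$ samples. Partitioning $R^*\setminus R_S^c$ into the piece outside $R_S^c\oplus\Delta$ (which is exactly $\textsc{Loss}_{\mathcal D}$) and the piece inside $R_S^c\oplus\Delta$ (which is exactly $R^*\cap\ir$) and rearranging gives $\textsc{Loss}_{\mathcal D}\le\epsilon-\Pr[R^*\cap\ir]$. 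To upgrade the subtracted term from $\Pr[R^*\cap\ir]$ to $\Pr[\ir]$, I would revisit the strip-based proof of the closure-algorithm bound: for each of the $2d$ faces of $R^*$ the ``peel'' strip of probability mass $\epsilon/(2d)$ receives a positive sample w.h.p., and widening each peel strip outward by the $r$-band by which $R_S^c\oplus\Delta$ can protrude beyond $R^*$ absorbs the spillover mass $\Pr[\ir\setminus R^*]$ into the same union bound; the $(\cdot)_+$ clipping handles the degenerate regime $\Pr[\ir]\ge\epsilon$.

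For the uniform-distribution formula on $[0,1]^2$, and for any $R_S^c$ of width $l_1$ and height $l_2$ with $R_S^c\oplus\Delta\subseteq[0,1]^2$, the improvement region is the rectangular ring of $\ell_\infty$-width $r$ around $R_S^c$. It decomposes into four side strips of total area $2r(l_1+l_2)$ and four corner squares of total area $4r^2$, giving
\[
\Pr_{x\sim\mathcal D}[x\in\ir(R_S^c;R^*,\Delta)] = (l_1+2r)(l_2+2r) - l_1 l_2 = 2r(l_1+l_2) + 4r^2.
\]

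The main obstacle is exactly the transition from $\Pr[R^*\cap\ir]$ to $\Pr[\ir]$ in the general-distribution bound: the spillover $\Pr[\ir\setminus R^*]$ is nonzero whenever $R_S^c$ sits close to the boundary of $R^*$, and absorbing it cleanly calls for a single union-bound strip argument that simultaneously controls the uncovered probability inside $R^*$ and the $r$-protrusion of $R_S^c\oplus\Delta$ outside it, rather than bounding the two quantities separately. The remaining steps---the loss characterization, the invocation of the standard PAC guarantee for intersection-closed classes, and the uniform-distribution area calculation---are mechanical.
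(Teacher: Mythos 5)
Your overall route is the same as the paper's: obtain $R_S^c$ by the closure algorithm so that $R_S^c\subseteq R^*$, invoke the standard $O\!\left(\frac{1}{\epsilon}(d+\log\frac{1}{\delta})\right)$ guarantee for intersection-closed classes to get $\Pr_{x\sim\mathcal D}[x\in R^*\setminus R_S^c]\le\epsilon$, observe that the only lossy points are those of $R^*$ that cannot reach $R_S^c$ (so the improvement region is exactly the outer $\ell_\infty$ ring of width $r$ around $R_S^c$), and compute the ring area $(l_1+2r)(l_2+2r)-l_1l_2$ in the uniform 2D case. All of that matches the paper's proof, and your loss characterization $\textsc{Loss}(x)=\mathbb{I}\{x\in R^*\setminus(R_S^c\oplus\Delta)\}$ is a correct and slightly more explicit version of what the paper does. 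The step you single out as the ``main obstacle''---passing from subtracting $\Pr[R^*\cap\ir]$ to subtracting $\Pr[\ir]$---is precisely the step the paper's own proof does not elaborate: its argument, like your clean decomposition, directly yields $\textsc{Loss}_{\mathcal D}\le(\epsilon-\Pr[R^*\cap\ir(R_S^c;R^*,\Delta)])_+$, and the stronger subtraction is simply asserted.

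Where your proposal has a genuine gap is the sketched repair of that step. Widening the peel strips by $r$ cannot absorb the spillover $\Pr[\ir\setminus R^*]$: that region consists entirely of true negatives, so positive sample points never land in it, and no union bound over (widened) peels gives any control of its mass. In fact no refinement of the sampling argument alone can deliver the $\Pr[\ir]$ version for arbitrary $\mathcal D$: take $d=1$, $R^*=[0,10]$, $r=1$, and place mass $0.45$ on a positive point at $0.2$, mass $0.45$ on a negative point at $-0.5$, and mass $\epsilon/2$ on a positive point at $5$. With probability noticeably larger than $\delta$ the point at $5$ is unsampled, so $R_S^c=\{0.2\}$, the loss is $\epsilon/2>0$, yet $\Pr[\ir]\ge 0.45>\epsilon$, so $(\epsilon-\Pr[\ir])_+=0$; the difficulty is that the ring can protrude outside $R^*$ near one face while the residual error sits near another. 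The rigorous content of both your argument and the paper's is therefore the bound with $\Pr[R^*\cap\ir]$, which coincides with the stated bound exactly when the $r$-ring around $R_S^c$ lies inside $R^*$ (the same containment that, together with $R_S^c\oplus\Delta\subseteq[0,1]^2$, you correctly note is needed for the $2r(l_1+l_2)+4r^2$ formula). So: keep your decomposition, state the subtracted term as $\Pr[R^*\cap\ir]$ (or add the containment hypothesis), and drop the strip-widening step, which would fail as described.
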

\begin{proof}
    See Appendix~\ref{app:rectangles}.
\end{proof}

\noindent Note that, as opposed to the simple case of thresholds,  the improvement region for hyperrectangles depends on the geometry of the target hypothesis.

\paragraph{Arbitrary Intersection-closed Classes.} We will now show that any intersection-closed concept class with a finite VC dimension is PAC learnable with improvements {\it w.r.t.\ any} improvement function $\Delta$.
\begin{theorem}
    Let $\mathcal{H}$ be an intersection-closed concept class on instance space $\mathcal{X}$. There is a learner that PAC-learns with improvements $\mathcal{H}$ with respect to any improvement function $\Delta$ and any data distribution $\mathcal{D}$ given a sample of size $O\left(\frac{1}{\epsilon}(d_{\text{VC}}(\mathcal{H})+\log\frac{1}{\delta})\right)$, where $d_{\text{VC}}(\mathcal{H})$ denotes the VC-dimension of $\mathcal{H}$.

    \label{thm:intersection-closed}
\end{theorem}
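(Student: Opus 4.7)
The plan is to show that the closure algorithm $h^c_S$ from \Cref{def:closure-algorithm} already does the job, by arguing that its improvement loss is at all times upper bounded by its standard PAC error. Then the known sample complexity bound for learning intersection-closed classes via the closure algorithm \citep{auer2004new,darnstadt2015optimal} immediately yields the claim.

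The first step is the key structural observation: since $\mathcal{H}$ is intersection-closed and the target $f^*\in\mathcal{H}$ contains every positive example in $S$, the hypothesis $h^c_S=\clos_{\mathcal{H}}(\{x_i\in S : y_i=1\})$ is contained in $f^*$ (as a set of positively labeled points). In particular, $h^c_S$ makes \emph{no false positives}: for every $x$, if $h^c_S(x)=1$ then $f^*(x)=1$. This is the same property of the closure algorithm that drives the standard PAC analysis, and it is exactly what we need to prevent ``misleading improvements'' of the type described in \Cref{rmk:misleading-improvement}.

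The second step is a case analysis of the pointwise improvement loss $\textsc{Loss}(x; h^c_S, f^*)$ from \eqref{def:loss-function}. If $h^c_S(x)=1$, then $\Delta_{h^c_S}(x)=\{x\}$ and, by the no-false-positive property, $f^*(x)=1$, so the loss on $x$ is $0$. If $h^c_S(x)=0$ and there exists some $x'\in\Delta(x)$ with $h^c_S(x')=1$, then every element of $\Delta_{h^c_S}(x)$ is classified $1$ by $h^c_S$ and therefore also by $f^*$, so again the loss is $0$ (the agent genuinely improves). The only remaining case, where the loss can be $1$, requires simultaneously $h^c_S(x)=0$, $f^*(x)=1$, and no point of $\Delta(x)$ being classified positively by $h^c_S$. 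Hence
\begin{align*}
    \textsc{Loss}(x; h^c_S, f^*) \;\le\; \mathbb{I}\bigl[h^c_S(x)=0 \,\wedge\, f^*(x)=1\bigr],
\end{align*}
which is precisely the standard (non-strategic) zero-one error of $h^c_S$ against $f^*$.

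The third step is to invoke the known PAC guarantee for the closure algorithm on intersection-closed classes: with probability at least $1-\delta$ over $S\sim\mathcal{D}^m$ of size $m=O\bigl(\tfrac{1}{\epsilon}(d_{\text{VC}}(\mathcal{H})+\log\tfrac{1}{\delta})\bigr)$, the standard PAC error of $h^c_S$ is at most $\epsilon$ \citep{auer2004new,darnstadt2015optimal}. Combining this with the pointwise bound above and integrating against $\mathcal{D}$ gives $\textsc{Loss}_{\mathcal{D}}(h^c_S, f^*)\le \epsilon$ with probability at least $1-\delta$, completing the proof. I do not expect a real obstacle here; the only subtlety is to carefully handle the middle branch of $\Delta_h$ (improvement to a point where $h^c_S$ disagrees with $f^*$), which the intersection-closed structure rules out. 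The improvement region of $h^c_S$ (\Cref{def:IR}) can only shrink the loss further, so the bound is in general loose and the concrete geometric examples in \Cref{subsec:thresholds} and \Cref{thm:rectangles} exhibit strictly better behavior.
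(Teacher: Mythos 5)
Your proposal is correct and follows essentially the same route as the paper's own proof: run the closure algorithm, use that $\clos_{\mathcal{H}}(S^+)\subseteq f^*$ to rule out false positives (and hence misleading improvements), observe via the same three-way case analysis that the improvement loss is nonzero only on points with $h^c_S(x)=0$, $f^*(x)=1$ that cannot move, and bound the mass of those points by the standard PAC guarantee of \cite{auer2004new,darnstadt2015optimal}. No gaps; the pointwise bound $\textsc{Loss}(x;h^c_S,f^*)\le\mathbb{I}[h^c_S(x)=0\wedge f^*(x)=1]$ is exactly the paper's argument stated slightly more explicitly.
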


\begin{proof}
    Let $S\sim\mathcal{D}^m$ and $h^c_S$ denote the classifier learned by the closure algorithm (Definition~\ref{def:closure-algorithm}). For some $m=O(\frac{1}{\epsilon}(d_{\text{VC}}(\mathcal{H})+\log\frac{1}{\delta}))$, we know from prior work \cite{auer2004new,darnstadt2015optimal} that $h^c_S$ satisfies, with probability at least $1-\delta$,
    \[\mathbb{P}_{x\sim \mathcal{D}}[h^c_S(x)\ne f^*(x)]\le \epsilon,\]
    for any target concept $f^*\in \mathcal{H}$.

    Now for any $x\in\mathcal{X}$ if $h^c_S(x)=1$, the improvement loss $\textsc{Loss}(x; h^c_S, f^*)=\mathbb{I}[h^c_S(x)\ne f^*(x)]=0$ since $\Delta_{h^c_S}(x)=\{x\}$ and $f^*(x)=1$ since $h^c_S$ is obtained using the closure algorithm. If $h^c_S(x)=0$ and if $\Delta_{h^c_S}(x)\ne\{x\}$, for any point $x'\in\Delta_{h^c_S}(x)$, we have $h^c_S(x')=1=f^*(x')$ and therefore $\textsc{Loss}(x; h^c_S, f^*)=0$. So the only points for which $h^c_S$ can make a mistake are points where $h^c_S(x)=0$ and $\Delta_{h^c_S}(x)=\{x\}$, i.e.\ the points do not move in reaction to ${h^c_S}$. This implies $h^c_S$ must disagree with $f^*$ on these points also in the PAC setting. But the probability mass of these points is at most $\epsilon$ as noted above.
\end{proof}

\noindent We can also establish the following negative result which indicates the hardness of proper learning in the absence of the intersection-closed property.

\begin{theorem}
    Let $\mathcal{H}$ be any concept class on a finite instance space $\mathcal{X}$ 
    such that at least one point $x'\in\mathcal{X}$ is classified negative by all $h\in \mathcal{H}$ (i.e.\ $\{x\mid h(x)=0 \text{ for all } h\in \mathcal{H}\}\ne\emptyset$), and suppose $\mathcal{H}\mid_{\mathcal{X}\setminus\{x'\}}$ is not intersection-closed on $\mathcal{X}\setminus\{x'\}$. Then there exists a data distribution $\mathcal{D}$ and an improvement function $\Delta$ such that no proper learner can PAC-learn with improvements $\mathcal{H}$ w.r.t.\ $\Delta$ and $\mathcal{D}$.
    \label{thm:hardness-intersection-closed}
\end{theorem}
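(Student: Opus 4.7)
The plan is to build a two-target adversarial instance. By hypothesis (and finiteness of $\mathcal{X}$, so $\mathcal{H}|_{\mathcal{X}\setminus\{x'\}}$ is closed under intersection iff it is closed under pairwise intersection), there exist $h_1,h_2\in\mathcal{H}$ whose restrictions to $\mathcal{X}\setminus\{x'\}$ witness non-intersection-closedness: the set $P' := (h_1\cap h_2)\cap(\mathcal{X}\setminus\{x'\})$ is not the restriction to $\mathcal{X}\setminus\{x'\}$ of any hypothesis in $\mathcal{H}$. This yields the key dichotomy: for every $h\in\mathcal{H}$, either \emph{(a)} $h$ misses some $p\in P'$ (i.e.\ $h(p)=0$), or \emph{(b)} $h$ contains a ``trap'' point $q\in(\mathcal{X}\setminus\{x'\})\setminus P'$ with $h(q)=1$ (possibly both). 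The construction will arrange that case (a) forces direct loss on $P'$ and case (b) forces loss at $x'$ via improvement, and a randomized-target plus pigeonhole argument will then force failure against at least one of $h_1,h_2$.

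Concretely, I would set $\mathcal{D}(p)=\alpha$ for each $p\in P'$ and $\mathcal{D}(x')=1-\alpha|P'|$ with no other support, $\Delta(x')=(\mathcal{X}\setminus\{x'\})\setminus P'$ so that $x'$ can improve to any potential trap, and $\Delta(p)=\{p\}$ for $p\in P'$ so a negatively classified $p$ cannot be rescued by improvement. Because $P'\subseteq h_1\cap h_2$ and $h_j(x')=0$ for both $j$, the labeled-sample distribution is identical under $f^*=h_1$ and $f^*=h_2$, so any proper learner $\mathcal{A}$ produces the same (possibly randomized) output distribution under either target. For the loss analysis, in case (a) the reaction set at $p$ equals $\{p\}$, so $\textsc{Loss}(p;h,f^*)=1$ under either target and $\textsc{Loss}_\mathcal{D}(h,f^*)\ge\alpha$ for both $f^*\in\{h_1,h_2\}$. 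In case (b), since $q\notin h_1\cap h_2$ we have $q\notin h_j$ for some $j\in\{1,2\}$; under $f^*=h_j$ the point $q$ is a false positive reachable from $x'$ via $\Delta$, so $\textsc{Loss}(x';h,h_j)=1$ and $\textsc{Loss}_\mathcal{D}(h,h_j)\ge 1-\alpha|P'|$. Summing over targets gives $\textsc{Loss}_\mathcal{D}(h,h_1)+\textsc{Loss}_\mathcal{D}(h,h_2)\ge\min\bigl(2\alpha,\,1-\alpha|P'|\bigr)$ for every $h\in\mathcal{H}$; balancing by taking $\alpha=1/(|P'|+2)$ yields $\max_{j\in\{1,2\}}\textsc{Loss}_\mathcal{D}(h,h_j)\ge \epsilon_0:=1/(|P'|+2)$.

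To conclude, because the learner's output distribution is identical under the two targets, the probabilities $\Pr_S[\textsc{Loss}_\mathcal{D}(\mathcal{A}(S),h_j)\ge\epsilon_0]$ for $j=1,2$ sum to at least $1$, so for some $j^\star$ this event has probability at least $1/2$; the target $h_{j^\star}$ then witnesses failure of PAC-learning with improvements for any $\epsilon<\epsilon_0$ and $\delta<1/2$ at every sample size $M$. The main subtlety I expect is case (b): the trap $q$ promised by non-intersection-closedness may be \emph{positive} under exactly one of $h_1,h_2$, so the associated loss appears under only one of the two targets, which is precisely why the analysis cannot avoid summing across two targets and loses a factor of two in $\epsilon_0$. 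A secondary delicacy is that $(\mathcal{D},\Delta)$ must be specified in advance of the learner, while the specific trap activated depends on the learner's chosen $h$; placing every potential trap simultaneously into $\Delta(x')$ resolves this, since the adversary's commitment then covers any $h$ the learner may return.
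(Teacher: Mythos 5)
Your proof is correct, and while it shares the paper's high-level blueprint---two targets $h_1,h_2$ that are indistinguishable on the support of $\mathcal{D}$ (both positive on the heavy positive region, both negative on $x'$), an immovable positive region, and the universally negative point $x'$ allowed to improve into its complement, so that any proper $h$ either under-covers the positive region or plants a reachable false positive under at least one target---the instantiation is genuinely different. The paper argues from the closure-based definition: writing $\mathcal{H}'=\mathcal{H}\mid_{\mathcal{X}\setminus\{x'\}}$, it picks a set $S$ with $\clos_{\mathcal{H}'}(S)\notin\mathcal{H}'$, takes $\mathcal{D}$ uniform on $\clos_{\mathcal{H}'}(S)\cup\{x'\}$, sets $\Delta(x')=\mathcal{X}\setminus S$ with empty improvement sets on the closure, and then runs a minimal-consistency argument (the minimally consistent $h_1,h_2$ and the auxiliary $\tilde h$ beneath the learner's output) to exhibit the false positive that $x'$ moves to. You instead invoke the pairwise characterization of intersection-closedness for finite classes---which the paper itself asserts in Definition~\ref{def:intersection-closed}, so this is consistent with its conventions---to get the witness $P'=(h_1\cap h_2)\cap(\mathcal{X}\setminus\{x'\})$ that is no hypothesis's restriction, and you replace the closure/minimality bookkeeping by the clean dichotomy ``any proper $h$ misses a point of $P'$ or is positive somewhere outside $P'$,'' with a weighted distribution chosen to balance the two failure modes. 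Your route buys explicit failure parameters ($\epsilon_0=1/(|P'|+2)$, $\delta<1/2$, at every sample size), avoids the somewhat delicate minimal-consistency step, and handles the corner case $P'=\emptyset$ gracefully (every proper $h$ then has a trap and the bound only improves); the paper's route works directly from the closure-based definition without passing through the pairwise equivalence. Two trivial finishing touches: specify $\Delta(x)=\{x\}$ on the measure-zero points outside $P'\cup\{x'\}$ so that $\Delta$ is totally defined, and state explicitly that properness plus the hypothesis on $x'$ forces $h(x')=0$, so the reaction set of $x'$ is exactly the set of $h$-positive traps, as your case (b) uses.
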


\begin{proof}
    See Appendix~\ref{app:hardness-intersection-closed}.
\end{proof}

\noindent Note that we have an additional requirement that all classifiers in the concept space agree on some negative point 
(intuitively, agents who should never achieve positive classification). Consider the following simple example where this condition does not hold, the concept class is not intersection-closed, and learnability is possible in our setting.

    Suppose $\mathcal{X}=\{x_1,x_2\}$ and $\mathcal{H}=\{h_1,h_2\}$ with $h_1(x_1)=1,h_1(x_2)=0$ and $h_2(x)=1-h_1(x)$ for either $x\in\mathcal{X}$. Clearly $h_1\cap h_2\notin \mathcal{H}$ and  $\mathcal{H}$ is not intersection-closed, yet knowledge of a single label tells us the target concept.

\subsection{Halfspaces on the Unit Ball}

We now consider the problem of learning homogeneous halfspaces with respect to the uniform distribution on the unit ball (or any spherically-symmetric distribution), when agents have the ability to improve by an angle of $r$.
\begin{theorem}\label{thm:halfspaces}
 Consider the class of $d$-dimensional halfspaces passing through the origin, i.e., $\mathcal{H} = \{x \mapsto \operatorname{sign}(w^Tx):w \in \R^d\}$. Suppose \( \mathcal{X} \) is the surface of the origin-centered unit sphere in \( \mathbb{R}^d \) for \( d > 2 \), and \( \mathcal{D} \) is the uniform distribution on \( \mathcal{X} \). 
For each point \( x \in \mathcal{X} \), define its neighborhood \( \Delta(x) = \{ x' \mid \arccos(\langle x, x' \rangle) \leq r \} \). For any \(\delta \in (0,1/2)\), and training sample $S \overset{\text{i.i.d.}}{\sim} \mathcal{D}^m$ of size $\tilde{O}\left(\frac{d + \log\frac{1}{\delta}}{r}\right)$,  with probability $1-\delta$,
$\textsc{Loss}_{\mathcal{D}}({\rm POS}_{\mathcal{H}}(S),f^*)=0,$ where ${\rm POS}_{\mathcal{H}}(S)$ is the  intersection of the positive regions of all $h \in \mathcal{H}$ consistent with the training set $S$.

\end{theorem}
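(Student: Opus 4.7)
The plan is to exploit two structural properties of the classifier $h := {\rm POS}_{\mathcal H}(S)$: (i) $h$ has no false positives, because $f^* = h_{w^*} \in \mathcal H$ is itself consistent with $S$ and is therefore one of the hypotheses whose positive region is intersected in $h$; and (ii) with the stated sample size, every $w$ consistent with $S$ lies within angular distance $r$ of $w^*$, which forces $h$ to cover all of the true positive hemisphere except a thin band of angular width $r$ adjacent to the equator of $w^*$. Combining (i) and (ii), every truly positive point either already satisfies $h(x)=1$ or can improve by at most angle $r$ into $\{h=1\}$, yielding zero improvement-loss.

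First I would formalize (i). Since $f^*$ is consistent with $S$, the set $\{x : \langle w^*, x \rangle \geq 0\}$ appears in the intersection defining $h$, so $h(x)=1 \Rightarrow f^*(x)=1$. Thus every positive classification is correct, and the only possible source of loss under the loss function \eqref{def:loss-function} is a truly positive $x$ for which $h(x)=0$ and the reaction set $\Delta_h(x) = \{x\}$.

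Next I would invoke standard PAC learning of halfspaces to control (ii). Under the uniform distribution on $\mathbb{S}^{d-1}$, the probability of disagreement between homogeneous halfspaces $h_w$ and $h_{w^*}$ equals $\arccos(\langle w, w^* \rangle)/\pi$, and the VC dimension of homogeneous halfspaces is $d$. The classical realizable PAC bound then yields, for $m = \tilde O\bigl((d + \log(1/\delta))/r\bigr)$, that with probability at least $1 - \delta$ every $w$ consistent with $S$ satisfies $\arccos(\langle w, w^* \rangle) \leq r$.

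Finally, I would give a short spherical-triangle argument showing that every truly positive $x$ has a point of $\{h=1\}$ within angular distance $r$. Fix $x$ with $f^*(x) = 1$ and let $\phi := \arccos(\langle w^*, x\rangle) \in [0, \pi/2]$. If $\phi \leq r$, take $x' := w^*$; then $x' \in \Delta(x)$, and for every consistent $w$ we have $\langle w, w^*\rangle \geq \cos r > 0$, so $w^* \in h$. Otherwise rotate $x$ toward $w^*$ by exactly angle $r$ in the plane ${\rm span}(x, w^*)$ to obtain $x' \in \Delta(x)$ with $\arccos(\langle w^*, x'\rangle) = \phi - r$; by the spherical triangle inequality on $\mathbb{S}^{d-1}$, for any consistent $w$,
\[
    \arccos(\langle w, x'\rangle) \leq \arccos(\langle w, w^*\rangle) + \arccos(\langle w^*, x'\rangle) \leq r + (\phi - r) = \phi \leq \pi/2,
\]
so $\langle w, x'\rangle \geq 0$ and $x' \in h$. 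Combined with (i), this gives $\textsc{Loss}(x; h, f^*) = 0$ for every $x \in \mathcal X$, hence $\textsc{Loss}_{\mathcal D}(h, f^*) = 0$. The key step requiring care is the spherical-triangle bound that translates angular narrowness of the consistent cone into pointwise coverage of the positive hemisphere via the improvement move; the sample-complexity bookkeeping to absorb a $\log(1/r)$ factor into the $\tilde O$ is then routine.
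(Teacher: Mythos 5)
Your proposal is correct and follows essentially the same route as the paper: use the identity $\Pr_{x\sim\mathcal D}[h_w(x)\ne f^*(x)]=\arccos(\langle w,w^*\rangle)/\pi$ together with the standard realizable VC bound to ensure every consistent halfspace is within angle $r$ of $w^*$, then conclude that ${\rm POS}_{\mathcal H}(S)$ has no false positives and every true positive can rotate by at most $r$ toward $w^*$ into the positive agreement region. Your explicit spherical-triangle argument (and the separate case $\phi\le r$, moving to $w^*$ itself) is just a more careful write-up of the step the paper states in one line, so the two proofs coincide in substance.
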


\begin{proof}
The algorithm will use ${\rm POS}(\mathcal{H}_S)$ as its classifier; that is, a point $x'$ is classified as positive if {\em every} $h\in \mathcal{H}$ consistent with $S$ labels $x'$ as positive.  Note that this classifier will have zero loss under improvement function $\Delta$ if every $h\in \mathcal{H}$ consistent with $S$ has angle at most $r$ with $f^*$; this is because any positive example $x$ can then move into the positive agreement region simply by moving an angular distance $r$ in the direction of the normal vector to $f^*$.  So, all that remains is to show that after $m$ examples, with probability at least $1-\delta$, every $h\in \mathcal{H}$ consistent with $S$ has angle at most $r$ with $f^*$.

Consider some $h \in \mathcal{H}$ given by \( h(x) = \operatorname{sign}(w^Tx) \).
The probability mass lying in the disagreement region of $h$ and $f^*$ is:
\begin{align}
\rho_{\mathcal{D}} (h,f^*) = \mathbb{P}_{x \sim \mathcal{D}}[h(x) \neq f^*(x)] = \frac{\arccos(\langle w, w^* \rangle)}{\pi}.
\label{unitballactive}
\end{align}
Therefore, to have the property that every $h\in \mathcal{H}$ consistent with $S$ has angle at most $r$ with $f^*$, we just need that every $h\in \mathcal{H}$ of error greater than $\frac{r}{\pi}$ should make at least one mistake on $S$.  By standard realizable VC bounds, it suffices to have
\[
m =  \mathcal{\tilde{O}}\left(\frac{d+\log\frac{1}{\delta}}{r}\right)
\]
number of i.i.d. samples for this to hold with probability at least $1-\delta$.
\end{proof}

\begin{remark}
  It is worth noting that while the aforementioned result relies on the classifier ${\rm POS}(\mathcal{H}_S)$, which is a fairly complex function, a similar guarantee can be achieved using a linear classifier (though non-homogeneous, so it is still not ``proper''). Specifically, by obtaining a sufficiently large sample, one can construct a homogeneous linear classifier whose angle with respect to the target is at most \( \frac{r}{2} \). We can then shift that classifier by $r/2$ (so it is no longer homogeneous) to ensure its positive region is contained inside the positive region for $f^*$.
\end{remark}

\section{Zero-error Learning in the Graph Model}\label{sec:graph-model}

In this section, we will consider a general discrete model for studying classification of agents with the ability to improve. The agents are located on the nodes of an undirected graph, and the edges determine the improvement function, i.e.\ the agents can move to neighboring nodes in order to potentially improve their classification. Note that the graph nodes correspond to an arbitrary discrete instance space $\mathcal{X}$. Remarkably, zero error may be attained even in this general setting. All proofs in this Section are deferred to Appendix~\ref{app:graph}.\looseness-1

Formally, let \( G = (V, E) \) denote an undirected graph. The vertex set \(V = \{x_1, x_2, \dots, x_n\}\) represents a fixed collection of \(n\) points corresponding to a finite instance space $\mathcal{X}$. 
The edge set \( E \subseteq V \times V \) captures the adjacency information relevant for defining the improvement function. More precisely, for a given vertex \( x \in V \), the improvement set of \( x \) is given by its neighborhood in the graph, i.e.\
$\Delta(x) = \{x' \in V \ | \ (x, x') \in E\}$\footnote{Our results readily extend to $\Delta(x) = \{x' \in V \ | \ d_G(x, x') \le r\}$, where $d_G$ denotes the shortest path metric on $G$, by applying our arguments to $G^r$, the $r^\textrm{th}$ power of $G$ (see appendix).}. 
Let \( f^*: V \to \{0, +1\} \) represent the target labeling (or partition) of the vertices in the graph \( G \). Assume that 
the hypothesis space \( \mathcal{H} \) is the set of all possible labelings of the graph, which is finite. 
\subsection{Near-tight Sample Complexity for Zero-Error}

Our first result is to show that we can obtain zero-error in the learning with improvements setting, when the data distribution $\mathcal{D}$ is given by a uniform distribution over $V$, and obtain near-tight bounds on the sample complexity. Our learner in this case is the ‘‘conservative'' classifier $h\in\mathcal{H}$ that classifies exactly the positive points seen in the sample as positive, and the remaining points as negative. Even though we allow $f^*$ to be an arbitrary labeling in $\mathcal{H}$, we do not need to see all the labels to learn an $h$ that achieves zero error w.r.t.\ $f^*$. Intuitively, this is because for any positively labeled node $x$ it is sufficient to see the label of $x$ or that of one of its neighbors since the agents can move to a neighbor predicted positive by $h$. We further show that no algorithm can achieve a better sample complexity, up to some logarithmic factors.

\begin{theorem}
Let \( G = (V, E) \) be an undirected graph with \( n = |V| \) vertices, and let \( f^*: V \to \{0, +1\} \) denote the ground truth labeling function. 
Let \( d_{\min}^+ \) denote the minimum degree of the vertices in \( G^+ \), the induced subgraph of $G$ on the vertices  $x\in V$ with $f^*(x) = 1$. Assume that the data distribution \(\mathcal{D}\) is uniform on \( V \). For any $\delta>0$, 
and training sample $S \overset{\text{i.i.d.}}{\sim} \mathcal{D}^m$ of size  $m =  O \left(\frac{n (\log n + \log \frac{1}{\delta})}{d_{\min}^+ + 1}\right)$, there exists a learner that achieves zero generalization error, i.e.\ learns a hypothesis $h$ such that $\textsc{Loss}_{\mathcal{D}}(h,f^*)=0$, with probability at least $1-\delta$ over the draw of $S$. Moreover, there exists a graph $G$ for which any learner that achieves zero generalization error must see at least $\Omega\left(\frac{n}{d_{\min}^+ + 1} \log \frac{n}{d_{\min}^+ + 1}\right)$ labeled points in the training sample, with high constant probability.
\label{DSSamplecomplexity}
\end{theorem}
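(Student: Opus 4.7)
The plan is to analyze the ``conservative'' learner that outputs the hypothesis $h_S$ with $h_S^{-1}(1) = S^+ := \{x \in S : f^*(x) = 1\}$. The first step is to characterize zero loss: since $h_S^{-1}(1) \subseteq \{x : f^*(x) = 1\}$, no agent ever reacts to a false positive of $h_S$, so the only way $\textsc{Loss}(v; h_S, f^*) > 0$ is for $v$ to be a truly positive vertex that is neither in $S^+$ nor has a neighbor in $S^+$. Equivalently, $\textsc{Loss}_{\mathcal{D}}(h_S, f^*) = 0$ if and only if $S^+$ is a \emph{dominating set} of the positive induced subgraph $G^+$.

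For the upper bound, each $v \in V(G^+)$ has at least $d_{\min}^+ + 1$ vertices in its closed $G^+$-neighborhood, so under $\mathcal{D}$ uniform on $V$ every i.i.d.\ draw lies in that neighborhood with probability at least $(d_{\min}^+ + 1)/n$. Thus $\Pr[v \text{ undominated after } m \text{ draws}] \le \exp\bigl(-m(d_{\min}^+ + 1)/n\bigr)$, and a union bound over the at most $n$ positive vertices, setting the total failure probability to $\delta$, gives the stated $m = O\bigl(\tfrac{n(\log n + \log(1/\delta))}{d_{\min}^+ + 1}\bigr)$.

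For the lower bound, take $G$ to be a disjoint union of $k = n/(d+1)$ cliques $C_1,\ldots,C_k$ of size $d+1$, and consider the randomized adversary that labels each clique independently as either entirely positive or entirely negative with probability $1/2$ each (so with high probability at least one clique is all-positive and then $d_{\min}^+ = d$). For any algorithm $\mathcal{A}$ producing $h$ with $P := h^{-1}(1)$, zero loss forces $P \cap C_i \ne \emptyset$ on every all-positive $C_i$ (otherwise positive vertices in $C_i$ have no positive neighbor, since components are disconnected) and $P \cap C_i = \emptyset$ on every all-negative $C_i$ (otherwise $P$ contains a false positive). Let $U$ denote the (random) set of cliques from which no sample is drawn; for $i \in U$ the labeling of $C_i$ is independent of the sample, so $\mathcal{A}$'s commitment on whether $P \cap C_i = \emptyset$ matches the correct answer with probability exactly $1/2$, independently across $i \in U$. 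Hence $\Pr[\mathcal{A} \text{ succeeds}] \le \mathbb{E}\bigl[(1/2)^{|U|}\bigr] \le \Pr[|U| = 0] + \tfrac{1}{2}\Pr[|U| \ge 1]$.

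Finally, a standard coupon-collector argument completes the lower bound: for $m \le c \cdot k \log k$ with a small constant $c < 1$, $\mathbb{E}|U| = k(1 - 1/k)^m = \Theta(k^{1-c})$ diverges, and the indicators $\mathbb{I}[C_i \text{ unsampled}]$ are negatively correlated (since $(1-2/k)^m \le (1-1/k)^{2m}$), so $\mathrm{Var}(|U|) \le \mathbb{E}|U|$ and Chebyshev yields $\Pr[|U| = 0] = O(k^{c-1}) \to 0$. The learner then fails with probability bounded below by a constant, giving $\Omega\bigl(\tfrac{n}{d+1}\log\tfrac{n}{d+1}\bigr)$. The main obstacle is the adversarial construction for the lower bound: a deterministic choice like $f^* \equiv 1$ is defeated by the trivial learner $h \equiv 1$, so the two-valued labeling distribution per clique is essential to force $\mathcal{A}$ to gather information about each of the $k$ blocks, and then Yao's minimax principle converts the randomized-adversary failure probability into the existence of a fixed $f^* \in \mathcal{H}$ on which the learner fails with constant probability.
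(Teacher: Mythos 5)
Your proposal is correct and takes essentially the same route as the paper: the same conservative learner $h_S$ that positively labels exactly the sampled positives, with the covering/dominating-set characterization of zero loss and a per-vertex union bound for the upper bound, and the same disjoint-cliques construction with a coupon-collector argument for the lower bound. Your lower bound simply formalizes the paper's adversarial consistency argument via a random per-clique labeling, an averaging (Yao) step, and a second-moment coupon-collector bound --- the same idea carried out more rigorously.
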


\begin{proof} Given a sample $S$ labeled by $f^*$, let $S^+=\{x \in S \mid f^*(x) = +1\}$ denote the set of positive points in $S$. To achieve the claimed upper bound on the sample complexity of learning with zero-error, the learner outputs $h_S$ with $h_S(x)=\mathbb{I}\{x\in S^+\}$, that is the classifier which positively classifies exactly the points in $S^+$. We will now show that with a sample of size $m=O \left(\frac{n (\log n + \log \frac{1}{\delta})}{d_{\min}^+ + 1}\right)$, the proposed $h_S$ achieves zero generalization error with probability at least $1-\delta$.

Let \( V^+ = \{x \in V \mid f^*(x) = +1\} \) denote the set of vertices in $G^+$. We say that $x\in V^+$ is {\it covered} by the sample $S$ if $x\in S$ or there exists $x'\in S$ such that $x'\in V^+$ and $(x,x')\in E$. Note that if every $x\in V^+$ is covered by $S$, then  $\textsc{Loss}_{\mathcal{D}}(h_S,f^*)=0$ (formally established in Theorem~\ref{thm:dominating-set-teaching}). It is therefore sufficiently to bound determine the sample size needed to guarantee that every positive vertex is covered with high probability. 

Let \( x \in V^+ \) be a vertex in the positive subgraph \( G^+ \). For \( x \) to be covered, it must either be included in the sample $S$, or have at least one of its neighbors \( x' \in \Delta(x) \) included in $S$ . The probability of sampling \( x \) directly in one draw is \( \frac{1}{n} \). The probability of sampling any of its neighbors is proportional to its degree in \( G^+ \). Thus, the total probability of covering \( x \) in one draw is:
\[
p_{\text{cover}}(x) = \frac{1}{n} \cdot \left(1 + d(x)\right),
\]
where \( d(x) \) is the degree of \( x \) in \( G^+ \). Since \( d(x) \geq d_{\min}^+ \), we have:
\[
p_{\text{cover}}(x) \geq \frac{d_{\min}^+ + 1}{n}.
\]

To ensure the desired coverage holds with probability at least \( 1 - \delta \), we analyze the failure probability for a single vertex. The probability that a given vertex \( x \in V^+ \) is not covered after \( m \) samples is
\[
\mathbb{P}[x \text{ is not covered}] \leq \left(1 - \frac{d_{\min}^+ + 1}{n}\right)^m.
\]

\noindent To ensure that this holds for all \( |V^+| \leq n \) vertices, we apply the union bound
\begin{align*}
    \mathbb{P}[\exists x \in V^+ \text{ not covered}] \leq n \left(1 - \frac{d_{\min}^+ + 1}{n}\right)^m.
\end{align*}

\noindent Therefore, the sample size \( m \) required to ensure that the probability of the above bad event is at most $\delta$ is given by 
\begin{align*}
    m = O\left(\frac{n (\log n + \log \frac{1}{\delta})}{d_{\min}^+ + 1}\right).
\end{align*}

To establish the lower bound, consider a graph \( G \) on \( n \) vertices 
consisting of \( k \) disjoint cliques, each of the same size \( \frac{n}{k} = d_{\min}^+ + 1 \), see Figure~\ref{posclique}. Now note that if our sample $S$ does not contain any node from any one of the cliques (say $C$), then zero-error is not possible. This is because, one of two cases occur. If the learner's hypothesis $h$ predicts any point in $C$ as positive then we can select an $f^*$ that predicts $C$ entirely as negative while being consistent with $S$, causing the population loss to be at least $\frac{1}{k}$. On the other hand, if the learned $h$ predicts all points in $C$ as negative, then we can set $f^*$ to label $C$ entirely as positive, again incurring a population loss of at least $\frac{1}{k}$.

\begin{figure}[ht]
\vskip 0.2in
\begin{center}
\centerline{\includegraphics[width=0.4\columnwidth]{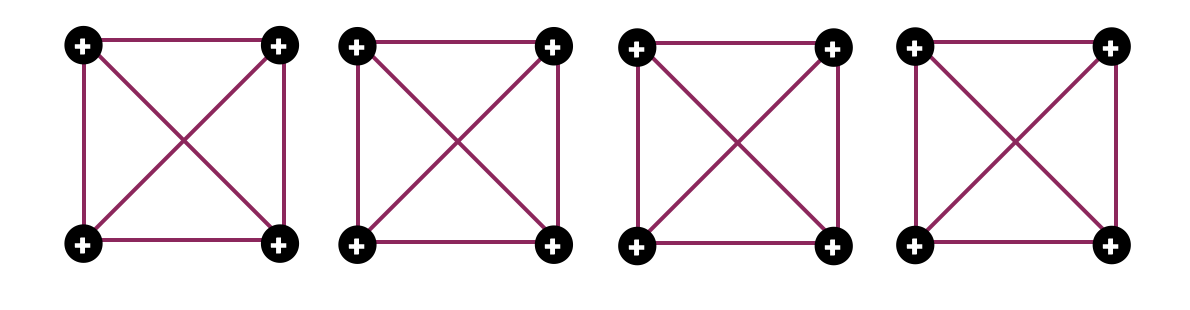}}
\caption{The graph $G$ used to establish our lower bound on the zero-error sample complexity. The graph consists of $k$ components, each of size $\frac{n}{k}$.}
\label{posclique}
\end{center}
\vskip -0.2in
\end{figure}

Our goal therefore is to determine a lower bound on the number of points required to ensure that every clique has at least one of its vertices included in the training sample $S$, which ensures that for every positive vertex, either the vertex itself or one of its neighbors is included. Using the standard coupon collector analysis, the number of trials needed to collect $k = \frac{n}{d_{\min}^+ + 1}$ coupons is $\Omega(k\log k)$ with high constant probability.
\end{proof}

\noindent We note that the value of \( d_{\min}^+ \) (and, therefore our bound on the sample complexity) is generally not known to the learner in advance, and it depends on the graph structure as well as the (unknown) target labeling  $f^*$. It is an interesting open question to design a learner that can determine whether a sample of sufficient size has been collected to guarantee zero-error. For the special case of  the complete graph,  our sample complexity bound becomes $m=\tilde{O}\left(\frac{n}{n^+}\right)$, where $n^+$ is the number of nodes labeled positive by $f^*$.

\subsection{Enabling Improvement Whenever It Helps}\label{sec:enabling-improvement}

Note that our loss function $\textsc{Loss}(x; h, f^*)$ 
penalizes the learner for mistakes w.r.t.\ the target $f^*$ after the agents have potentially reacted to $h$. However, we say nothing about whether a negative point $x$ that truly has the ability to improve and get positively classified (i.e.\ $f^*(x')=1$ for some $x'\in\Delta(x)$) will also be able to do so under our published classifier $h$. We will 
now consider an alternative measure of the performance of $h$ (conceptually captures recall in the improvement setting) which measures the probability mass of the points for which we fail to enable improvement even though it is possible under $f^*$. Formally, we define

\begin{eqnarray}\label{def:loss-function-enabling}
    \textsc{Loss}^{\textsc{e}}_{\mathcal{D}}(h, f^*) 
    =
    \mathbb{P}_{x \sim\mathcal{D}}\left[f^*(x)=0 \land  \mathbb{I}[\Delta_{f^*}(x) = \{x\}] = \mathbb{I}[\Delta_h(x) =\{x\}] \right]. \nonumber
\end{eqnarray}

\noindent That is, we wish to ensure that an agent $x$ with $f^*(x)=0$ and an option to improve to a truly positive point in its reaction set w.r.t.\ $f^*$, will also see some option to improve and get positively classified according to $h$.
In Theorem~\ref{thm:enable-improvement-and-zero-loss} in the Appendix, we obtain near-tight sample complexity bounds for learning a concept $h$ that simultaneously guarantees that $\textsc{Loss}^{\textsc{e}}_{\mathcal{D}}(h, f^*)=0$ and $\textsc{Loss}_{\mathcal{D}}(h, f^*)=0$ when the data distribution  is uniform over $V$.

\subsection{Teaching a Risk-Averse Student}
The theory of teaching \cite{GOLDMAN199520} studies the size of the smallest set of labeled examples needed to guarantee that a unique function in the concept space is consistent with the set (for labels according to any target concept in the space). If the teacher (that knows $f^*$) provides this labeled set, then a student that can do consistent learning (find a concept consistent with training data) will learn the target concept $f^*$.
In our learning with improvements over the graph setting, it is natural to consider a simple variant where the student outputs the most risk-averse concept that only labels positive points seen in the labeled set received from the teacher as positive. Here we will consider the question of the minimum number of labeled examples the teacher needs to provide to the risk-averse student to achieve zero-error in our setting.

Let $G^+$ denote the induced subgraph of $G$ on \( V^+ = \{x \in V \mid f^*(x) = +1\} \), the nodes labeled positive by the target concept $f^*$. We show that it is sufficient for the teacher to present the labels of a \emph{dominating set} of $G^+$ (Definition~\ref{defDS}) for the risk-averse student to learn a zero-error classifier $h$. This observation also motivates and helps establish our learning result (Theorem~\ref{DSSamplecomplexity}). 
\begin{theorem}\label{thm:dominating-set-teaching}
    Let \( G = (V, E) \) be an undirected graph, and  \( f^*\) 
    be the target labeling. Let \( G^+ \) denote the induced subgraph on the vertices  $x\in V$ with $f^*(x) = 1$, and $S^+$ denote the dominating set of $G^+$. Then $\textsc{Loss}(x; h_{S^+},f^*)=0$ for any $x\in V$, where $h_{S^+}(x)=\mathbb{I}\{x\in S^+\}$.\looseness-1 
\end{theorem}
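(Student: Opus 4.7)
The plan is to do a straightforward case analysis on the label of $x$ and on whether the agent at $x$ can improve under $h_{S^+}$. Recall that by definition, $S^+\subseteq V^+$ is a dominating set of $G^+$, so every $y\in V^+$ either lies in $S^+$ or has a neighbor in $G^+$ (hence in $G$) that lies in $S^+$. Also, by construction $h_{S^+}(x)=1$ iff $x\in S^+$, which in particular implies the key containment $\{x\in V:h_{S^+}(x)=1\}=S^+\subseteq V^+$.

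First I would handle $x\in V^+$, i.e.\ $f^*(x)=1$. If $x\in S^+$, then $h_{S^+}(x)=1$, so $\Delta_{h_{S^+}}(x)=\{x\}$ and $h_{S^+}(x)=f^*(x)=1$, giving zero loss. If instead $x\in V^+\setminus S^+$, then $h_{S^+}(x)=0$, but since $S^+$ dominates $G^+$ there exists a neighbor $x'\in S^+$ of $x$, which lies in $\Delta(x)$ and satisfies $h_{S^+}(x')=1$. Hence $\Delta_{h_{S^+}}(x)=\{x''\in\Delta(x):h_{S^+}(x'')=1\}\subseteq S^+\subseteq V^+$, and for every such $x''$ we have $h_{S^+}(x'')=1=f^*(x'')$, so the max in the loss is $0$.

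Next I would handle $x\in V\setminus V^+$, i.e.\ $f^*(x)=0$. Since $S^+\subseteq V^+$, we have $x\notin S^+$ and hence $h_{S^+}(x)=0$. If no neighbor $x'$ of $x$ satisfies $h_{S^+}(x')=1$, then $\Delta_{h_{S^+}}(x)=\{x\}$ and $h_{S^+}(x)=0=f^*(x)$. Otherwise $\Delta_{h_{S^+}}(x)\subseteq S^+\subseteq V^+$, so every $x''$ in the reaction set has $h_{S^+}(x'')=1=f^*(x'')$, giving zero loss again.

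The only real ``obstacle''---and it is more of a bookkeeping point than a mathematical one---is to be careful about the tie-breaking definition of $\Delta_h$ in~(\ref{def:loss-function}): because the loss is a maximum over the reaction set, we need the containment $\Delta_{h_{S^+}}(x)\subseteq V^+$ to hold for \emph{every} candidate move, not just some. This is exactly ensured by the fact that $h_{S^+}$ only labels points in $S^+\subseteq V^+$ as positive, which is precisely the ``risk-averse'' property of the student. Combining the two cases shows $\textsc{Loss}(x;h_{S^+},f^*)=0$ for all $x\in V$.
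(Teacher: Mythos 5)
Your proposal is correct and follows essentially the same argument as the paper's proof: a case analysis on whether $x\in V^+$ and whether $x$ (or its reaction set) meets $S^+$, with the key point being that $h_{S^+}$'s positive region is exactly $S^+\subseteq V^+$, so every element of the reaction set is correctly classified regardless of tie-breaking. No gaps.
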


\begin{proof}
    See Appendix~\ref{subsec:domset}.
\end{proof}

\section{Evaluation}\label{sec:experiments}

Below and in Appendix~\ref{app:sec_eval}, we present the setup and results of our evaluation of improvement-aware algorithms, focusing on practical risk-aversion strategies, specifically loss-based and threshold-based methods. These strategies align with our theory, which predicts optimal performance for risk-averse classifiers when agents improve under a limited budget $r$. We also investigate whether, and under what conditions, model error can be driven to zero when agents can improve within an $\ell_{\infty}$ ball of radius $r$. Achieving zero error is a notable property in the learning with improvements setting, even for broad concept classes (cf. Section~\ref{sec:graph-model}, where the instance space is discrete and concepts may be arbitrary functions).  Our code is available \href{https://github.com/ripl/PLI/tree/main}{here}.

\paragraph{Datasets.}  We use three real-world tabular datasets: the Adult UCI \cite{adult2}, the OULAD and the Law school datasets \cite{le2022survey}, and a synthetic \(8\)-dimensional binary classification dataset with class separability \(4\) and minimal outliers, generated using Scikit-learn's \texttt{make\_classification} function \cite{make_classification_scikit_learn}. In each case we train a zero-error model \(f^\star\) on the entire dataset, which we treat as the true labeling function for our experiments.

Let \(\mathcal{S}_{T} = \{(x, y) \mid x \in \mathbb{R}^d, \, y \in \{0,1\}\}\)  represent the dataset (e.g., Adult), where \(x\) is the feature vector and \(y = f^\star(x)\) is the label. For all experiments, we split \(\mathcal{S}_{T}\) into training \(\mathcal{S}_\textrm{train}\) (\(70\%\)) and testing \(\mathcal{S}_\textrm{test}\) (\(30\%\)) subsets. Further dataset details, including improvement features and class distributions, are provided in Appendix~\ref{app:sec_datasets}.

\paragraph{Classifiers.} For each full dataset \(\mathcal{S}_{T}\), we trained a zero-error model \(f^\star\) using decision trees. We trained the decision-maker model  \(h: \mathbb{R}^d \to \{0,1\}\), taking the form of a two-layer neural network, on \(\mathcal{S}_\textrm{train}\) with tuned hyperparameters. To assess the loss function's impact on error drop rate when agents improve, we trained both a standard model with binary cross-entropy (\(\mathcal{L}_{\textrm{BCE}}\)) loss and a risk-averse model with weighted-BCE (\(\mathcal{L}_{\textrm{wBCE}}\)) loss (Equation~\ref{eq:losses}).
\begin{equation}\label{eq:losses}
    \begin{aligned}
    \mathcal{L}_{\textrm{wBCE}} &= - \frac{1}{n} \sum_{i=1}^n \Bigg[w_{\textrm{FP}} (1 - y_{i})\log(1 - \hat{y}_{i}) + w_{\textrm{FN}} y_{i}\log(\hat{y}_{i}) \Bigg]
    \end{aligned}
\end{equation}
where, \(n = \lvert\mathcal{S}_{\textrm{train}}\rvert\), \(y \in \{0, 1\}\) is the true label, \(\hat{y} \in (0,1]\) is the model prediction, and \(w_{\textrm{FP}}\) and \(w_{\textrm{FN}}\) are the false positive and false negative weights, respectively. Setting $w_{\textrm{FP}}=w_{\textrm{FN}}=1$ in $\mathcal{L}_{\textrm{wBCE}}$ recovers $\mathcal{L}_{\textrm{BCE}}$. 

Beyond the \(\mathcal{L}_{\textrm{wBCE}}\) loss function, which penalizes false positives more heavily, we explore another form of risk-averse classification by applying a higher threshold of \(0.9\) (instead of the usual \(0.5\)) to the sigmoid output of the final layer. For further details, refer to Appendix~\ref{app:sec_classifiers}.

\paragraph{Improvement.} Given a trained model function \(h\), a data sample \(x \in \mathcal{S}_{\textrm{test}}\)  with an undesirable model outcome \( h(x) = 0 \), and a subset of improvable features along with a predefined improvement budget \( r \), we use Projected Gradient Descent~\cite{AlexAdversarial} to compute the minimal change within the budget \( r \) required to transform \(x\) into a positive outcome \( h(x') = 1 \). Specifically, we aim to find:
\begin{equation} \label{eq:improve_pgd}
x' = \textrm{Proj}_{\Delta(x)}  \left( x_{(t)} + \alpha \cdot  \textrm{sign}(\nabla_{x_{(t)}} \mathcal{L}(h(x_{(t)}), h(x))) \right)
\end{equation}
\(\textrm{such that} \ h(x') = 1\). Here, \( \nabla \mathcal{L}(h(x_{(t)}), h(x)) \) represents the gradient of the loss function (BCE or wBCE), \( t \) the current iteration, and \( \alpha \) the step size. 
\( \textrm{Proj}_{\Delta(x)} \) denotes the projection of \(x_{(t)}\) onto the \(\ell_\infty\) ball of radius \(r\) centered at \(x\), \(\Delta(x) = \{ x_{(t)} \in \mathbb{R}^{d} : \|x_{(t)} - x\|_{\infty} \leq r \} \). This ensures that updates remain within the \( r \)-ball constraint.
A successful improvement occurs when a negatively classified sample \( x \) 
transforms within the specified budget \(r\) into \( x' \) such that \( h(x') = 1 \) and \( f^\star(x') = 1 \) 
(see Appendix~\ref{app:sec_improve}).

\begin{figure*}[!b]
    \centering
    \subfloat[Adult \(\big(w_\textrm{FN}=0.001\big)\) \label{fig:adult_0.5}]{
        \begin{minipage}{0.45\linewidth}
            \centering
            \includegraphics[width=\linewidth]{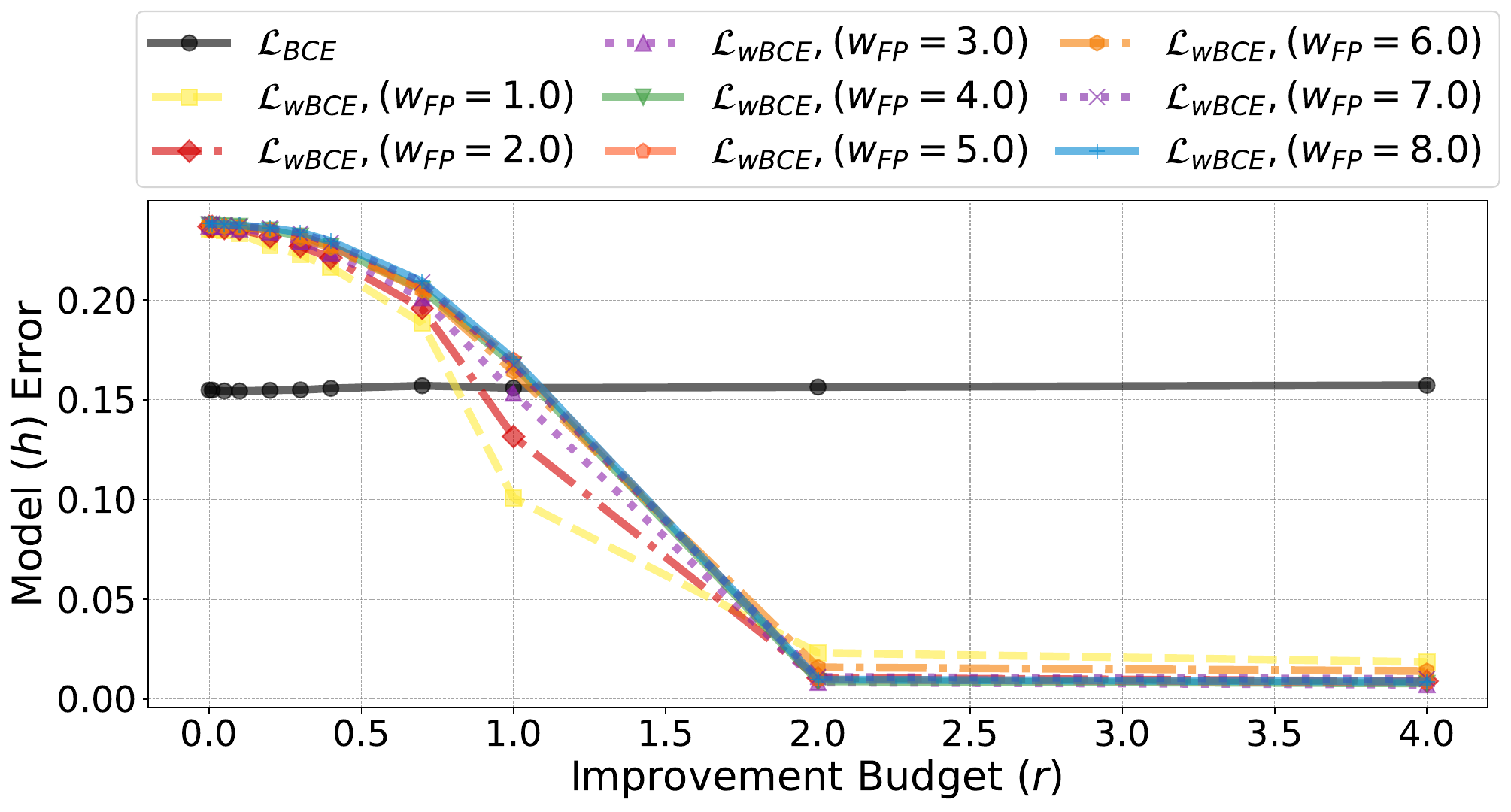}
        \end{minipage}
    } 
    \hfill
    \subfloat[OULAD \(\big(w_\textrm{FN}=1.33\big)\) \label{fig:oulad_0.5}]{
        \begin{minipage}{0.45\linewidth}
            \centering
            \includegraphics[width=\linewidth]{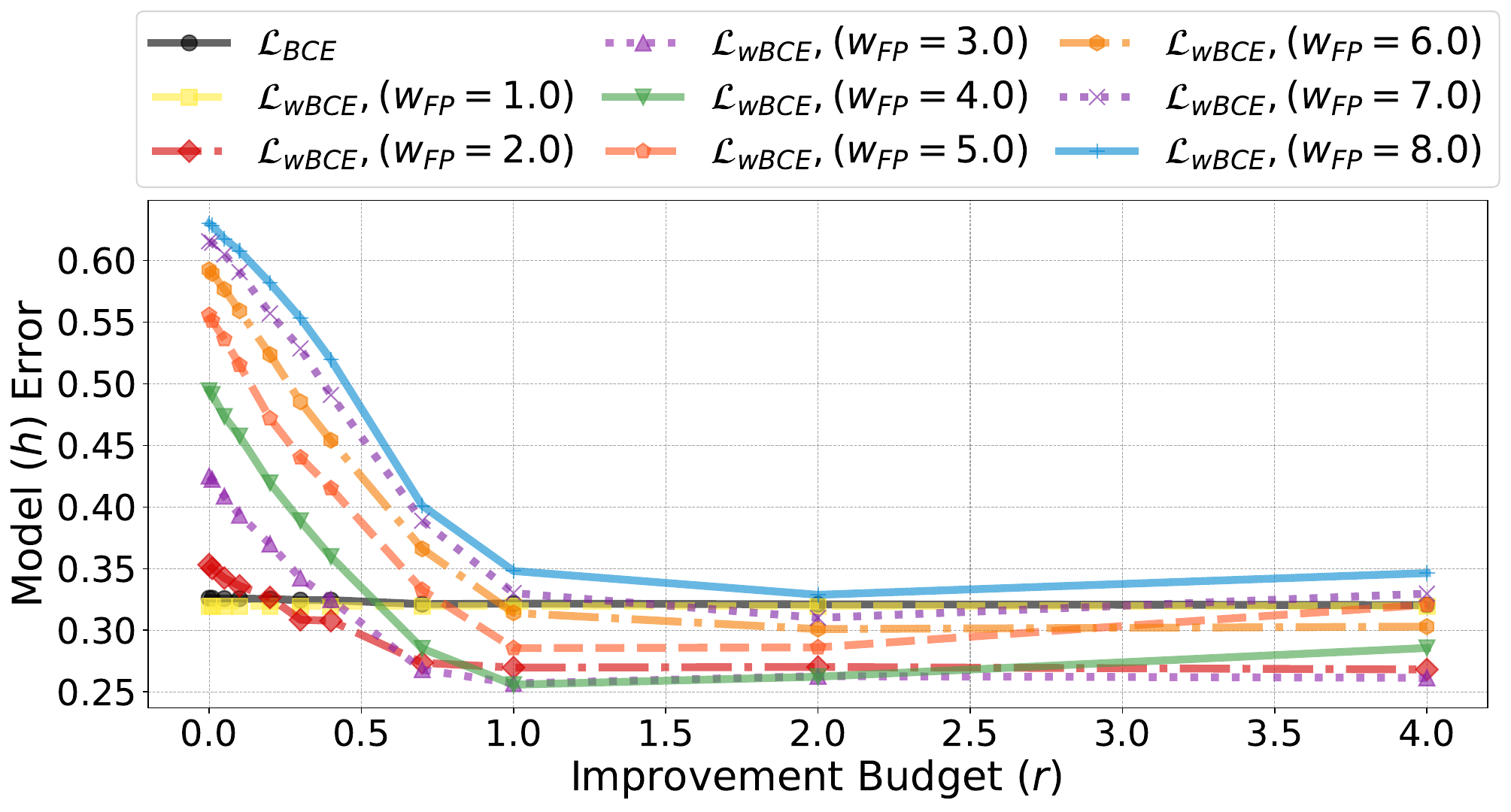}
        \end{minipage}
    }%
    \hfill
    \subfloat[Law school \(\big(\mathcal{L}_\textrm{wBCE} \ \textrm{where} \ w_\textrm{FN}=0.009\big)\) \label{fig:law_0.5}]{
        \begin{minipage}{0.45\linewidth}
            \centering
            \includegraphics[width=\linewidth]{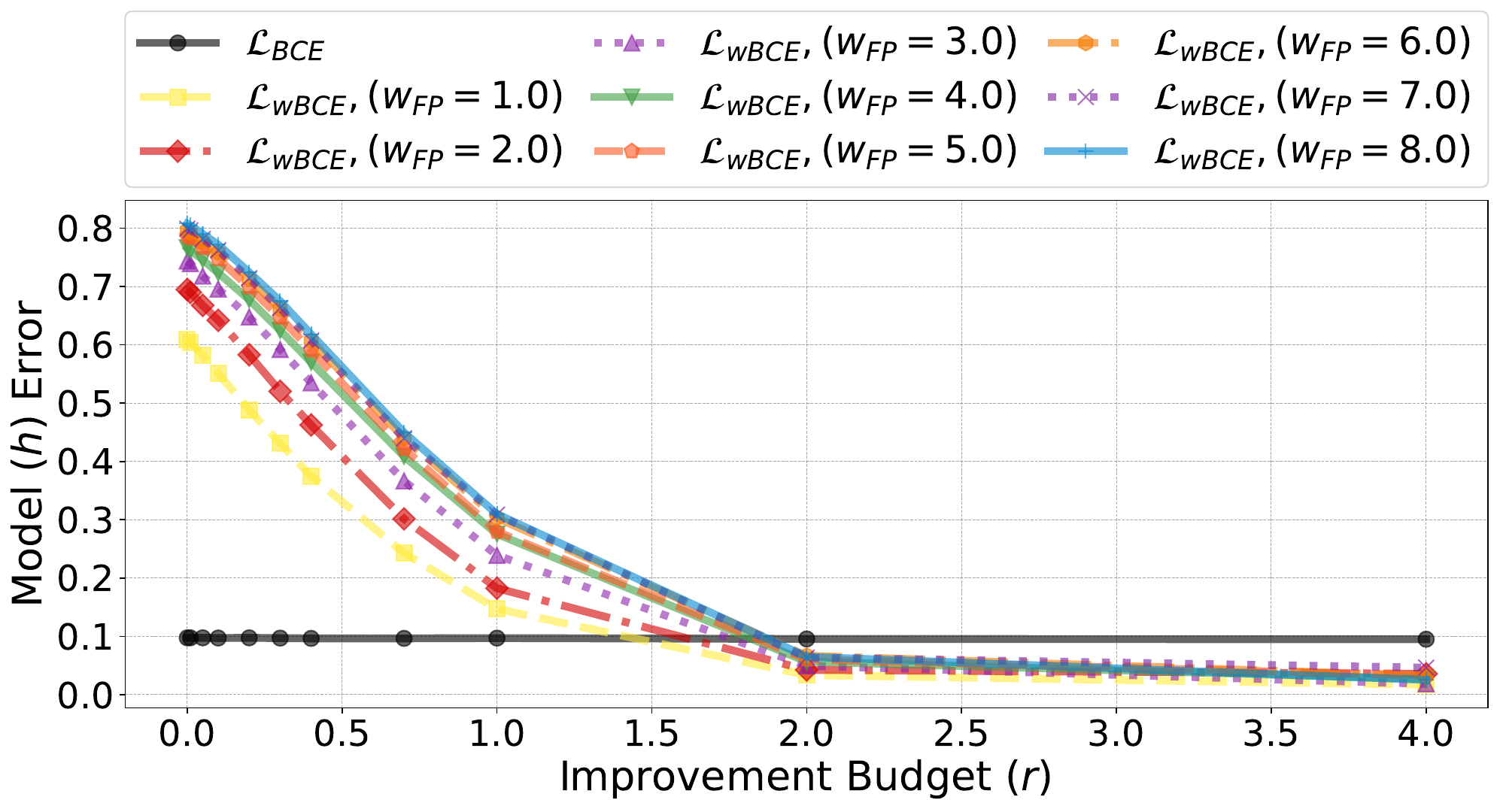}
        \end{minipage}
    }%
    \hfill
    \subfloat[Synthetic \(\big(w_\textrm{FN}=0.009\big)\)\label{fig:synthetic_0.5}]{
        \begin{minipage}{0.45\linewidth}
            \centering
            \includegraphics[width=\linewidth]{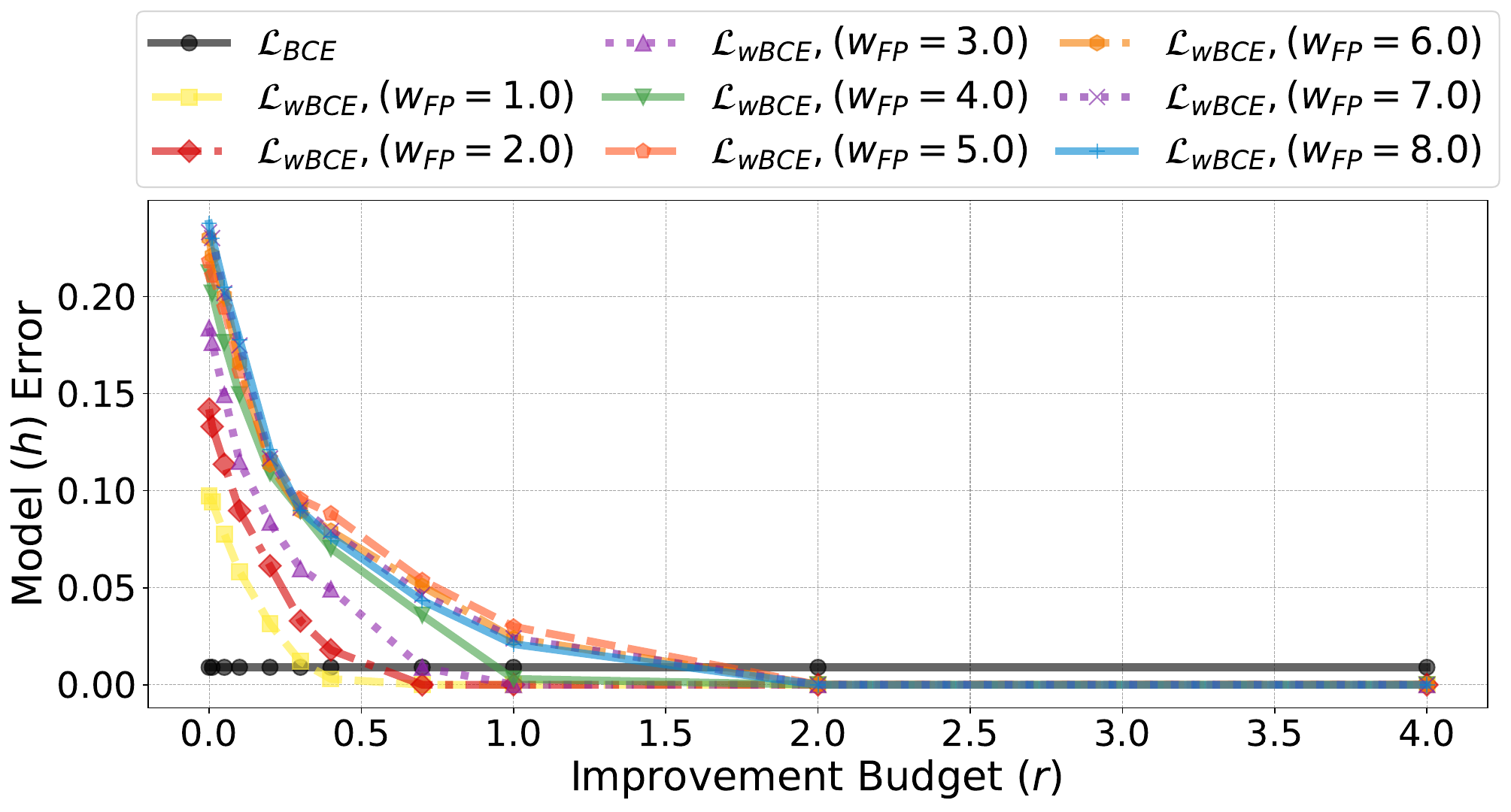}
        \end{minipage}
    }
    \caption{We compare the performance gains when agents improve to the risk-averse \(\big(\mathcal{L}_{\textrm{wBCE}}, \frac{w_\textrm{FP}}{w_\textrm{FN}}>1,  w_{\textrm{FP}}=\{i\}_{i=1}^{8}\big)\) and the standard  (\(\mathcal{L}_\textrm{BCE}, w_{\textrm{FP}}= w_{\textrm{FN}}=1\)) models across four datasets (Adult, OULAD, Law school, and Synthetic) using a fixed classification threshold of \(0.5\). Higher improvement budgets (\(r\)) and greater risk-aversion (high \(\frac{w_\textrm{FP}}{w_\textrm{FN}}\)) accelerate error reduction. See Figure~\ref{fig:app_thresh0.5thresh0.9} (Appendix) for a side-by-side comparison with threshold (\(0.9\)).}
    \label{fig:main_thresh0.5}
\end{figure*}

\subsection{Results}
Here, we highlight the key insights from our evaluations. A more detailed discussion of the results, along with additional empirical evaluation, can be found in Appendix~\ref{app:sec_results}.

Results show that risk-averse (wBCE-trained) models consistently outperform standard (BCE-trained) models in reducing overall error as the improvement budget increases (see Figure~\ref{fig:main_thresh0.5}, and Appendix, Figures~\ref{fig:bce_wbce_threshvar} and \ref{fig:app_thresh0.5thresh0.9}).
While error gains relative to BCE-trained models tend to cancel out, wBCE-trained models retain low false positive rates after agent movement and exhibit a marked decline in false negatives as the improvement budget increases (see Appendix, Figure~\ref{fig:oulad_synthetic_move_erroreval_0.5}). Modest improvement budgets (\(r \leq 2.0\)) lead to substantial error reduction, but returns diminish for (\(r > 2.0\)), especially with a decision threshold of \(0.5\).

Among risk-averse strategies, loss-based risk aversion, where \(\mathcal{L}_{\textrm{wBCE}}\)-trained models use \(\frac{w_\textrm{FP}}{w_\textrm{FN}} > 1\) outperforms threshold-based approaches that classify agents as positive only if predicted probability exceeds \(0.9\) (see Figure~\ref{fig:main_thresh0.5}, and Appendix, Figure~\ref{fig:app_thresh0.5thresh0.9}).

Our results also show that dataset class separability (see Appendix, Figures~\ref{fig:jumbleness} and \ref{fig:orig_knn}) significantly affects the optimal level of risk aversion and the relationship between improvement budget and error reduction .

In summary, risk-averse models initially incur higher errors but achieve rapid error reduction as agents improve and \(r\) increases. 
A stricter false-positive penalty improves the positive agreement region, reducing test error, sometimes to zero (e.g., Appendix, Figure~\ref{fig:app_synthetic_0.5}).

\begin{figure*}[!t]
    \centering
    \subfloat[\(\%\) of agents that transition from TN/FN to TP/FP \label{fig:adult_move_0.5}]{
        \begin{minipage}{0.45\linewidth}
            \centering
            \includegraphics[width=\linewidth]{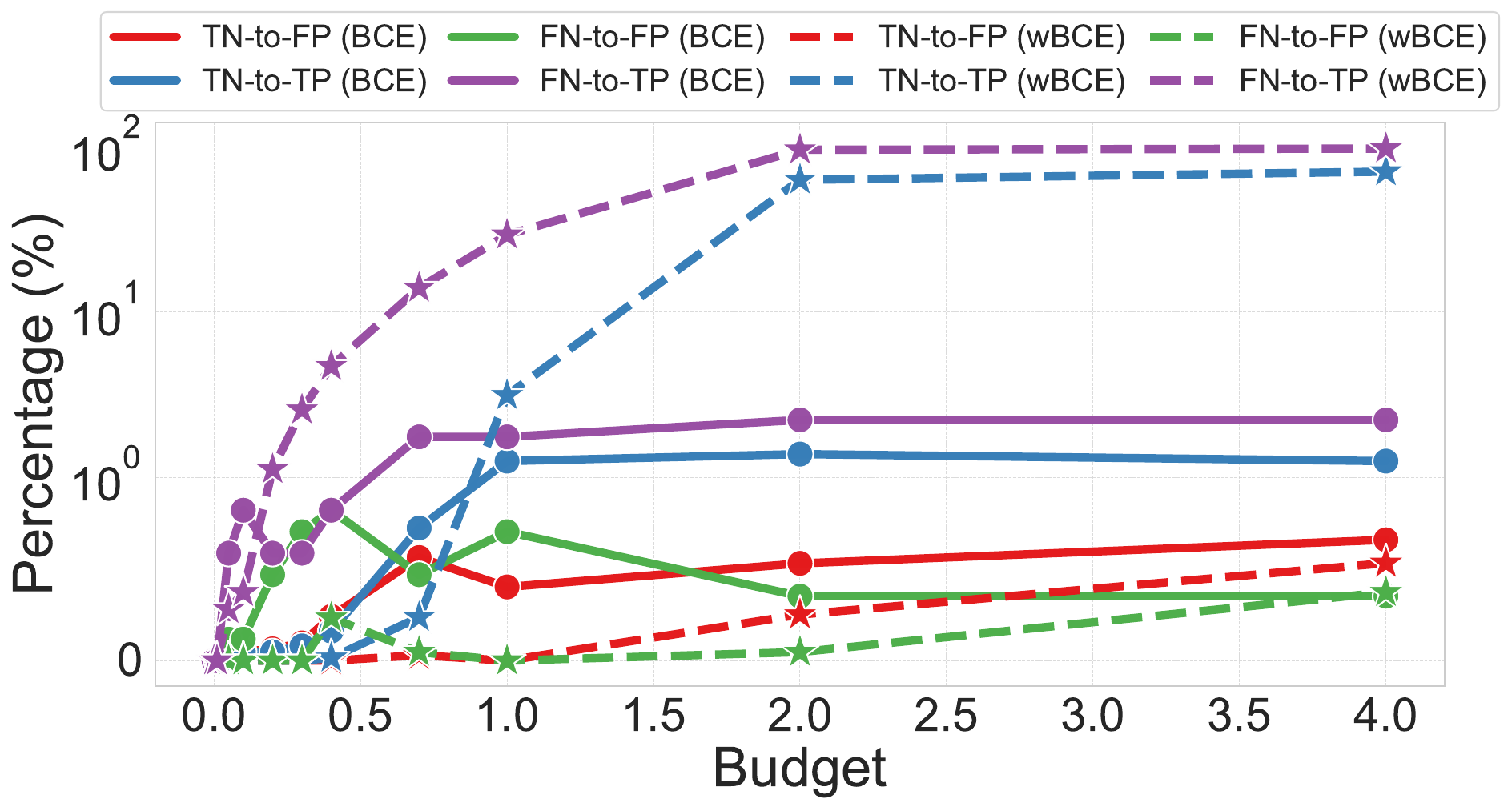}
        \end{minipage}
    } 
    \hfill
    \subfloat[FNR/FPR before and after agents' improvement \label{fig:adult_move_fpr_fnr_0.5}]{
        \begin{minipage}{0.48\linewidth}
            \centering
            \includegraphics[width=\linewidth]{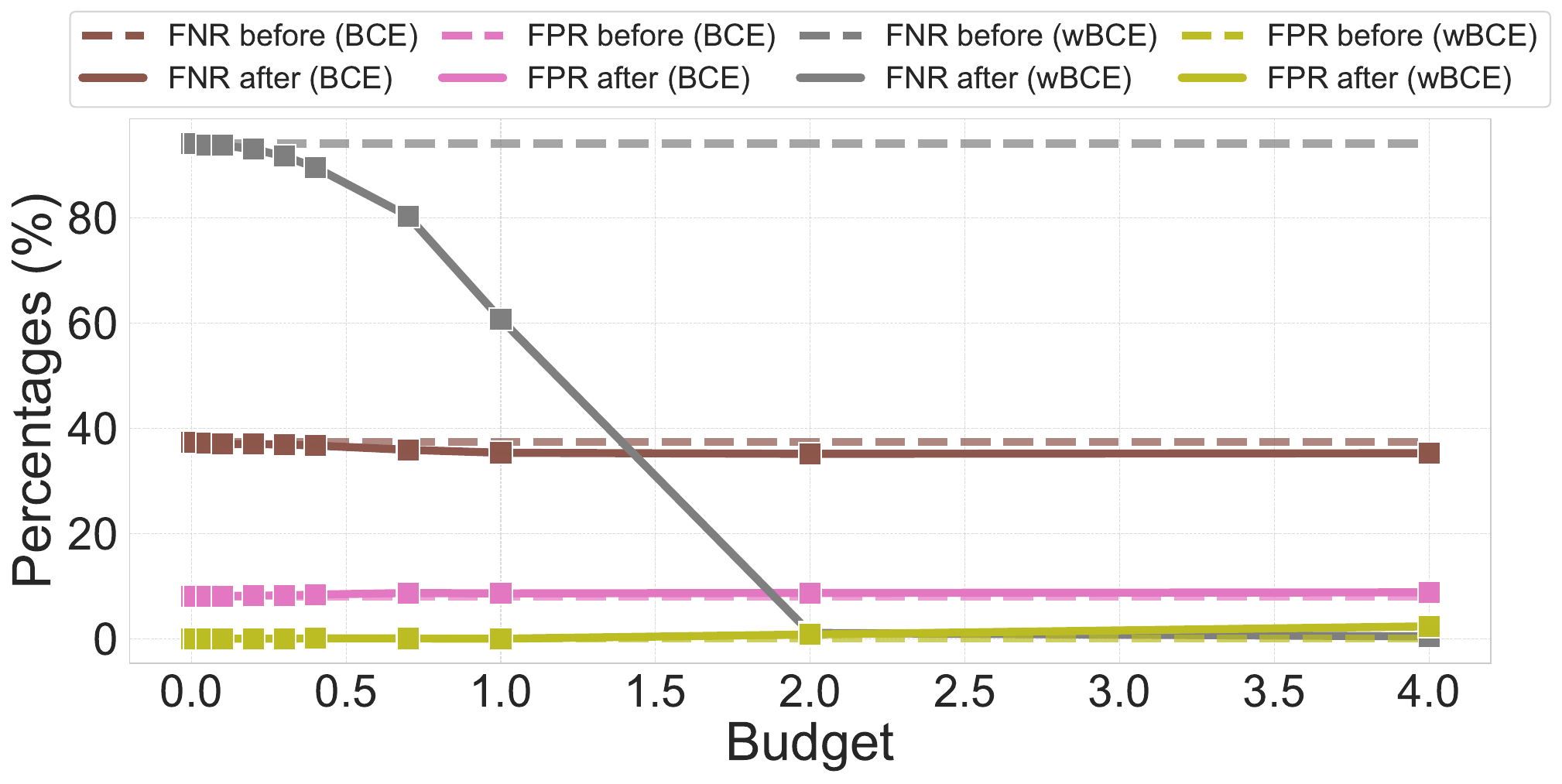}
        \end{minipage}
    }%
    \caption{The percentage of negatively classified agents (true negatives (TN) and false negatives (FN)) that transition to true positives (TP) and false positives (FP) after responding to the classifier (\(h(x)\)) is shown in Figures~\ref{fig:adult_move_0.5}. 
    Figures~\ref{fig:adult_move_fpr_fnr_0.5} shows the FPR and FNR before and after agents move. While the  wBCE-trained model used \(w_{\textrm{FP}}=0.001\) and \(w_{\textrm{FN}} = 4.4\), the BCE-trained model used \(w_{\textrm{FP}} = w_{\textrm{FN}} = 1\), and an agent is classified as positive if the probability of being positive is above \(0.5\).}
    \label{fig:adult_move_erroreval_0.5}
\end{figure*}

\section{Discussion}

We propose a novel model for learning with strategic agents where the agents are allowed to improve. Surprisingly, we are able to achieve zero error (with high probability) by designing appropriate risk-averse learners for several well-studied concept classes, including a fairly general discrete graph-based model. We show that the VC dimension of the concept class is not the correct combinatorial dimension to capture learnability in the context of improvements. We further show that the intersection-closed property is sufficient, and in a certain sense necessary for proper learning with respect to arbitrary improvement sets. We leave open the questions of fully characterizing proper PAC learning with improvements and characterizing improper PAC learnability with improvements in terms of the concept class and the improvement sets available to the agents.

\section*{Impact Statement}
Our work centers on individuals' capability to strategically improve in response to decision-making algorithms, and the effect this has on the kinds of guarantees that one can hope to prove for machine learning algorithms. Our findings can potentially help inform decision-makers to design better algorithms.  However, our main contributions are primarily theoretical in nature, and we anticipate no direct social impact of our work.  We nonetheless believe there is a valuable opportunity to leverage our findings in ways that support both decision-makers and the individuals subject to these algorithms. To realize this potential, future research must critically assess the ethical and practical consequences of deploying improvement-aware algorithms, particularly in high-stakes domains.

\section*{Acknowledgments}
This work was supported in part by the National Science
Foundation under grants CCF-2212968, ECCS-2216899, ECCS-2216970, and ECCS-2217023, by the Simons Foundation under the Simons Collaboration on the Theory of Algorithmic Fairness, and by the Office of Naval Research MURI Grant N000142412742.

\clearpage
\bibliographystyle{alpha}
\bibliography{main}

\clearpage
\appendix

\section{Additional Related Work}\label{app:related-work}

\paragraph{Classification of gaming agents.}
Hardt et al.~\cite{hardt2016strategic} formalized the concept of strategic behavior, often referred to as ``gaming,'' where test-set agents who are negatively classified intentionally modify their features---within the bounds of a separable cost function---without altering their target label, to deceive the model into classifying them as positive. They theoretically and empirically showed that their strategy-robust algorithm outperforms the standard SVM algorithm under gaming. However, as the extent of gaming increases, overall model accuracy declines. 
Dong et al.~\cite{reveleadpref} also study a Stackelberg equilibrium where agents strategically respond to classification learners. However, unlike Hardt et al.~\cite{hardt2016strategic}, their model assumes that the learner lacks direct knowledge of the agents' utility functions and instead infers them through observed revealed preferences. Additionally, agents arrive sequentially, and only the true negatives strategically respond to the learner. The learner's objective is to minimize the Stackelberg regret. 
Chen et al.~\cite{Chen2019LearningSL} also study a  learner whose goal is to minimize the Stackelberg regret, where gaming agents arrive sequentially. However, unlike Dong et al.~\cite{reveleadpref} which assumes a convex loss function, they deal with a less smooth agent utility function and learner loss function. They propose the Grinder algorithm, which adaptively partitions the learner's action space based on the agents' responses. 
Performative prediction~\cite{perdomo2021performativeprediction} considers a setting that involves a repeated interaction between the classifier and the agents, and as a result the underlying distribution of the gaming agents may change over time. 

\paragraph{Classification of  agents that can both game and improve.} 
Unlike earlier works in the strategic classification literature, which primarily focus on settings where agents engage in gaming behavior, Kleinberg and Raghavan~\cite{jkleinhowdo} examine a scenario where agents can genuinely improve. In this context, the agent can modify their observable features and true label to achieve a positive model outcome. The authors demonstrate that a learner employing a linear mechanism can encourage rational agents, who optimize their allocation of effort, to prioritize actions that result in meaningful improvement. They show how to achieve this by selecting an evaluation rule that incentivizes a desirable effort profile. 

Ahmadi et al.~\cite{ahmadi2022classificationstrategicagentsgame}, like Kleinberg and Raghavan~\cite{jkleinhowdo}, consider the agents' potentially truthful and actionable responses to the model. However, the primary objective of Ahmadi et al.~\cite{ahmadi2022classificationstrategicagentsgame} is to maximize true positive classifications while minimizing false positives. Notably, for the linear case, they show that the resulting classifier can become non-convex, depending on the agents' initial positions. 

On the other hand, Ahmadi et al.~\cite{ahmadi2022settingfairincentivesmaximize} design reachable sets of target levels such that they can incentivize effort-bounded agents within each group to improve optimally.

\paragraph{Theoretical guarantees of incentive-aware or incentive-compatible classifiers. } 
Zhang and Conitzer~\cite{Zhang_Conitzer_2021} show that the vanilla ERM principle fails under strategic manipulation (gaming), even in simple scenarios that would otherwise be straightforward without gaming. To address this, they propose the concepts of incentive-aware and incentive-compatible ERMs, theoretically analyzing the corresponding classifiers, their sample complexity, and the impact of the VC dimension on the associated hypothesis class. Finally, they extend their analysis to ERM-based learning in environments with transitive strategic manipulation. 

Given adversarial data points wanting to receive an incorrect label, Cullina et al.~\cite{Cullina2018PAClearningIT} theoretically show that the sample complexity of PAC-learning a set of halfspace classifiers does not increase in the presence of adversaries bounded by convex constraint sets and that the adversarial VC dimension can be arbitrarily larger or smaller than the standard VC dimension. 
Sundaram et al.~\cite{strategicPAC} provide theoretical guarantees for an offline, full-information strategic classification framework where data points have distinct preferences over classification outcomes (\(+\) or \(-\)) and incur varying manipulation costs, modeled using seminorm-induced cost functions. They propose a PAC-learning framework for strategic linear classifiers in this setting, providing a detailed analysis of their statistical and computational learnability. Additionally, they extend the concept of the adversarial VC dimension Cullina et al.~\cite{Cullina2018PAClearningIT} to this strategic context. They also show, among other things, that employing randomized linear classifiers can substantially improve accuracy compared to deterministic methods.

\paragraph{Reliable machine learning.} 
The concept of risk aversion in our work is closely related to selective classification or machine learning with a reject option~\cite{yaniv10a,NIPS2017_4a8423d5,Hendrickx2021MachineLW}, where the classifier balances the trade-off between risk and coverage, opting to abstain from making predictions when it is likely to make mistakes. Similarly, risk-averse classification aligns with aspects of learning with one-sided error~\cite{natarajan1987learning,onesidedagree}, particularly positive reliable learners~\cite{kalaiReliable}, which aim to achieve zero false positive errors while minimizing false negatives. Prior work has shown connections between strategic classification and adversarial learning (e.g.\ \cite{strategicPAC}), but it remains an interesting open question if similar connections can be established between learning with improvements and reliable learning in the presence of adversarial attacks~\cite{balcan2022robustly,balcan2023reliable,blum2024regularized}.

\paragraph{Adversarial robustness.} Our results indicate that the properties of the concept class that govern learnability with improvements are different from those established for adversarial robustness. In particular, finite VC dimension is not sufficient for PAC learnability with improvements (Example \ref{ex:example_notsufficient}) as opposed to adversarial robustness, where it is sufficient for (improper) learning~\citep{montasser2019vc}. Furthermore, our risk-averse learning algorithm is very different from techniques proposed in the adversarial literature, including robust Empirical Risk Minimization~\citep{Cullina2018PAClearningIT,attias2022improved}, boosting algorithms whose generalization is based on sample compression schemes \citep{montasser2019vc,montasser2020reducing,attias2022characterization,attias2023adversarially} and approaches developed for neural networks~\citep{AlexAdversarial,Cohen2019CertifiedAR,balcan2023analysis}. Similarly, our algorithm for learning halfspaces on a unit ball differs from classical approaches for learning from malicious noise~\citep{Awasthi2013ThePO,Balcan2020NoiseIC}. However, due to analogies between the frameworks, it is meaningful to ask future research questions along the lines of the directions pursued in the adversarial robustness literature---for example, extensions to unknown or uncertain improvement sets \citep{montasser2021adversarially,lechner2022learning}, and learning with tolerance \citep{ashtiani2023adversarially,raman2023proper,ashtiani2025simplifying}.

\section{Separating PAC Learning with Improvements from the Standard PAC Models}

Here, we further prove that learning with improvements diverges from the behavior of the standard PAC model for binary classification.

\subsection{Separation from Standard PAC Learning Model when $\tilde{\mathcal{H}}\subset{\mathcal{H}}$}\label{app:example-error-gap}

\begin{example}[Error gap when the learner's hypothesis space $\tilde{\mathcal{H}}$ is a strict subset of the concept space ${\mathcal{H}}$ that contains $f^*$]\label{ex:standard_improve}

Let $\mathcal{X}=[-1,1]$ and $\tilde{\mathcal{H}}$ denote the set of concepts including unions of up to $k$ open intervals. The set of possible improvements for any point $x\in \mathcal{X}$ is given by $\mathcal{X} \cap \mathbb{Q}$, where $\mathbb{Q}$ denotes the set of rational numbers. Suppose the data distribution is uniform over $\mathcal{X}$. We set the target concept $f^*$ as follows

\[f^*(x)=
\begin{cases} 
        0, & \text{if } x<0, \text{or } x\in \mathbb{Q}, \\ 
        1, & \text{otherwise},
    \end{cases}
\]

Note that rationals are dense in $[0,1]$ and the set of all rationals have a Lebesgue measure zero.
Thus, on any finite sample $S\in \mathcal{X}^m$, any sampled point $x$ will have a positive label according to $f^*$ iff $x\ge 0$ (with probability 1).
In the standard PAC learning setting, the classifier $\Tilde{f}=\mathbb{I}\{x\in(0,1)\}$  achieves zero error w.r.t.\ the target $f^*$. This is because the misclassification error for points in $\mathbb{Q}$ is  zero.

In our setting where agents have the ability to improve, for an $h\in \tilde{\mathcal{H}}$ which predicts any point $x'$ in $\mathbb{Q}$ as positive,  all negative agents in $[-1,0)$ can move to such a point $x'$ and be falsely classified as positive. This corresponds to a  lower bound of $\frac{1}{2}$ on the error.
Since rationals are dense in the reals, any open interval which $h$ classifies as positive must contain a point in $\mathbb{Q}$.
On the other hand, if $h$ classifies no point as positive, then error rate is again $\frac{1}{2}$ as all the positive points are misclassified.

\end{example}

\section{Missing Proofs from Section \ref{sec:geometric-concepts}\label{app:proofs-geometric}}

We include below missing proofs from Section \ref{sec:geometric-concepts}.

\subsection{Proof of Theorem \ref{thm:thresholds-uniform}: Learning Thresholds with the Uniform Distribution}\label{subsec:threshold-uniform-D}

\begin{proof}
    Let $S\overset{\text{i.i.d.}}{\sim} \mathcal{D}^m$, where $\mathcal{D}$ is the uniform distribution over $[0,1]$. By using a standard calculation of the sample complexity of thresholds, 
    \begin{align}\label{eq:threshold-uniform}
    \begin{split}
    \underset{S\sim \mathcal{D}^m}{\mathbb{P}}\left[\underset{x\sim \mathcal{D}}{\mathbb{P}}\left[h_{t^*}(x)\neq h_{S^+}(x)\right]>\epsilon\right]
    &\leq \prod^m_{i=1}\mathbb{P}\left[x_i \notin [t^*,t^*+\epsilon]\right]     
    \\
    &\leq 
    (1-\epsilon)^m
    \\
    &\leq
    e^{-\epsilon m}
    \\
    &\leq \delta, 
    \end{split}
    \end{align}

    \noindent where the last inequality holds for $m\geq \frac{1}{\epsilon}\log\frac{1}{\delta}$. Since whenever $h_{S^+}$ classifies a point as positive, $h_{t^*}$ also classifies it as positive, 
    any negative point that improves in response to $h_{S^+}$ must move to a true positive point, and the error can only decrease in the improvements setting for the choice of $h_{S^+}$.

    Now, since we allow improvements of distance $r$, the points in the interval $[t_{S^+}-r, t_{S^+}]$ that would have been classified negatively without improvement are able to improve under $h_{S^+}$ (and indeed improve to be positive with respect to $h_{t^*}$) and are thus classified correctly. The points on which $h_{S^+}$ makes mistakes are those in the interval $[t^*, t_{S^+}-r]$. Since $\mathcal{D}$ is uniform, our previous inequality implies that with probability at least $1-\delta$ we have $t_{S^+}\leq t^*+\epsilon$. This implies the that error is at most $\max(\epsilon - r,0)$ with probability $1 - \delta$ as desired.  

\end{proof}

\subsection{Learning Thresholds with An Arbitrary Distribution}\label{subsec:threshold-arbitrary-D}

\begin{theorem}[Thresholds, arbitrary distribution]\label{thm:threshold-arbitrary-D}
    Let the improvement set $\Delta$ be the closed ball with radius $r$, $\Delta(x) =\{ x'\mid |x - x'| \leq r\} $.
    For any distribution $\mathcal{D}$, and any \(\epsilon,\delta \in (0,1/2)\), with probability $1-\delta$,
    \begin{align*}
       \textsc{Loss}_{\mathcal{D}}(h_{S^+},h_{t^*})
       \leq (\epsilon -p(h_{S^+};h_{t^*},\mathcal{D},r))_+,
    \end{align*}
    where 
    \begin{align*}
        p(h_{S^+};h_{t^*},\mathcal{D},r)= \mathbb{P}_{x\sim \mathcal{D}}\left[x\in[t_{S^+}-r,t_{S^+}]\right],
    \end{align*}
    with sample complexity $M = O\left(\frac{1}{\epsilon} \log \frac{1}{\delta}\right).$

    \label{thm:thresholds-general}
\end{theorem}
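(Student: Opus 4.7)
The plan is to mirror the argument used for the uniform distribution case (Theorem \ref{thm:thresholds-uniform}), replacing the length-based reduction $\epsilon - r$ by the distribution-dependent mass $p$ of the improvement region $[t_{S^+}-r,t_{S^+}]$. As in the uniform case, the learner outputs the rightmost consistent threshold $h_{S^+}(x)=\mathbb{I}\{x\ge t_{S^+}\}$; since $h_{S^+}$ is consistent with $S$ and the target $h_{t^*}$ is also consistent with $S$, realizability forces $t_{S^+}\ge t^*$. Therefore the positive region $[t_{S^+},\infty)$ of $h_{S^+}$ is contained in the true positive region $[t^*,\infty)$, so $h_{S^+}$ makes only false-negative (one-sided) errors before improvement.

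The first step is a standard realizable one-sided PAC bound for thresholds that works for arbitrary $\mathcal{D}$. Define $q(\epsilon)=\inf\{q\ge t^*:\mathbb{P}_{x\sim\mathcal{D}}[x\in[t^*,q)]>\epsilon\}$. If $t_{S^+}>q(\epsilon)$, then no training example fell in $[t^*,q(\epsilon)]$, an event of probability at most $(1-\epsilon)^m\le e^{-\epsilon m}$. Choosing $m\ge \tfrac{1}{\epsilon}\log\tfrac{1}{\delta}$ forces, with probability at least $1-\delta$, the inequality
\[
\mathbb{P}_{x\sim\mathcal{D}}\bigl[x\in[t^*,t_{S^+})\bigr]\le \epsilon.
\]

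The second step is the improvement analysis. Case-splitting on an arbitrary test point $x$ using the reaction set $\Delta_{h_{S^+}}(x)$: (i) if $x\ge t_{S^+}$, the agent stays and is correctly labeled since $t_{S^+}\ge t^*$; (ii) if $x\in[t_{S^+}-r,t_{S^+})$, the reaction set is the nonempty interval $[t_{S^+},x+r]$, and every point there is labeled positive by both $h_{S^+}$ and $h_{t^*}$, so no loss is incurred; (iii) if $x<t_{S^+}-r$, the agent cannot reach the positive region of $h_{S^+}$ and stays put, in which case $h_{S^+}(x)=0$, and this matches $h_{t^*}(x)$ except on the interval $[t^*,t_{S^+}-r)$. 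Hence
\[
\textsc{Loss}_{\mathcal{D}}(h_{S^+},h_{t^*}) \;=\; \mathbb{P}_{x\sim\mathcal{D}}\bigl[x\in[t^*,t_{S^+}-r)\bigr],
\]
which is $0$ when $t_{S^+}-r\le t^*$, justifying the $(\cdot)_+$ truncation.

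The final step combines the two. Writing $[t^*,t_{S^+})$ as the disjoint union of $[t^*,t_{S^+}-r)$ and $[t_{S^+}-r,t_{S^+})$ (when the former is nonempty) and using the high-probability PAC inequality yields
\[
\textsc{Loss}_{\mathcal{D}}(h_{S^+},h_{t^*})
\le \epsilon - \mathbb{P}_{x\sim\mathcal{D}}\bigl[x\in[t_{S^+}-r,t_{S^+})\bigr]
\le \epsilon - p,
\]
so $\textsc{Loss}_{\mathcal{D}}(h_{S^+},h_{t^*})\le(\epsilon-p)_+$. The main obstacle is cleanly handling the boundary between $[t^*,t_{S^+}-r)$ and the improvement interval so that the subtraction does go in the right direction (a potential atom at $t_{S^+}$ can only shift mass between the half-open and closed versions of the improvement interval by a measure-zero amount and never weakens the bound); apart from that, all other steps are essentially the uniform-distribution argument with the distance $r$ replaced by the mass $p$ it captures under $\mathcal{D}$.
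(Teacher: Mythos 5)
Your proof follows essentially the same route as the paper's: a one-sided realizable PAC bound giving $\mathbb{P}_{x\sim\mathcal{D}}[x\in[t^*,t_{S^+})]\le\epsilon$ from $m=O(\frac{1}{\epsilon}\log\frac{1}{\delta})$ samples, followed by the observation that points in $[t_{S^+}-r,t_{S^+})$ improve to correctly classified positives, so the residual loss is the mass of $[t^*,t_{S^+}-r)$; your explicit case analysis of the reaction set and your inf-based choice of $q(\epsilon)$ are in fact more careful than the paper's proof, which simply posits a point $t_0$ with $\mathbb{P}[x\in[t^*,t_0]]=\epsilon$.

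The one slip is your parenthetical treatment of a possible atom at $t_{S^+}$: an atom is not a ``measure-zero amount,'' and the final step $\epsilon-\mathbb{P}[x\in[t_{S^+}-r,t_{S^+})]\le\epsilon-p$ goes the wrong way when $\mathbb{P}[\{t_{S^+}\}]>0$, because the half-open interval has mass at most $p$, not at least. What your argument actually establishes is $\textsc{Loss}_{\mathcal{D}}(h_{S^+},h_{t^*})\le\bigl(\epsilon-\mathbb{P}[x\in[t_{S^+}-r,t_{S^+})]\bigr)_+$, which coincides with the stated bound whenever $\mathcal{D}$ places no atom at $t_{S^+}$ (in particular for continuous $\mathcal{D}$); with a large atom at $t_{S^+}$ the closed-interval version can genuinely fail, since $t_{S^+}$ is already classified positive and its mass is never part of the disagreement region $[t^*,t_{S^+})$ being discounted. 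The paper's own proof glosses over exactly the same endpoint issue, so modulo reading $p$ with the half-open interval (or assuming no atom at $t_{S^+}$), your argument is correct and matches the paper's.
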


\begin{proof}

    Let $t_0$ be such that $\mathbb{P}_{x\sim\mathcal{D}}\left[x\in [t^*,t_0]\right]=\epsilon$. By following the same derivation as in Eqn.~\ref{eq:threshold-uniform} and replacing $t^*+\epsilon$ with $t_0$, we get that 
    \begin{align*}
        \underset{S\sim \mathcal{D}^m}{\mathbb{P}}\left[\underset{x\sim \mathcal{D}}{\mathbb{P}}\left[h_{t^*}(x)\neq h_{S^+}(x)\right]\leq\epsilon\right],
    \end{align*}
    with probability $1-\delta$ for $m\geq \frac{1}{\epsilon}\log\frac{1}{\delta}$.
    
    The points in the interval $[t_{S^+}-r, t_{S^+}]$ are able to improve under $h_{S^+}$ and thus classified correctly. The gain to the error of $h_{S^+}$ would be the probability mass of points in $\mathcal{D}$ that fall into this interval, defined as 
    \begin{align*}
        p(h_{S^+};h_{t^*},\mathcal{D},r)= \mathbb{P}_{x\sim \mathcal{D}}\left[x\in[t_{S^+}-r,t_{S^+}]\right].
    \end{align*}
    The points on which $h_{S^+}$ makes mistakes are those in the interval $[t^*, t_{S^+}-r]$. 
    
    We conclude that with probability at least $1-\delta$ we have $t_{S^+}\leq t_0$, and given the improvement of points in $[t^*, t_{S^+}-r]$ we have an error at most $\max(\epsilon -  p(h_{S^+};h_{t^*},\mathcal{D},r),0)$ with probability $1 - \delta$ as desired. 

\end{proof}

\subsection{Proof of Theorem \ref{thm:rectangles}}
\label{app:rectangles}

\begin{proof}

    Let $R_S^c=\clos_{\mathcal{H}_{\text{rec}}}\left(\{x_i \in S : y_i = 1\}\right)$ be the output of the closure algorithm.
    For the standard PAC setting, we have 
    \begin{align*}
        \underset{S\sim \mathcal{D}^m}{\mathbb{P}}\left[\underset{x\sim \mathcal{D}}{\mathbb{P}}\left[R_S^c(x)\neq R^*(x)\right]\leq\epsilon\right],
    \end{align*}
    with probability $1-\delta$ for $m\geq \Omega\left(\frac{1}{\epsilon}\left(d+\log\frac{1}{\delta}\right)\right)$, see  Darnst{\"a}dt.,~\citep{darnstadt2015optimal}. Since whenever $R_S^c$ classifies a point as positive, $R^*$ also classifies it as positive,
    any negative point that improves in response to $R_S^c$ must move to a true positive point and the error can only decrease in the improvements setting for the choice of $R_S^c$.
    
    Now, in order to quantify the gain in error from the improvements, we define the ``outer boundary strip" of $R_S^c$.
    Let the rectangle defined by
    $R_S^c=\prod_{i\in [d]}[a_i,b_i]$.
    The points that are able to improve under $R_S^c$ are exactly fall into the outer boundary strip of size $r$, defined as
    
    \begin{align*}
        \bs(R_S^c,r)
        =
        \prod_{i\in [d]}[a_i+r,b_i+r]
        \setminus
        \prod_{i\in [d]}[a_i,b_i].
    \end{align*}
    
    Note that this is exactly the improvement region of $R_S^c$: $\bs(R_S^c,r)=\ir(R_S^c;R^*,\Delta)$.
    Under general distribution $\mathcal{D}$, the probability mass of the improvement region is
    \begin{align*}
        \mathbb{P}_{x \sim \mathcal{D}}\left[x \in \ir(R_S^c;R^*,\Delta) \right]
        =
        \mathbb{P}_{x\sim\mathcal{D}} \left[x\in \bs(R_S^c,r) \right],
    \end{align*}
    since $R_S^c$ is the smallest rectangle that fits $S$, these points that are able to improve under $R_S^c$  indeed improve to be positive with respect to $R^*$.
    This implies that 
    \begin{align*}
       \textsc{Loss}_{\mathcal{D}}(R_S^c,R^*)
       \leq 
       \max\left(\epsilon - \mathbb{P}_{x \sim \mathcal{D}}\left[x \in \ir(R_S^c;R^*,\Delta) \right]\right),0).
    \end{align*}
    
    Now, for the uniform distribution, we can compute an exact expression of the improvement region. Let $l_i=b_i-a_i$, then 
    \begin{align*}
        \mathbb{P}_{x \sim \mathcal{D}}\left[x \in \ir(R_S^c;R^*,\Delta) \right]
        &=
        \prod_{i\in [d]}(l_i+2r)-\prod_{i\in [d]}l_i.
    \end{align*}
    For $d=2$, we get
    \begin{align*}
      \mathbb{P}_{x \sim \mathcal{D}}\left[x \in \ir(R_S^c;R^*,\Delta) \right]  
      &=
      (l_1+2r)(l_2+2r)-l_1l_2
      \\
      &=
      2r(l_1+l_2)+4r^2.
    \end{align*}
    
\end{proof}

\subsection{Proof of Theorem \ref{thm:hardness-intersection-closed}}
\label{app:hardness-intersection-closed}

\begin{proof}
    For any concept $h\in\mathcal{H}$, let $\mathcal{X}^+_h$ denote the set of points $\{x\in\mathcal{X}\mid h(x)=1\}$ positively classified by $h$. Since $\mathcal{H}':=\mathcal{H}\mid_{\mathcal{X}\setminus\{x'\}}$ is not intersection-closed, there must exist a set $S\subseteq \mathcal{X}\setminus \{x'\}$ such that $\clos_{\mathcal{H}'}(S)\notin \mathcal{H}'$. For the uniformly negative point $x'$, we have its improvement set as $\Delta(x')=\mathcal{X}\setminus S$. For points in $\clos(S)$ we have the improvement set as the empty set. We set the data distribution $\mathcal{D}$ as the uniform distribution over $\clos(S)\cup\{x'\}$. Let $h_1\in\mathcal{H}$ be a minimally consistent classifier w.r.t.\ $S$, i.e.\ if $h'\in\mathcal{H}$ and $\mathcal{X}^+_{h'}\subseteq \mathcal{X}^+_{h_1}$, then $h'=h_1$. By choice of $S$, there is a point $x_1\in \mathcal{X}^+_{h_1}\setminus \clos_{\mathcal{H}'}(S)$. By the definition of closure of $S$, there must exist $h_2\in\mathcal{H}$ consistent with $S$ (assumed minimally consistent WLOG) such that $h_2(x_1)=0$. Also, since $h_1$ was chosen to be minimally consistent, there must exist $x_2\in \mathcal{X}^+_{h_2}$ such that $h_1(x_2)=0$. We will set the target concept $f^*$ to one of $h_1$ or $h_2$.
    
    Now any learner that picks a concept not consistent with $S$ will clearly suffer a constant error on the points in $\clos(S)$ which are incorrectly classified as negative and not allowed to improve. Suppose therefore that the learner selects a hypothesis $h$ consistent with $S$. Let $\tilde{h}$ denote a classifier which is minimally consistent with $S$ and $\mathcal{X}^+_{\tilde{h}}\subseteq \mathcal{X}^+_{h}$ ($\tilde{h}$ could possibly be the same as $h$). If $\tilde{h}=h_1$ (resp.\ $\tilde{h}=h_2$), the  learner suffers a constant error as $x'$ can improve to the false positive $x_1$ (resp.\ $x_2$) when the target concept is $h_2$ (resp.\ $h_1$). Else, there must exist $\tilde{x}\in \mathcal{X}^+_{\tilde{h}}$ such that $h_1(\tilde{x})=0$, since $h_1$ was chosen to be minimally consistent (and likewise for $h_2$). $h(\tilde{x})=1$ in this case, and $x'$ can now falsely ``improve'' to $\tilde{x}$. Since the learner has no way of knowing from the sample whether the target is $h_1$ or $h_2$, it must suffer a constant error for any $h$ it selects from $\mathcal{H}$. 

\end{proof}

\section{Missing Proofs and Additional Definitions from Section \ref{sec:graph-model} \label{app:graph}}

We include below additional definitions and complete proofs for results in Section \ref{sec:graph-model}.

\subsection{Additional Definitions}

\begin{definition}
    The shortest path metric \( d_G : V \times V \to [0, |V|+1) \) is defined as follows: 
    
    \[
    d_G(x, x') =
    \begin{cases}
    \min \left\{ k \ \middle| \ \exists \ (x_0=x, x_1, \dots, x_k=x') \subseteq V, \ 
    (x_{i-1}, x_i) \in E \ \forall i \in [k] \right\}, & \text{if a path exists,} \\
    |V|+1, & \text{if no path exists.}
    \end{cases}
    \]
    
    \noindent Here, \( k \) is the length of the shortest path between \(x\) and \(x'\) in terms of the number of edges. If there is no path connecting \(x\) and \(x'\), the distance is defined as \( |V|+1 \).
    \noindent The shortest path metric satisfies the following properties:
    \begin{itemize}
        \item \textbf{Non-negativity:} \( d_G(x, x') \geq 0 \) for all \( x, x' \in V \), with \( d_G(x, x) = 0 \).
        \item \textbf{Symmetry:} \( d_G(x, x') = d_G(x', x) \) for all \( x, x' \in V \), since \( G \) is undirected.
        \item \textbf{Triangle inequality:} \( d_G(x, x') \leq d_G(x, z) + d_G(z, x') \) for all \( x, x', z \in V \).
    \end{itemize}
    
     \noindent Our results in Section \ref{sec:graph-model} extend to the more general improvement function $\Delta(x) = \{x' \in V \ | \ d_G(x, x') \le r\}$ by applying our arguments to $G^r$, the $r$th power of $G$.
    
    \label{shortestpathdef}
\end{definition}

\begin{definition}[Dominating Set]
    Let \( G = (V, E) \) be an undirected graph, where \( V \) is the set of vertices and \( E \subseteq V \times V \) is the set of edges. A subset of vertices \( S \subseteq V \) is called a dominating set if every vertex in \( V \) is either in \( S \) or adjacent to at least one vertex in \( S \). Formally, \( S \) is a dominating set if:
    \[
    \forall x \in V, \quad x \in S \ \text{or} \ \exists x' \in S \ \text{such that} \ (x, x') \in E.
    \]
\label{defDS}
\end{definition}

\subsection{Enabling Improvement Whenever It Helps}\label{app:improvwhenhelps}

\begin{theorem}
    Let \( G = (V, E) \) be an undirected graph with \( n = |V| \) vertices, and let \( f^*: V \to \{0, +1\} \) denote the ground truth labeling function. Define: \( V^+ = \{x \in V \mid f^*(x) = +1\} \), the set of positive vertices. Define \( V^- = \{x \in V \mid f^*(x) = 0\} \) the set of negative vertices, and \( N = \{x \in V^- \mid \exists x' \in V^+ \text{ such that } (x, x') \in E\} \) denote the set of negative vertices that have a positive neighbor. Let \( d_{\min}^N = \min_{x \in N} |\{x' \in V^+ \mid (x, x') \in E\}| \) denote the minimum number of positive neighbors of vertices in $N$.  Assume that the data distribution \(\mathcal{D}\) is uniform on \( V \). For any $\delta>0$, 
    and training sample $S \overset{\text{i.i.d.}}{\sim} \mathcal{D}^m$ of size  $m =  O \left(\frac{n (\log n + \log \frac{1}{\delta})}{d_{\min}^N}\right)$, there exists a learner that  outputs a hypothesis $h$ such that $\textsc{Loss}^{\textsc{e}}_{\mathcal{D}}(h,f^*)=0$, with probability at least $1-\delta$ over the draw of $S$. Moreover, there exists a graph $G$ for which any learner that always outputs $h$ with $\textsc{Loss}^{\textsc{e}}_{\mathcal{D}}(h,f^*)=0$ for any $\mathcal{D},f^*$ must see at least $\Omega\left(\frac{n}{d_{\min}^N} \log \frac{n}{d_{\min}^N}\right)$ labeled points in the training sample, with high constant probability.
\label{thm:enable-improvement}
\end{theorem}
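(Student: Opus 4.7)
The plan is to mirror the proof of Theorem~\ref{DSSamplecomplexity}, replacing the coverage goal of ``positive vertices'' with ``negative vertices that have at least one positive neighbor''. The proposed learner is again the conservative risk-averse rule $h_S(x) = \mathbb{I}\{x \in S^+\}$ that labels as positive exactly the positive vertices seen in $S$. The key reduction is that, for this learner, zero enabling loss follows once every $x \in N$ has at least one of its positive neighbors included in $S$, and a symmetric coupon-collector-style argument on a suitable family of disjoint stars gives the matching lower bound.

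For the upper bound, first reduce the enabling loss to a purely combinatorial coverage condition. A vertex $x$ can contribute to $\textsc{Loss}^{\textsc{e}}$ only when $f^*(x)=0$, in which case $h_S(x)=0$ as well since $x \notin S^+$. Hence $\Delta_{h_S}(x) \neq \{x\}$ iff some neighbor $x'$ of $x$ lies in $S^+$, while $\Delta_{f^*}(x) \neq \{x\}$ iff $x$ has a positive neighbor under $f^*$, i.e., $x \in N$. The indicators in the loss definition therefore disagree exactly at those $x \in N$ whose positive neighborhood misses $S$. The probability that a fixed $x \in N$ has none of its at least $d_{\min}^N$ positive neighbors drawn in $m$ iid uniform samples is bounded by $(1 - d_{\min}^N/n)^m \leq e^{-m d_{\min}^N / n}$. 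Union-bounding over $|N| \leq n$ vertices and forcing the failure probability to at most $\delta$ yields the stated $m = O\bigl(n(\log n + \log(1/\delta))/d_{\min}^N\bigr)$.

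For the lower bound, take $G$ to be the disjoint union of $k = n/(d_{\min}^N + 1)$ stars, each with one center and $d_{\min}^N$ leaves, and let $\mathcal{D}$ be uniform on $V$. Consider two kinds of adversarial targets: a ``base'' target $f^*$ labelling every center negative and every leaf positive, and for each $i \in [k]$ an ``erased'' target $\tilde{f}_i^{\,*}$ that agrees with $f^*$ outside star $i$ but labels both the center and the leaves of star $i$ as negative. For $k \geq 2$ one checks that $d_{\min}^N$ for the induced positive subgraph equals the value in the statement under either choice. The crucial observation is that at the center $c_i$, zero enabling loss under $f^*$ requires $\Delta_h(c_i) \neq \{c_i\}$, while zero enabling loss under $\tilde{f}_i^{\,*}$ requires $\Delta_h(c_i) = \{c_i\}$. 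These conditions are mutually exclusive: they correspond respectively to $h$ labelling $c_i$ as $0$ with at least one leaf labelled $1$, versus $h(c_i) = 1$ or every leaf of star $i$ labelled $0$. Consequently, any learner whose $h$-restriction to star $i$ does not depend on samples from star $i$ must fail for at least one of the two targets. By a standard coupon-collector argument, with high constant probability any sample of size $o(k \log k)$ misses some star, forcing failure; this yields the claimed $\Omega\bigl(\tfrac{n}{d_{\min}^N} \log \tfrac{n}{d_{\min}^N}\bigr)$ lower bound.

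The main technical subtlety lies in the lower bound, because the learner need not be risk-averse: one must rule out every possible labelling of an unseen star. The adversarial pair $\{f^*, \tilde{f}_i^{\,*}\}$ is designed precisely so that neither ``enabling improvement at $c_i$'' nor ``forbidding it'' is a safe default without a sample from star $i$, trapping any fixed hypothesis against at least one of the two targets. The remainder of both directions is a routine adaptation of the $d_{\min}^+$-based analysis of Theorem~\ref{DSSamplecomplexity}.
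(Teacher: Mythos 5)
Your proposal is correct and follows essentially the same route as the paper: the upper bound reduces zero enabling loss for the conservative classifier $h_S(x)=\mathbb{I}\{x\in S^+\}$ to covering every $x\in N$ by a sampled positive neighbor and applies the same union bound, and the lower bound uses the same disjoint-star construction with $k=n/(d_{\min}^N+1)$ components, the same two mutually exclusive target labelings at the center of an unsampled star, and the same coupon-collector argument. Your write-up is slightly more explicit than the paper's (naming the two adversarial targets and noting the $k\ge 2$ condition), but it is the same proof.
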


\begin{proof}
    The proof of the upper bound is technically similar to the proof of Theorem \ref{DSSamplecomplexity}. Essentially, to ensure that $\textsc{Loss}^{\textsc{e}}=0$, we need to {\it cover} all the vertices in $N$ by some vertices in $V^+$. The probability that any fixed node in $N$ is covered by a random sample can be lower-bounded in terms of $d_{\min}^N$ as
    \[
p_{\text{cover}}(x) \geq \frac{d_{\min}^N }{n}.
\]
 Using the same argument as in Theorem \ref{DSSamplecomplexity}, we obtain an upper bound of $m =  O \left(\frac{n (\log n + \log \frac{1}{\delta})}{d_{\min}^N}\right)$ on the sample complexity of the classifier $h$ which outputs exactly the positively labeled points in its sample as positive to guarantee that $\textsc{Loss}^{\textsc{e}}_{\mathcal{D}}(h, f^*) = 0$ with probability at least $1-\delta$.

 To establish the lower bound, consider a graph $G=(V,E)$ with two types of nodes, i.e.\ $V=V_1\cup V_2$, $|V_1|=k$, $|V_2|=n-k$. $f^*$ labels all nodes in $V_1$ as negative. Each node $x_i\in V_1$ has  $d_{\min}^N=\frac{n-k}{k}$ neighboring nodes $\Delta_i$ in $|V_2|$ and the sets of these neighbors are pairwise disjoint. Now, suppose our training sample $S$ does not contain any point in $\Delta_i$ for some $i\in[k]$. If the learned hypothesis $h$ predicts any point  $\Delta_i$ as positive, we have $h(x_i)\ne\{x_i\}$ but if $f^*$ labels all points in $\Delta_i$ negative, $f^*(x_i)=\{x_i\}$ and we incur loss corresponding to $x_i$. Similarly, if $h$ labels all points in $\Delta_i$ as negative then $h(x_i)=\{x_i\}$ but we can label $\Delta_i$ consistent with $S$ such that $f^*(x_i)\ne\{x_i\}$.

\begin{figure}[ht]
\vskip 0.2in
\begin{center}
\centerline{\includegraphics[width=0.45\columnwidth]{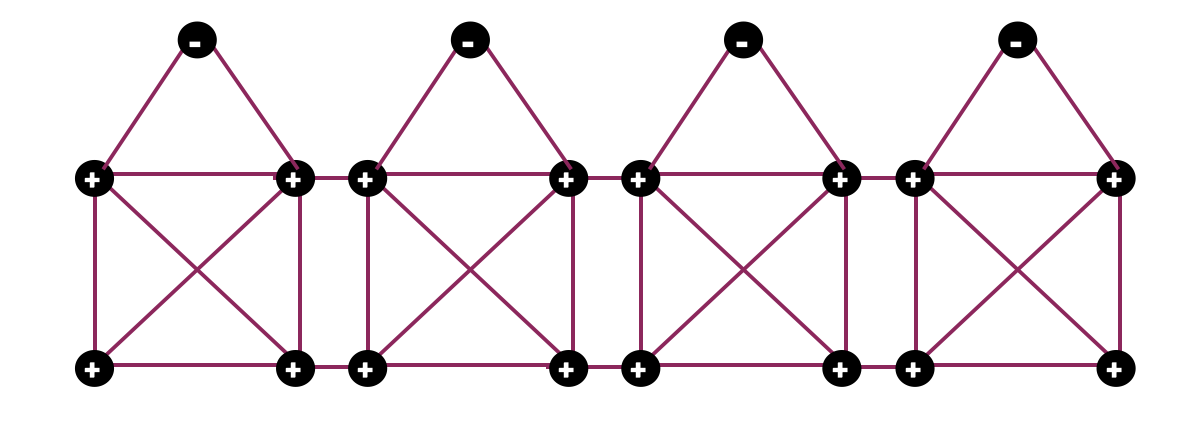}}
\caption{The graph $G$ used to establish our lower bound in Theorem \ref{thm:enable-improvement}.}
\label{posnegcl}
\end{center}
\vskip -0.2in
\end{figure}

 Therefore it is sufficient  to determine a lower bound on the number of points required to ensure that every $\Delta_i$ has at least one of its vertices included in the training sample $S$. Using the standard coupon collector analysis, the number of trials needed to collect $k = \frac{n}{d_{\min}^N+1}$ coupons is $\Omega(k\log k)$ with high constant probability. 
\end{proof}

\begin{theorem}
    Let \( G = (V, E) \) be an undirected graph with \( n = |V| \) vertices, and let \( f^*: V \to \{0, +1\} \) denote the ground truth labeling function. Define \( V^+ = \{x \in V \mid f^*(x) = +1\} \), the set of positive vertices and $d_{\min}^+$ denote the minimum degree of a vertex in $V^+$ in the subgraph of $G$ induced by $V^+$. Define \( V^- = \{x \in V \mid f^*(x) = 0\} \) the set of negative vertices, and \( N = \{x \in V^- \mid \exists x' \in V^+ \text{ such that } (x, x') \in E\} \) denote the set of negative vertices that have a positive neighbor. Let \( d_{\min}^N = \min_{x \in N} |\{x' \in V^+ \mid (x, x') \in E\}| \) denote the minimum number of positive neighbors of vertices in $N$.  Assume that the data distribution \(\mathcal{D}\) is uniform on \( V \). For any $\delta>0$,  
    and training sample $S \overset{\text{i.i.d.}}{\sim} \mathcal{D}^m$ of size  $m =  O \left(\frac{n (\log n + \log \frac{1}{\delta})}{\min\{d_{\min}^N,d_{\min}^+\}}\right)$, there exists a learner that  outputs a hypothesis $h$ such that $\textsc{Loss}_{\mathcal{D}}(h,f^*)=0$ and  $\textsc{Loss}^{\textsc{e}}_{\mathcal{D}}(h,f^*)=0$, with probability at least $1-\delta$ over the draw of $S$. Moreover, there exists a graph $G$ for which any learner that always outputs $h$ with $\textsc{Loss}_{\mathcal{D}}(h,f^*)=\textsc{Loss}^{\textsc{e}}_{\mathcal{D}}(h,f^*)=0$ for any $\mathcal{D},f^*$ must see at least $\Omega\left(\max\{\frac{n}{d_{\min}^+} \log \frac{n}{d_{\min}^+}, \frac{n}{d_{\min}^N} \log{\frac{n}{d_{\min}^N}}\}\right)$ labeled points in the training sample, with high constant probability.
\label{thm:enable-improvement-and-zero-loss}
\end{theorem}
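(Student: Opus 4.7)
The plan is to combine the proofs of Theorems \ref{DSSamplecomplexity} and \ref{thm:enable-improvement} by showing that the same conservative classifier $h_S(x) = \mathbb{I}\{x \in S^+\}$, which positively classifies exactly the positive examples in the sample, simultaneously achieves both loss conditions once the sample is large enough. Specifically, I would show that $h_S$ achieves $\textsc{Loss}_{\mathcal{D}}(h_S,f^*)=0$ and $\textsc{Loss}^{\textsc{e}}_{\mathcal{D}}(h_S,f^*)=0$ provided the following two coverage conditions hold simultaneously on $S$: (i) every $x \in V^+$ is either in $S^+$ or has some neighbor $x' \in V^+$ in $S^+$ (this guarantees zero standard loss via Theorem \ref{thm:dominating-set-teaching}, as in the argument behind Theorem \ref{DSSamplecomplexity}); and (ii) every $x \in N$ has at least one positive neighbor $x' \in V^+$ in $S^+$ (this guarantees zero enabling loss, as in Theorem \ref{thm:enable-improvement}, since then $\Delta_{h_S}(x) \ne \{x\}$ exactly when $\Delta_{f^*}(x) \ne \{x\}$ for negative $x$).

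For the upper bound, I would then lower-bound the per-vertex coverage probabilities: for $x \in V^+$, one draw covers $x$ with probability at least $(d_{\min}^+ + 1)/n$, and for $x \in N$, one draw covers $x$ with probability at least $d_{\min}^N / n$. Taking a union bound over at most $n$ vertices in each group and over the two events, and choosing
\[
m = O\!\left(\frac{n(\log n + \log(1/\delta))}{\min\{d_{\min}^+ + 1,\, d_{\min}^N\}}\right),
\]
ensures that both (i) and (ii) hold with probability at least $1-\delta$, establishing the claimed sample complexity (up to absorbing constants into the $O(\cdot)$).

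For the lower bound, the key observation is that to witness $\Omega\!\bigl(\max\{A,B\}\bigr)$, it is enough to exhibit a graph where one of the two terms dominates and matches its lower bound. I would therefore exhibit two constructions and use whichever is larger for the given parameters: the disjoint-clique construction from Theorem \ref{DSSamplecomplexity} (where $N = \emptyset$) witnesses the $\Omega\!\bigl(\tfrac{n}{d_{\min}^+}\log\tfrac{n}{d_{\min}^+}\bigr)$ term via coupon collector applied to the $n/(d_{\min}^+{+}1)$ cliques, since failing to cover any clique permits an adversarial $f^*$ that forces constant loss on that clique; and the construction from Theorem \ref{thm:enable-improvement} (positive ``bundles'' attached to each negative vertex in $N$) witnesses the $\Omega\!\bigl(\tfrac{n}{d_{\min}^N}\log\tfrac{n}{d_{\min}^N}\bigr)$ term by the analogous argument for enabling loss. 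Since the required joint condition is stronger than either individual condition, both lower bounds carry over directly.

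The main obstacle will be verifying that the two coverage requirements genuinely compose into the joint one using the \emph{same} sample $S$ and the \emph{same} classifier $h_S$, rather than requiring two separately designed learners: one must check that enabling improvement for vertices in $N$ via $h_S$ is equivalent to some positive neighbor landing in $S^+$ (not merely in $S$), which is where the ``conservative'' choice of classifier is essential. A secondary subtlety, which should be addressable by inspecting the two lower-bound constructions, is that the theorem asserts the existence of a single graph $G$; this is handled by choosing whichever of the two constructions attains the larger of the two terms for the prescribed $(d_{\min}^+, d_{\min}^N)$, since $\max\{A,B\} \le A + B \le 2\max\{A,B\}$ absorbs the weaker bound into the asymptotic notation.
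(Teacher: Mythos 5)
Your proposal is correct and takes essentially the same route as the paper: the paper's proof of this theorem is literally a one-line citation of Theorems \ref{DSSamplecomplexity} and \ref{thm:enable-improvement}, i.e., run the same conservative classifier $h_S$ and union-bound the two coverage requirements for the upper bound, and reuse the two lower-bound constructions for the $\Omega(\max\{\cdot,\cdot\})$ claim. Your added care about using a single sample/classifier and about which construction witnesses the max is a faithful elaboration of what the paper leaves implicit.
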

\begin{proof}
    See Theorems \ref{DSSamplecomplexity} and \ref{thm:enable-improvement}.
\end{proof}

\subsection{Proof of Theorem \ref{thm:dominating-set-teaching}: Teaching a Risk-averse Student}\label{subsec:domset}

\begin{proof}
    Since \( S^+ \) is a dominating set of \( G^+ =(V^+,E^+)\), for any \( x \in V^+ \), either \( x \in S^+ \) or there exists \( x' \in S^+ \) such that \( (x, x') \in E^+ \). In the first case, $h_{S^+}(x)=1$ and therefore $\Delta_{h_{S^+}}(x)=\{x\}$. Thus,  $h_{S^+}(x)=1=f^*(x)$ implies that $\textsc{Loss}(x;h_{S^+},f^*)=0$ in this case.
    In the second case, \( x \notin S^+ \), but there exists a neighbor $\tilde{x}\in \Delta(x)$ such that $\tilde{x}\in S^+$ by the definition of $S^+$. Thus, for any point  $x'\in\Delta_{h_{S^+}}(x)\subseteq S^+$, we have that $h_{S^+}(x')=1=f^*(x')$, ensuring that $\textsc{Loss}(x;h_{S^+},f^*)=0$ in this case as well.

    For \( x \in V \setminus V^+ \), if \( x \) has no positive neighbors in $S^+$, \( \Delta_{h_{S^+}}(x) = \{x\} \) because there is no neighboring vertex \( x' \in \Delta(x) \) that would induce a reaction. Thus, $h_{S^+}(x)=0=f^*(x)$ in this case, implying the loss $\textsc{Loss}(x;h_{S^+},f^*)$ on $x$ is zero.  Finally, if \( x \in V \setminus V^+\) has positive neighbors contained in the dominating set, i.e., \( \Delta(x) \cap S^+ \ne \emptyset\), then \( h_{S^+}(x') = +1 \). 
    The reaction set \( \Delta_{h_{S^+}}(x) \) ensures that \( x \) moves to one of these neighbors. Specifically, the reaction set allows \( x \) to improve and move to a neighboring vertex \( x' \in \Delta(x) \cap S^+ \) such that \( f^*(x') = +1 \). Thus, \( h(x') = f^*(x') = +1 \) for any $x'\in\Delta_{h_{S^+}}(x)$ implying $\textsc{Loss}(x;h_{S^+},f^*)=0$.
    
\end{proof}

\clearpage
\section{Evaluation: Supplementary Details}\label{app:sec_eval}

This section includes supplementary details on the datasets and classifiers used, how improvement is done and results of the empirical evaluations.

\subsection{Datasets}\label{app:sec_datasets}

We utilize three real-world datasets: the Adult Income dataset from UCI and the Open University Learning Analytics Dataset (OULAD) and Law School datasets sourced from Le Quy et al.~\cite{le2022surveygit}. 
The preprocessing steps for all the datasets, similar to those described in  Le Quy et al.~\cite{le2022surveygit}, include removing missing data and applying label encoding to categorical variables.
In addition to the real-world datasets, we generate an 8-dimensional synthetic dataset with increased separability (\textit{class\_sep} = \(4\)) using the \textit{make\_classification} function from Scikit-learn. We clean the dataset by removing duplicates and outliers, with Z-scores applied with thresholds (\(0.9\) for class \(0\) and \(0.8\) for class \(1\)). The cleaned synthetic dataset is then balanced using SMOTE \cite{smote2002} to ensure class balance.

Statistical details of the datasets, including test/train sizes and number of features, are in Table~\ref{tab:datasets_info}.  We examine the structural variations within datasets to gain deeper insights into how the characteristics influence the impact of improvements on error drop rates. Figure~\ref{fig:orig_y_hist} highlights the target distribution across training datasets: the Adult dataset has a higher proportion of negative examples, whereas the OULAD and Law School datasets have a higher percentage of positive examples. The synthetic dataset, by contrast, is balanced.
Figure~\ref{fig:jumbleness} and \ref{fig:orig_knn} illustrate dataset separability properties, showing that the synthetic dataset (\(k\)-NN error: \(0.1016\)) and the Law School dataset (\(k\)-NN error: \(0.1010\)) have the highest separability. However, as Figure~\ref{fig:orig_knn} shows, the Adult and synthetic datasets exhibit the lowest false positive (FP) outlier rates.
\renewcommand{\arraystretch}{1.0} 
\setlength{\abovecaptionskip}{3.6pt}
\setlength{\belowcaptionskip}{3.6pt}
\begin{table*}[htbp]
\scriptsize
\centering
\begin{tabular}{p{1.8cm}p{3.7cm}p{0.6cm}p{1.7cm}p{6cm}}
    \toprule
    \textbf{Dataset} & \textbf{Target variable} & \(d\) & \textbf{Train/Test} & \textbf{Improvable features} \\ 
    \midrule
    Adult & \(\{1(>50K), \ 0(\leq50K)\}\) & \(14\) & \(21113/9049\) & \{``hours-per-week, capital-gain, capital-loss, fnlwgt, educational-num, workclass, education, occupation''\} \\ 
    OULAD & \(\{1(\text{pass}), \ 0(\text{fail})\}\) & \(11\) & \(15093/6469\) & \{``code\_module, code\_presentation, imd\_band, highest\_education, num\_of\_prev\_attempts, studied\_credits''\} \\ 
    Law school & \(\{1(\text{pass}), \ 0(\text{fail})\}\) & \(11\) & \(14558/6240\) & \{``decile1b, decile3, lsat, ugpa, zfygpa, zgpa, fulltime, fam\_inc, tier''\} \\ 
    Synthetic & \(\{1(\text{positive}), \ 0(\text{negative})\}\) & \(8\) & \(1561/669\) & \{all features are used\} \\ 
    \bottomrule
\end{tabular}  
\caption{Details of the tabular datasets, both synthetic and real, used in the experiments.}
\label{tab:datasets_info}
\end{table*}
\begin{figure}[!b]
    \centering
    \subfloat[Adult\label{fig:adult_y_hist}]{
        \begin{minipage}{0.22\linewidth}
            \centering
            \includegraphics[width=\linewidth]{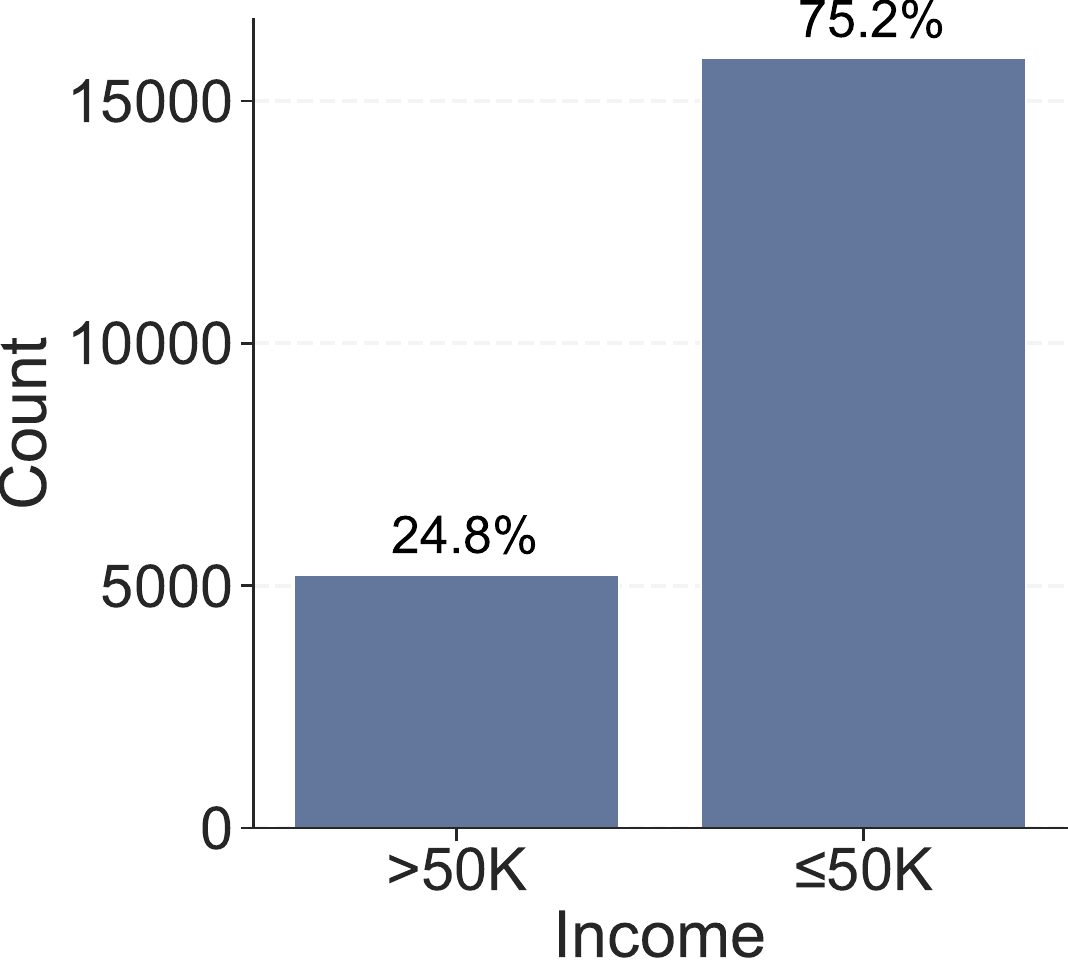}
        \end{minipage}
    }%
    \hfill
    \subfloat[OULAD\label{fig:oulad_y_hist}]{
        \begin{minipage}{0.22\linewidth}
            \centering
            \includegraphics[width=\linewidth]{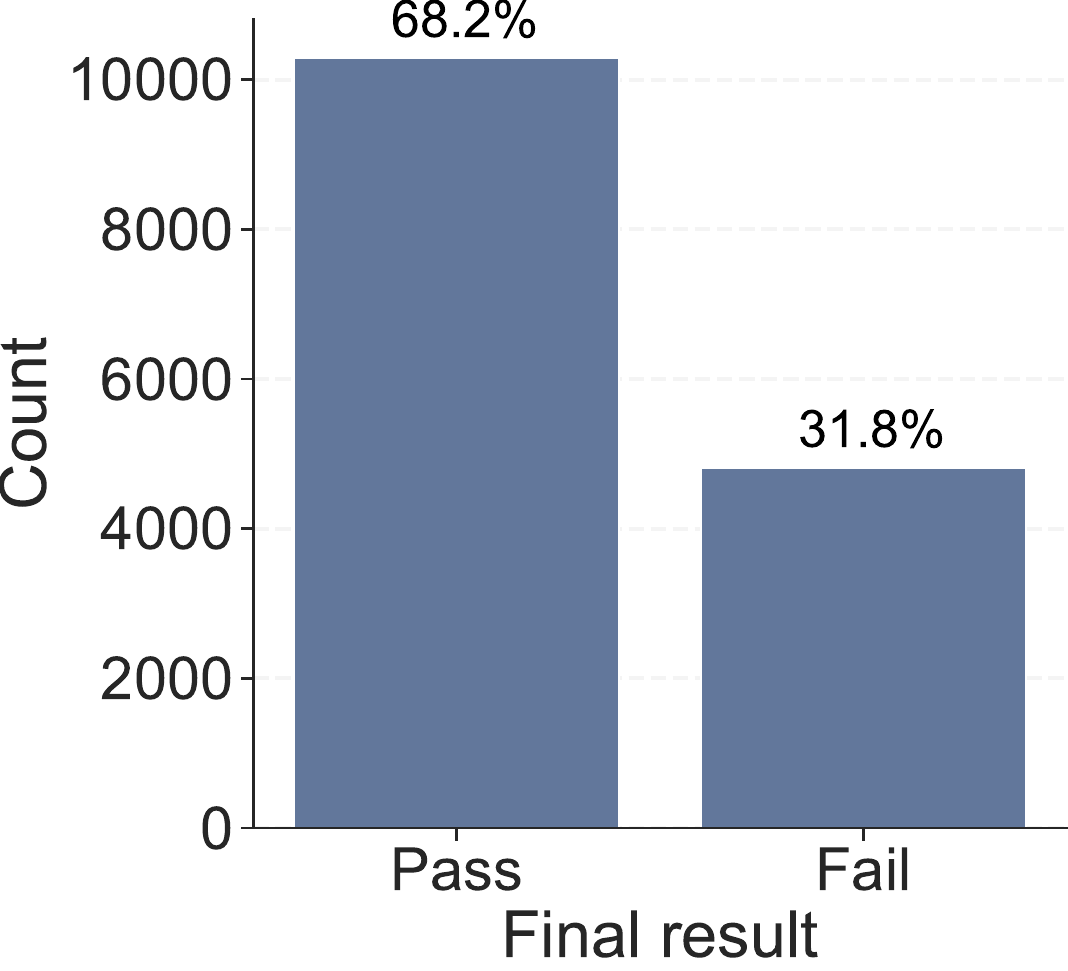}
        \end{minipage}
    }%
    \hfill
    \subfloat[Law School\label{fig:law_y_hist}]{
        \begin{minipage}{0.22\linewidth}
            \centering
            \includegraphics[width=\linewidth]{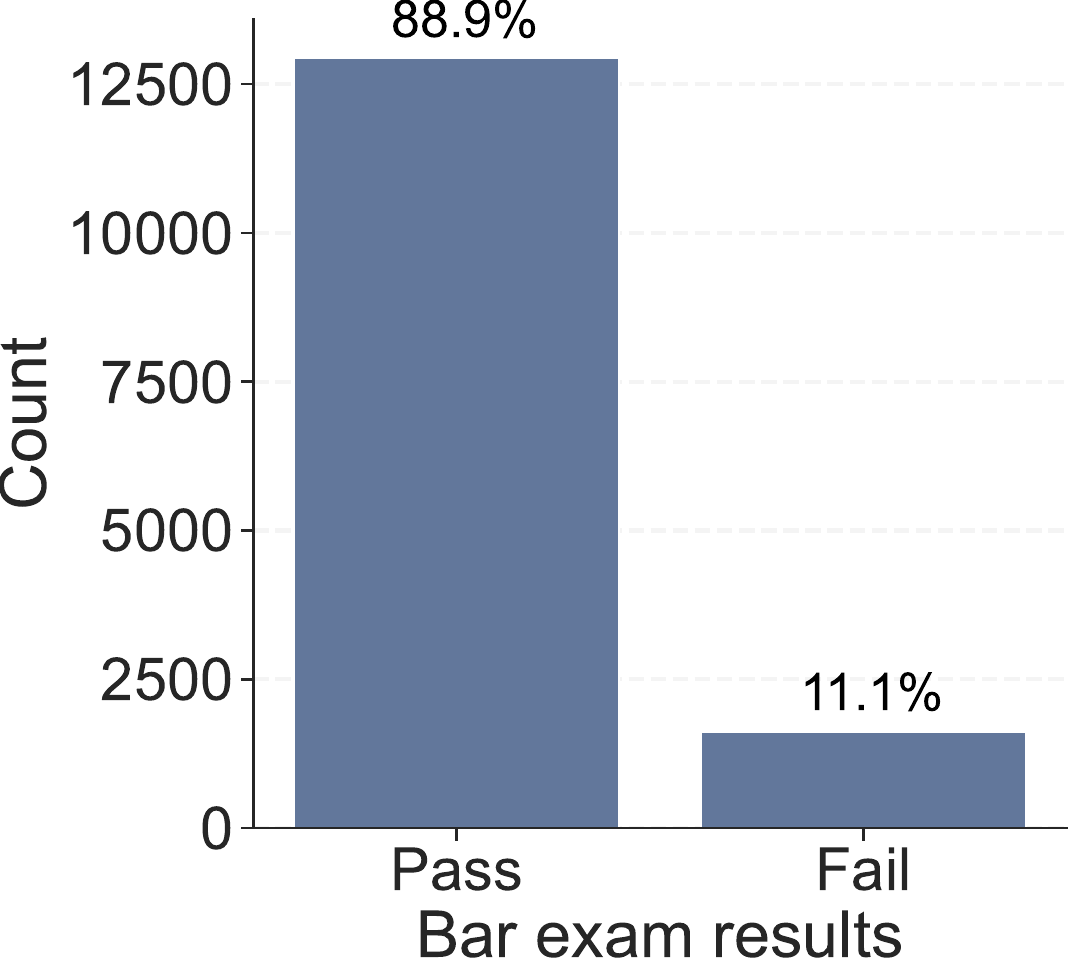}
        \end{minipage}
    }
    \hfill
    \subfloat[Synthetic\label{fig:syn2_y_hist}]{
        \begin{minipage}{0.22\linewidth}
            \centering
            \includegraphics[width=\linewidth]{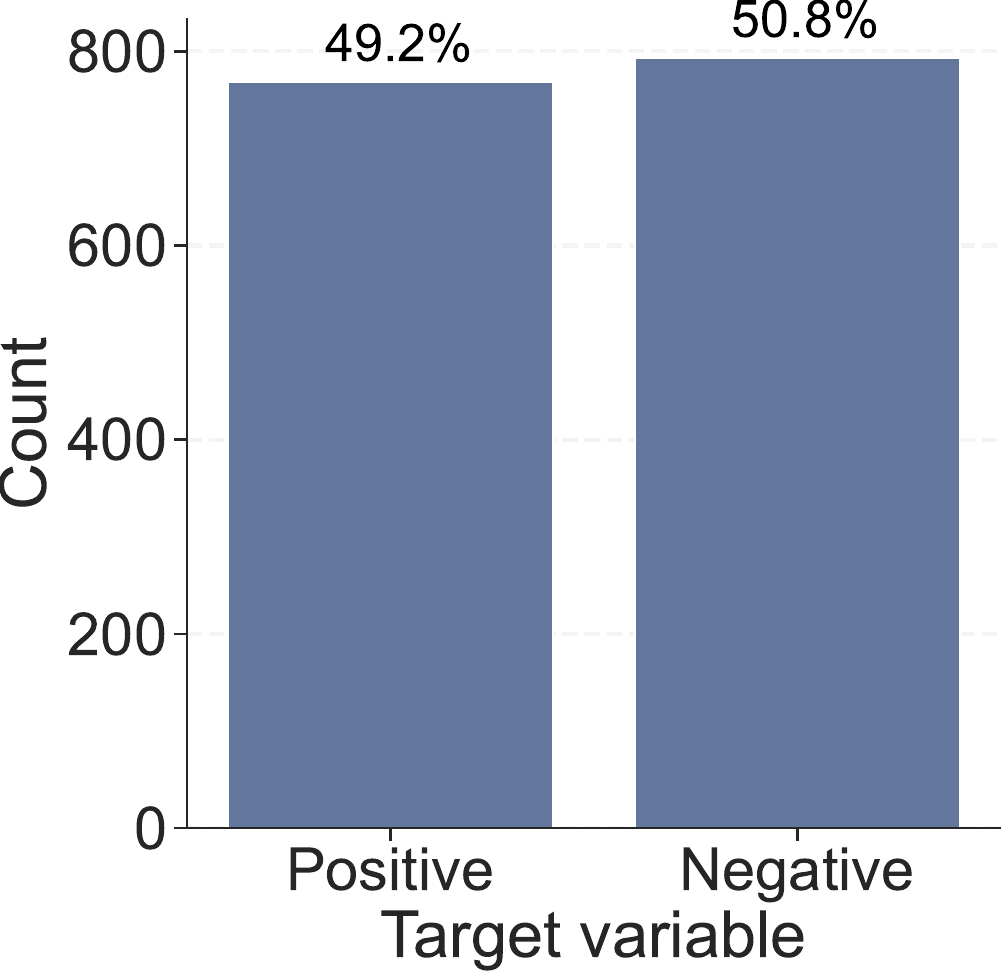}
        \end{minipage}
    }%
    \caption{Target variable distributions of synthetic and real-world train datasets used in the experiments for the (\subref{fig:adult_y_hist}) Adult, (\subref{fig:oulad_y_hist}) OULAD,
    (\subref{fig:law_y_hist}) Law School, and (\subref{fig:syn2_y_hist}) Synthetic datasets.}
    \label{fig:orig_y_hist}
\end{figure}
\begin{figure}[htbp]
    \centering
    \includegraphics[width=0.9\linewidth]{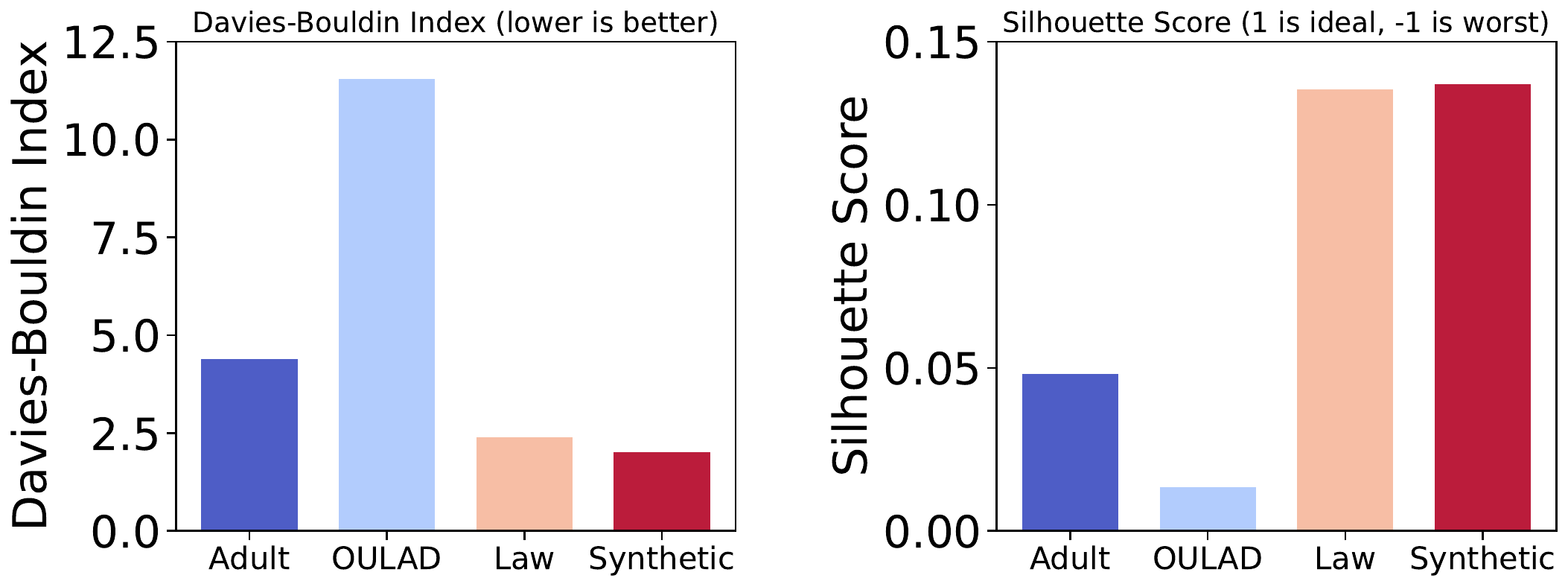}
    \caption{Inspection of clusteredness and class separation using Davies–Bouldin index \cite{davies_bouldin} and  Silhouettes scores \cite{ROUSSEEUW198753,silhouette}\label{fig:jumbleness}.}
\end{figure}
\begin{figure}[t!]
    \centering
    \subfloat[Adult\label{fig:adult_knn}]{
        \begin{minipage}{0.45\linewidth}
            \centering
            \includegraphics[width=\linewidth]{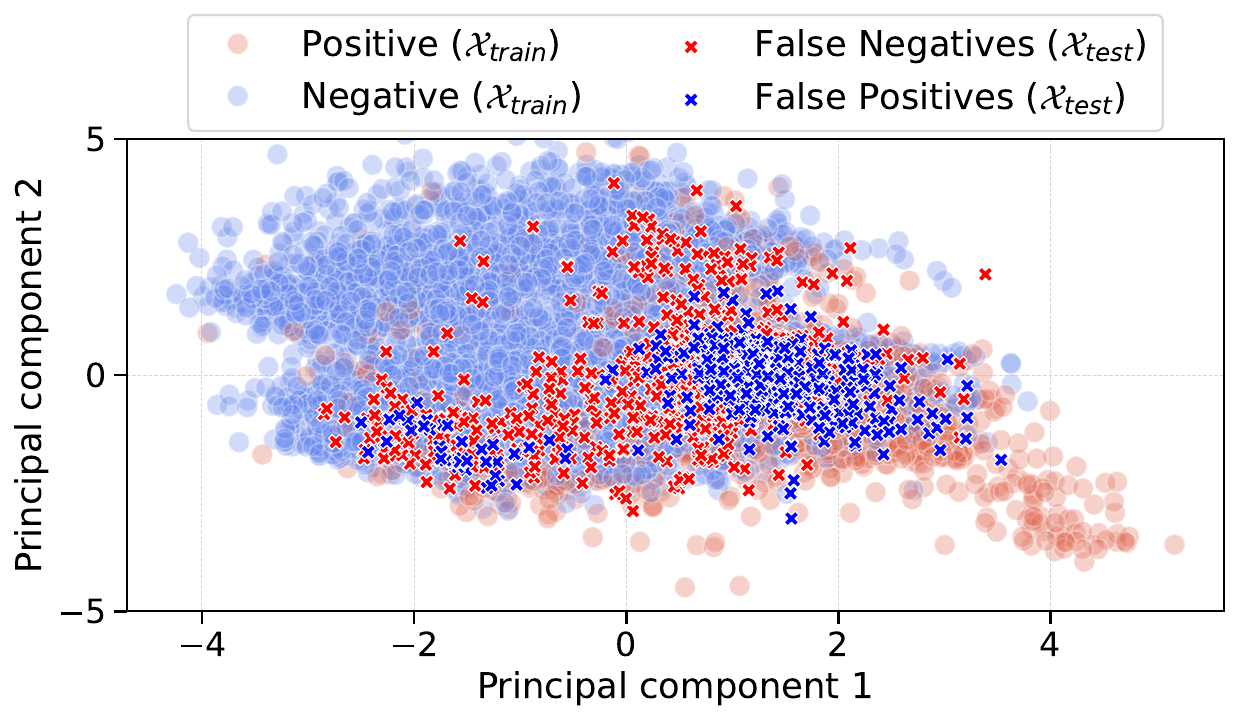}
        \end{minipage}
    }%
    \hfill
    \subfloat[OULAD\label{fig:oulad_knn}]{
        \begin{minipage}{0.45\linewidth}
            \centering
            \includegraphics[width=\linewidth]{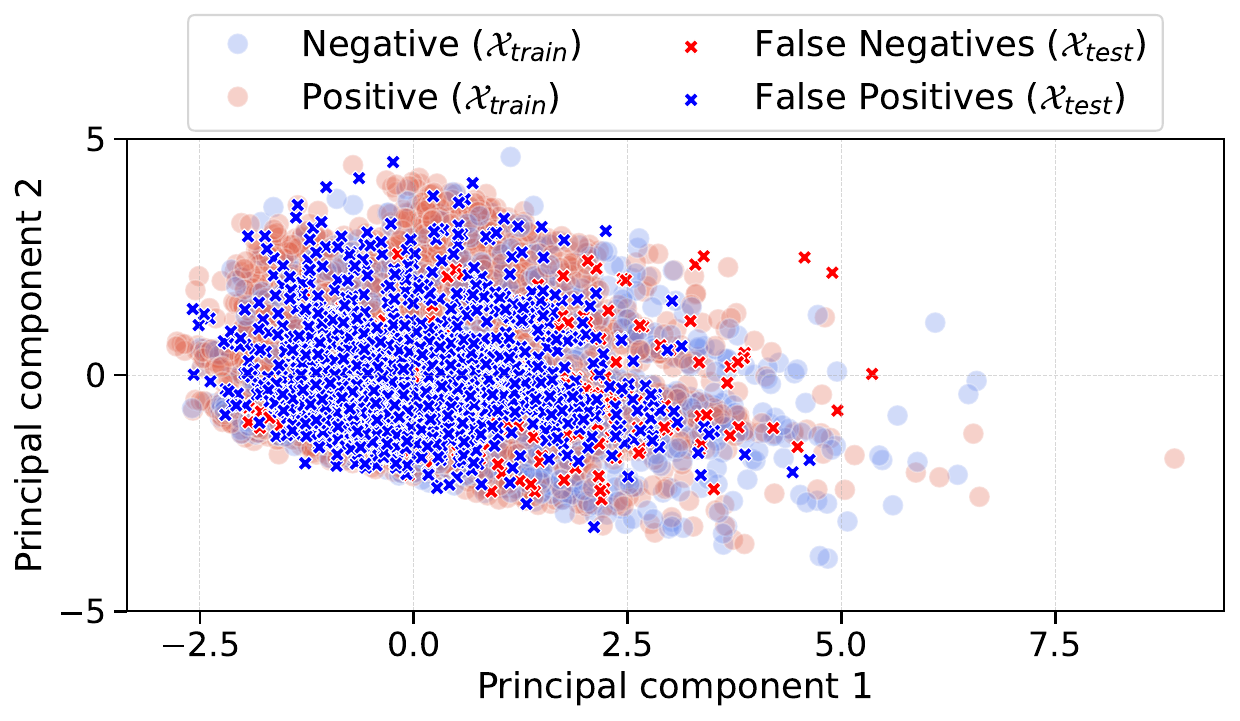}
        \end{minipage}
    }%
    \hfill
    \subfloat[Law school\label{fig:law_knn}]{
        \begin{minipage}{0.45\linewidth}
            \centering
            \includegraphics[width=\linewidth]{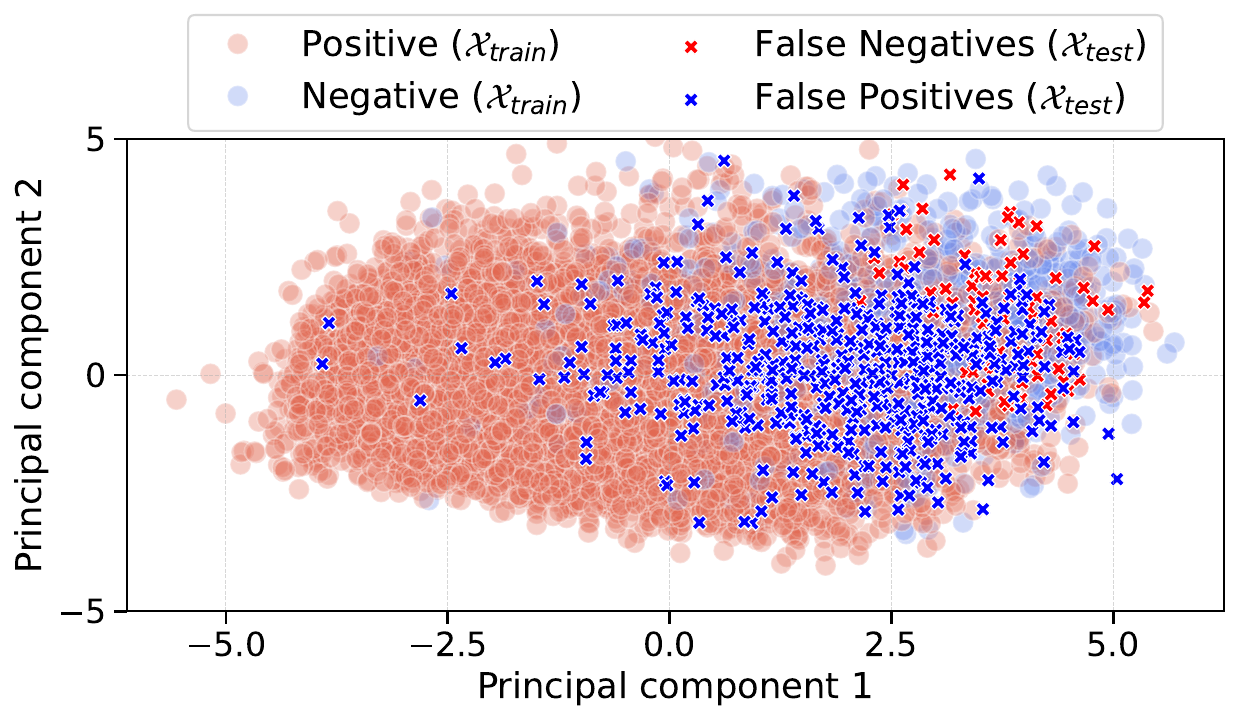}
        \end{minipage}
    }
    \hfill
    \subfloat[Synthetic\label{fig:syn2_knn}]{
        \begin{minipage}{0.45\linewidth}
            \centering
            \includegraphics[width=\linewidth]{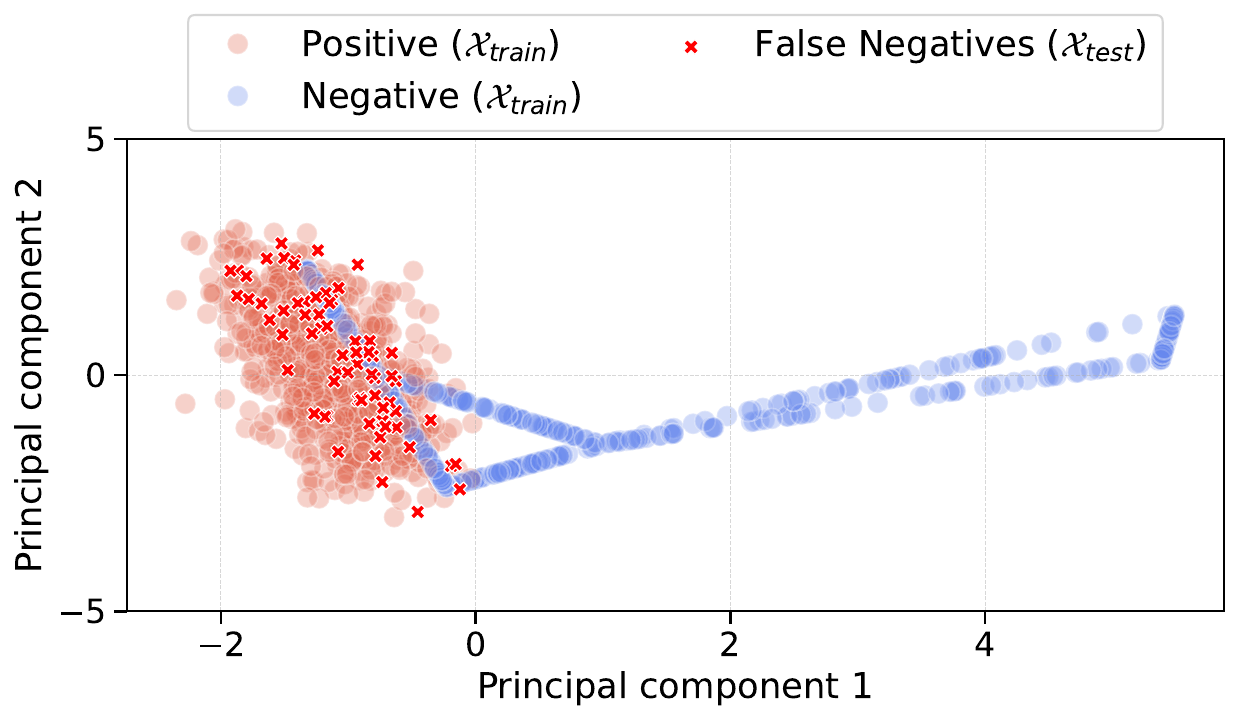}
        \end{minipage}
    } 
    \caption{Scatter plots of the two principal components of the training data and of the \(k\)-NN misclassification on test data for the (\subref{fig:adult_knn}) Adult (\(k\)-NN error: \(0.1670\), FNR: \(0.4610\), FPR: \(0.0678\)), (\subref{fig:oulad_knn}) OULAD (\(k\)-NN error: \(0.3291\), FNR: \(0.1195\), FPR: \(0.7645\)), (\subref{fig:law_knn}) Law School (\(k\)-NN error: \(0.1010\), FNR: \(0.0180\), FPR: \(0.7875\)), and  (\subref{fig:syn2_knn}) Synthetic (\(k\)-NN error: \(0.1016\), FNR: \(0.1960\), FPR: \(0.000\)) datasets.}
    \label{fig:orig_knn}
\end{figure}
\clearpage
\begin{table*}[h]
    \centering
    \footnotesize
    \setlength{\tabcolsep}{3pt} 
    \begin{tabular}{llllll}
        \toprule
        \textbf{Dataset} & \textbf{DTC1} & \textbf{DTC2} & \textbf{RFC1} & \textbf{RFC2} & \textbf{XGB} \\
        \midrule
        Adult      & \(0.999967 \pm 0.000049\) & \(0.999967 \pm 0.000049\) & \(0.999934 \pm 0.000079\) & \(0.999967 \pm 0.000049\) & \(0.999967 \pm 0.000049\) \\
        Law        & \(1.000000 \pm 0.000000\) & \(1.000000 \pm 0.000000\) & \(1.000000 \pm 0.000000\) & \(1.000000 \pm 0.000000\) & \(0.999952 \pm 0.000071\) \\
        OULAD      & \(1.000000 \pm 0.000000\) & \(1.000000 \pm 0.000000\) & \(1.000000 \pm 0.000000\) & \(1.000000 \pm 0.000000\) & \(1.000000 \pm 0.000000\) \\
        Synthetic & \(1.000000 \pm 0.000000\) & \(1.000000 \pm 0.000000\) & \(1.000000 \pm 0.000000\) & \(1.000000 \pm 0.000000\) & \(1.000000 \pm 0.000000\) \\
        \bottomrule
    \end{tabular}
    \caption{Accuracy score of the \(f^\star\) models when trained and tested on \(\mathcal{S}_{T}\) across different datasets.}
    \label{tab:fstarclassifiers_gen_acc}
\end{table*}
\begin{table*}[h]
    \centering
    \footnotesize
    \setlength{\tabcolsep}{3pt} 
    \begin{tabular}{llllll}
        \toprule
        \textbf{Dataset} & \textbf{DTC1 (LOO)} & \textbf{DTC2 (LOO)} & \textbf{RFC1 (LOO)} & \textbf{RFC2 (LOO)} & \textbf{XGB (LOO)} \\
        \midrule
        Adult      & \(0.8084 \pm 0.0044\) & \(0.8053 \pm 0.0045\) & \(0.8541 \pm 0.0040\) & \(0.8545 \pm 0.0040\) & --- \\
        Law        & \(0.8507 \pm 0.0048\) & \(0.8428 \pm 0.0049\) & \(0.8962 \pm 0.0041\) & \(0.8972 \pm 0.0041\) & \(0.8894 \pm 0.0043\) \\
        OULAD      & \(0.5941\pm 0.0066 \) & \(0.5952 \pm 0.0066 \) & \(0.6684 \pm 0.00663\) & \(0.6689 \pm 0.0063\) & --- \\
        Synthetic & \(0.9955 \pm 0.0028\) & \(0.9951 \pm 0.0029\) & \(0.9969 \pm 0.0023\) & \(0.9973 \pm  0.0021\) & \(0.9955 \pm 0.0028\) \\
        \bottomrule
    \end{tabular}
    \caption{Average leave one out (LOO) score of the \(5, f^\star\) models on \(\mathcal{S}_{T}\) across different datasets.}
    \label{tab:fstarclassifiers_loo}
\end{table*}

\subsection{Classifiers}\label{app:sec_classifiers}

In all experiments we set the random seed to \(42\) to ensure reproducibility and consistency across all runs. 
All experiments were conducted on a laptop computer with the following hardware specifications: \(2.6\)-GHz 6-Core Intel Core i7 processor, \(16\) GB of \(2400\)-MHz DDR4 RAM, and an Intel UHD Graphics \(630\) graphics card with \(1536\) MB of memory. Below are supplementary details about the classification models used.
Below are supplementary details about the classification models used.

\subsubsection{The \texorpdfstring{\(f^\star\) model} .} 

The function \( f^\star \) served as the ground truth labeler, assessing whether the agent's modifications led to a successful improvement. We evaluated five standard machine learning binary classification models, each achieving near \(100\%\) accuracy when trained and tested on \(\mathcal{S}_{T}\) (see Table~\ref{tab:fstarclassifiers_gen_acc}). These models include two decision tree classifiers (DTC1 and DTC2), two random forest classifiers (RFC1 and RFC2), and a gradient boosting classifier (XGB). Descriptions of these models are provided below.
\begin{enumerate}
    \item \textbf{Model \(f^{\star}_{1}\)} (DTC1): A decision tree classifier with the following hyperparameters: criterion = ``entropy", min\_samples\_split = \(2\), min\_samples\_leaf = \(1\), and random\_state = \(42\).
    \item \textbf{Model \(f^{\star}_{2}\)} (DTC2): A decision tree classifier with the following hyperparameters: criterion = ``gini", min\_samples\_split = \(2\), min\_samples\_leaf = \(1\), and random\_state = \(42\).
    \item \textbf{Model \(f^{\star}_{3}\)} (RFC1): A random forest classifier with default settings and random\_state = \(42\).
    \item \textbf{Model \(f^{\star}_{4}\)} (RFC2): A random forest classifier with the following hyperparameters: n\_estimators = \(500\), min\_samples\_split = \(2\), min\_samples\_leaf = \(1\), max\_features = ``sqrt", bootstrap = True, oob\_score = True, and random\_state = \(42\).
    \item \textbf{Model \(f^{\star}_{5}\)} (XGB): A gradient boosting classifier with the following hyperparameters: n\_estimators = \(500\), max\_depth = \(50\), learning\_rate = \(0.088\), min\_child\_weight = \(2\), subsample=\(0.9\),  gamma = \(0.088\), and random\_state = \(42\).
\end{enumerate}
We define the ground truth labeler \(f^\star\) either as a singular near-\(100\%\) accuracy model (see Table~\ref{tab:fstarclassifiers_gen_acc}) or as an agreement among multiple near-\(100\%\) accuracy models.
\paragraph{The multi-defined \(f^\star\) model.} Although the five \(f^\star\) models described above achieve nearly \(100\%\) accuracy when trained and tested on \(\mathcal{S}_{T}\), we assessed their generalization using the leave-one-out (LOO) validation score. The observed differences in LOO validation scores (Table~\ref{tab:fstarclassifiers_loo}), despite similar and high accuracy scores (Table~\ref{tab:fstarclassifiers_gen_acc}), highlight potential generalization gaps. To account for this, we employ a multi-defined \(f^\star\) model to validate the experimental results.  \\

For a given data point \(x\), the five models: \(f^{\star}_{1}(x), f^{\star}_{2}(x), f^{\star}_{3}(x), f^{\star}_{4}(x)\) and \(f^{\star}_{5}(x)\) each make a prediction for the label of the data point. Based on these predictions, we define a boolean agreement mask \(M(x)\) that checks whether all four models agree on the prediction:
\[
M(x) = \mathbbm{1}( f^{\star}_{1}(x) = f^{\star}_{2}(x) = f^{\star}_{3}(x) = f^{\star}_{4}(x) = f^{\star}_{5}(x) )
\]
where \(\mathbbm{1}(\cdot)\) is the indicator function that outputs \(1\) if all four models agree, and \(0\) otherwise. 
Using this agreement mask, we define the ground truth labeling function \(f^{\star}(x)\) as follows:
\begin{equation}
    f^{\star}(x) = \begin{cases} 
        f^{\star}_{1}(x), & \text{if } M(x) = 1 \text{ (i.e., full agreement)} \\
        0, & \text{otherwise} 
    \end{cases}
\end{equation}

\paragraph{The singularly-defined \(f^\star\) model.} 
Alternatively, we define the labeling function  \(f^\star(x)\) using a single near-\(100\%\) accuracy model trained and tested on \(\mathcal{S}_{T}\), selected from the set \(\{f^{\star}_{1}(x), f^{\star}_{2}(x), f^{\star}_{3}(x),\) \(f^{\star}_{4}(x), f^{\star}_{5}(x)\}\).
Unless otherwise stated , all experimental results were obtained using the DTC2 model (\(f^{\star}_{2}(x)\)) as the designated singularly-defined \(f^\star(x)\) function.

\subsubsection{The decision-maker's model \texorpdfstring{(\(h\))}.}  

We trained two-layer neural networks, denoted as \(h\) functions, using PyTorch with Adam optimizer with a learning rate of \(0.001\) and a batch size of \(64\). These \(h\) functions generate decisions for the test set agents. In cases where the test agent receives a negative classification, they can, if within budget, improve their feature values to get the desired classification from the \(h\) function. Table~\ref{tab:h_classifiers} summarizes the performance metrics of the \(f^\star\) and \(h\) model functions, demonstrating their varied performance across the datasets.

Since the empirical setup evaluates the impact of improvement on \(h\)'s error drop rates, we vary the loss functions we train the model \(h\) function with. We use the standard binary cross entropy loss (BCE) and the risk-averse weighted-BCE (wBCE) loss functions defined in Equation~\ref{eq:losses}. In particular, because only negatively classified test-set agents improve, improvement (\(x'\)), if successful (that is, \(f^{\star}(x') = h(x') = 1\)) reduces the false negative rate and turns true negatives into true positives. On the other hand, when unsuccessful (that is, \(f^{\star}(x') = 0 \ \text{and} \ h(x') = 1\)), it increases the false positive rate by turning true negatives into false positives and false negatives into false positives. 

The model trained with the weighted-BCE loss corresponds to a more risk-averse classifier that penalizes the false positive (FP) errors more heavily than the false negative (FN) one, creating a more compact positive agreement region that ensures more successful improvements. We prioritize minimizing FPs by ensuring the false positive to false negative weight ratio \(\frac{w_{\textrm{FP}}}{w_{\textrm{FN}}} > 1 \) is high, for example, \(\frac{4.4}{0.001} = 4400\) for the adult dataset. Another form of risk-averse classification we consider is only classifying an agent as positive \textit{iff} the probability of being positive is high. That is to say, we use the standard threshold \(0.5\) for the standard classifier and a higher threshold \(0.9\) for a more risk-averse classifier.
\begin{table}[ht!]
    \centering
    \renewcommand{\arraystretch}{1.1} 
    \setlength{\tabcolsep}{7pt} 
    \begin{tabular}{l l l l l l}
        \toprule
        \textbf{Dataset} & \textbf{Model (kind)} & \textbf{Accuracy} & \textbf{Precision} & \textbf{Recall} & \textbf{F1 Score} \\
        \midrule
        Adult & 2-layer neural network  (\(h(x)\)) & 0.841087 & 0.699811 & 0.647677 & 0.672736 \\
        Law & 2-layer neural network (\(h(x)\)) & 0.901282 & 0.911691 & 0.984731 & 0.946805 \\
        OULAD & 2-layer neural network (\(h(x)\)) & 0.678003 & 0.698609 & 0.919853 & 0.794109 \\
        Synthetic & 2-layer neural network (\(h(x)\)) & 0.994021 & 1.000000 & 0.988473 & 0.994203 \\
        \hline
    \end{tabular}
    \caption{Average performance of the standard  models functions \(h\) across different datasets' test sets \(\mathcal{S}_{\textrm{test}}\) when test-set agent cannot improve that is, \(r=0\).}
    \label{tab:h_classifiers}
\end{table}

\newpage

\subsection{Agents Improvement}\label{app:sec_improve}

Given the feature vector of a negatively classified test-set agent, \(x_{\text{orig}}\) and it's negative label  \(h(x_{\text{orig}})\), the loss function \(\mathcal{L}\) (BCE or wBCE), improvement budget \(r\), step size \(\alpha\), number of iterations \(T\) and set of indices of improvement features \(S\), compute the agent's improvement features. For each dataset, we predefine the improvable features that the agents can change in order to get a desirable (positive) model outcome (see Table~\ref{tab:datasets_info}). We vary the improvement budget in the empirical setup to so as to assess the impact of improvement on the the error drop rates. 

Below are the steps of the improvement algorithm we used to compute each agent's improvement features.
\paragraph{Initialization:}
\[
x'_{(0)} = x_{\textrm{orig}}
\]

\paragraph{Iterative updates:}
For $t = 0, 1, \dots, T-1$:
\begin{enumerate}
    \item Compute the gradient of the loss $\mathcal{L}$ with respect to the agent's updates $x'_{(t)}$:
    \[
    \mathbf{g}_{(t)} = \nabla_{x'_{(t)}} \mathcal{L}\Big(h\big(x'_{(t)}\big), h\big(x_{\textrm{orig}}\big)\Big)
    \]
    \item Update the improvement features by taking a step in the direction of the sign of the gradient:
    \[
    \rho_{(t)}[i] =
    \begin{cases} 
        \alpha \cdot \text{sign}(\mathbf{g}_{(t)}[i]), & \text{if} \ i \in S \\
        0, & \text{otherwise}
    \end{cases}, \quad  \forall i \in [d]
    \]
    \[
    x'_{(t+1)} = x'_{(t)} + \rho_{(t)}
    \]
    \item Project the updated improvement features back onto the $r$-ball around the original features  \(x_{\text{orig}}\):
    \[
    x'_{(t+1)} = \big(x_{\textrm{orig}} + \textrm{clip}_{[-r, r]}(x'_{(t+1)} - x_{\textrm{orig}})\big)
    \]
\end{enumerate}
\paragraph{Improvement vector:} After $T$ iterations, the final agent's improvement is given by:
\[
x' = x'_{(T)}
\]

\subsection{Evaluation Results: Supplementary Details}\label{app:sec_results}

Following the key insights mentioned in the main paper in Section~\ref{sec:experiments}, below are the detailed observations from the experimental evaluation and the supplementary figures of the experimental results. 

\paragraph{Effect of dataset class separability.} For all datasets we consider, Figure~\ref{fig:main_thresh0.5} shows that as the improvement budget increases, error rates drop significantly, particularly when agents improve in response to a risk-averse model trained using the \(\mathcal{L}_{\textrm{wBCE}}\) loss function. Dataset characteristics notably influence performance. For instance, as shown in Figure~\ref{fig:jumbleness}, the Law school and Synthetic datasets exhibit the highest separability and require relatively less risk aversion to achieve substantial and close to zero error reductions. Among all datasets, the Synthetic dataset shows a sharper error decline, reaching zero as the improvement budget increases (refer to Figure~\ref{fig:main_thresh0.5}). Furthermore, as depicted in Figure~\ref{fig:orig_knn}, the Adult and Synthetic datasets demonstrate the lowest \(k\)-NN test-set false positive rates (FPRs) of \(0.0678\) and \(0.000\), respectively, compared to the OULAD and Law School datasets. As seen in Figure~\ref{fig:main_thresh0.5}, for both datasets, the error drops close to \(0\) as \(r\) increases.

\paragraph{Effect of risk-aversion and improvement budget.} Figures~\ref{fig:app_adult_0.5}, \ref{fig:app_oulad_0.5}, \ref{fig:app_law_0.5} and \ref{fig:app_synthetic_0.5}
show that when agents improve to a risk-averse model trained using \(\mathcal{L}_{\textrm{wBCE}}\) with \(\frac{w_{\textrm{FP}}}{w_{\textrm{FN}}} > 1\), the error rate decreases rapidly, particularly as the improvement budget increases. Notably, the higher the false positive to false negative weight ratio \(\frac{w_{\textrm{FP}}}{w_{\textrm{FN}}}\), the faster the reduction in error (see  Figure~\ref{fig:bce_wbce_threshvar}). 
In contrast, models trained with the standard \(\mathcal{L}_{\textrm{BCE}}\) loss function exhibit a slower error reduction rate, almost looking like a line, under the same conditions as the effects of improvement are minimal and cancel each other out (see Figure~\ref{fig:oulad_synthetic_move_erroreval_0.5}). 

Furthermore, the false negative rate (FNR) decreases as the improvement budget \(r\) grows when the agents respond (improve) to an \(\mathcal{L}_{\textrm{wBCE}}\)--trained model (see Figures~\ref{fig:app_adult_move_fpr_fnr_0.5},  \ref{fig:app_oulad_move_fpr_fnr_0.5}, and \ref{fig:app_synthetic_move_fpr_fnr_0.5}). This is because almost \(100\%\) of the false negatively classified agents improve to become true positives as shown in Figures~\ref{fig:app_adult_move_0.5}, \ref{fig:app_oulad_move_0.5}, and \ref{fig:app_synthetic_move_0.5}.
On the other hand, on datasets where the weighted-BCE loss function effectively removed false positives (close to \(0\) false positive rate (FPR)) before agents' improvement, remains very low,  in some cases close to \(0\) (e.g., in Figure~\ref{fig:app_synthetic_move_fpr_fnr_0.5}), since no or few agents become false positives after improvement (see Figure~\ref{fig:app_synthetic_move_0.5}). However, when the weighted-BCE loss function wasn't as effective, the false positive rate increases as the improvement budget \(r<2.0\) grows (see Figure~\ref{fig:app_oulad_move_fpr_fnr_0.5}). 

Although we observe similar trends when the threshold for classifying an agent as positive increases to \(0.9\) (instead of \(0.5\)), the error is slightly higher and the reduction becomes slower (Figure~\ref{fig:app_thresh0.5thresh0.9}). Additionally, while Figures~\ref{fig:main_thresh0.5} and \ref{fig:bce_wbce_threshvar} demonstrate diminishing returns for \(r > 2.0\), a different trend emerges in Figures~\ref{fig:app_adult_0.9}, \ref{fig:app_oulad_0.9}, \ref{fig:app_law_0.9} and \ref{fig:app_synthetic_0.9} where we classify agents positive with high probability (\(0.9\)).

\paragraph{Effect of choice of \(f^\star\) model.}
Although different \(f^\star\) models achieved \(\sim 100\%\) accuracy on a given dataset while exhibiting varied LOO accuracy (cf. Tables~\ref{tab:fstarclassifiers_gen_acc} and \ref{tab:fstarclassifiers_loo}), the error drop rate showed consistent patterns when different \(f^\star\) models are used. As shown in Figure~\ref{fig:app_oulad_multi_thresh0.5}, although some \(f^\star\) models, such as RFC2 (\(f^\star_{4}\)) (cf. Figure~\ref{fig:app_oulad_rfc2_0.5}), had higher performance gains than others, in all cases, the error drops rapidly as improvement budget increases and agents respond (improve) to wBCE-trained models.

Additionally, performance gains observed with evaluation of successful improvement using the multi-defined \(f^\star\) (cf. Figure~\ref{fig:app_multi_thresh0.5}) were quite similar in trend and gains to those when a singularly-defined model function \(f^\star = f^\star_{2}\) was used (cf. Figure~\ref{fig:app_thresh0.5thresh0.9}, column one (\subref{fig:app_adult_0.5}, \subref{fig:app_oulad_0.5}, \subref{fig:app_law_0.5} and \subref{fig:app_synthetic_0.5})).

Our results indicate that while the \(f^\star\) models achieve \(\sim 100\%\) accuracy when trained and tested on the unsplit dataset \(\mathcal{S}_{T}\), they often overfit, as shown by the LOO scores (cf. Table~\ref{tab:fstarclassifiers_loo}). Nevertheless, they yield comparable performance gains when assessing agents' improvement (cf. Figures~\ref{fig:app_oulad_multi_thresh0.5} and ~\ref{fig:app_multi_thresh0.5}). 

\begin{figure}[htb!]
    \centering
    \subfloat[Adult \(\big(\mathcal{L}_{\textrm{wBCE}} (w_{\textrm{FN}}=1.0)\big)\)\label{fig:app_adult_fn1_th0.5}]{
        \begin{minipage}{0.47\linewidth}
            \centering
            \includegraphics[width=\linewidth]{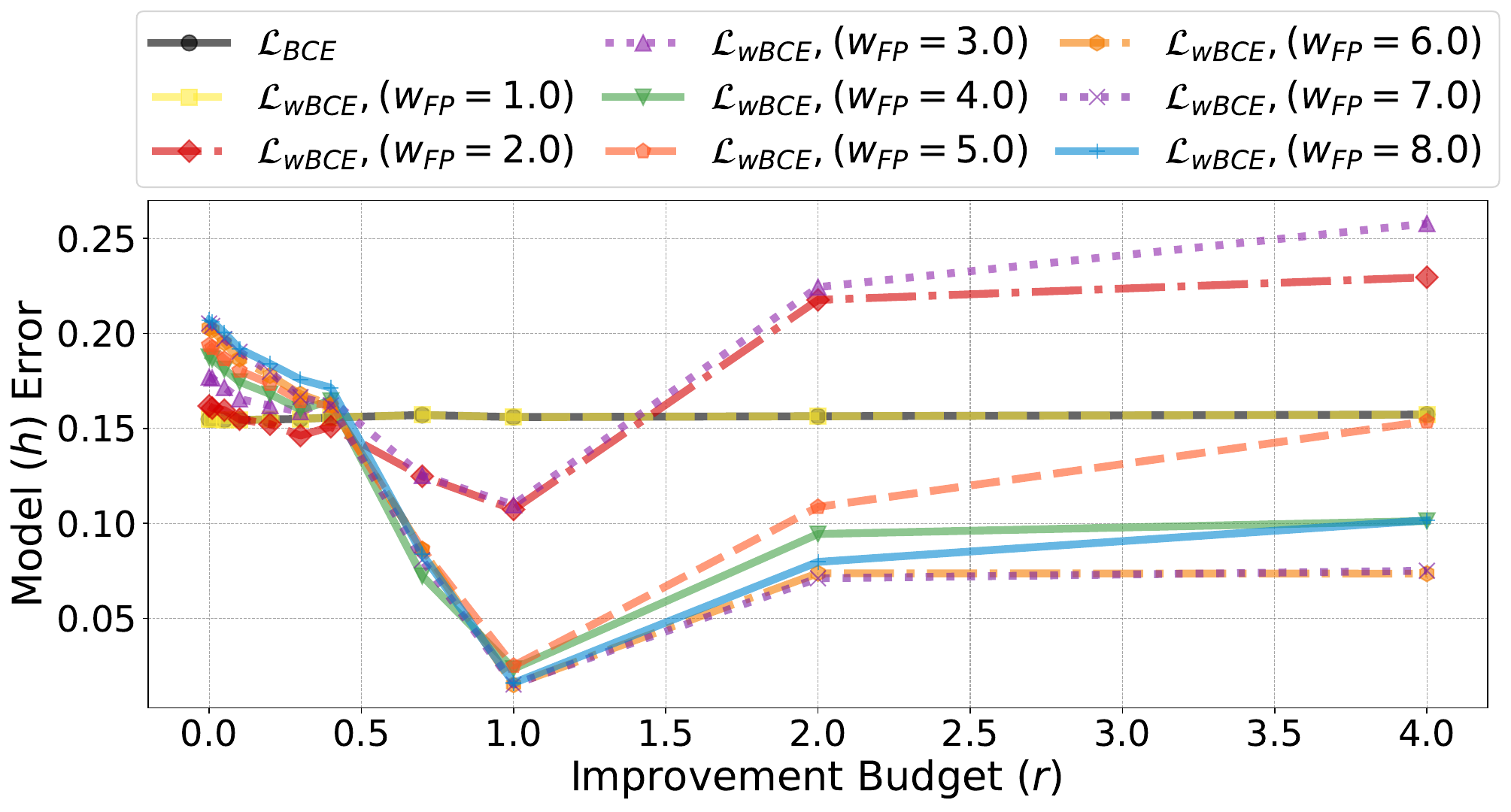}
        \end{minipage}
    } 
    ~
    \subfloat[Adult \(\big(\mathcal{L}_{\textrm{wBCE}} (w_{\textrm{FN}}=0.001)\big)\)\label{fig:app_adult_fnz_th0.5}]{
        \begin{minipage}{0.47\linewidth}
            \centering
            \includegraphics[width=\linewidth]{experimental_figures/lfconlyw_0.5_error_droprate_linfx_adult.pdf}
        \end{minipage}
    }  
    \vskip\baselineskip
    \subfloat[OULAD \(\big(\mathcal{L}_{\textrm{wBCE}} (w_{\textrm{FN}}=1.0)\big)\) \label{fig:app_oulad_fn1_th0.5}]{
        \begin{minipage}{0.47\linewidth}
            \centering
            \includegraphics[width=\linewidth]{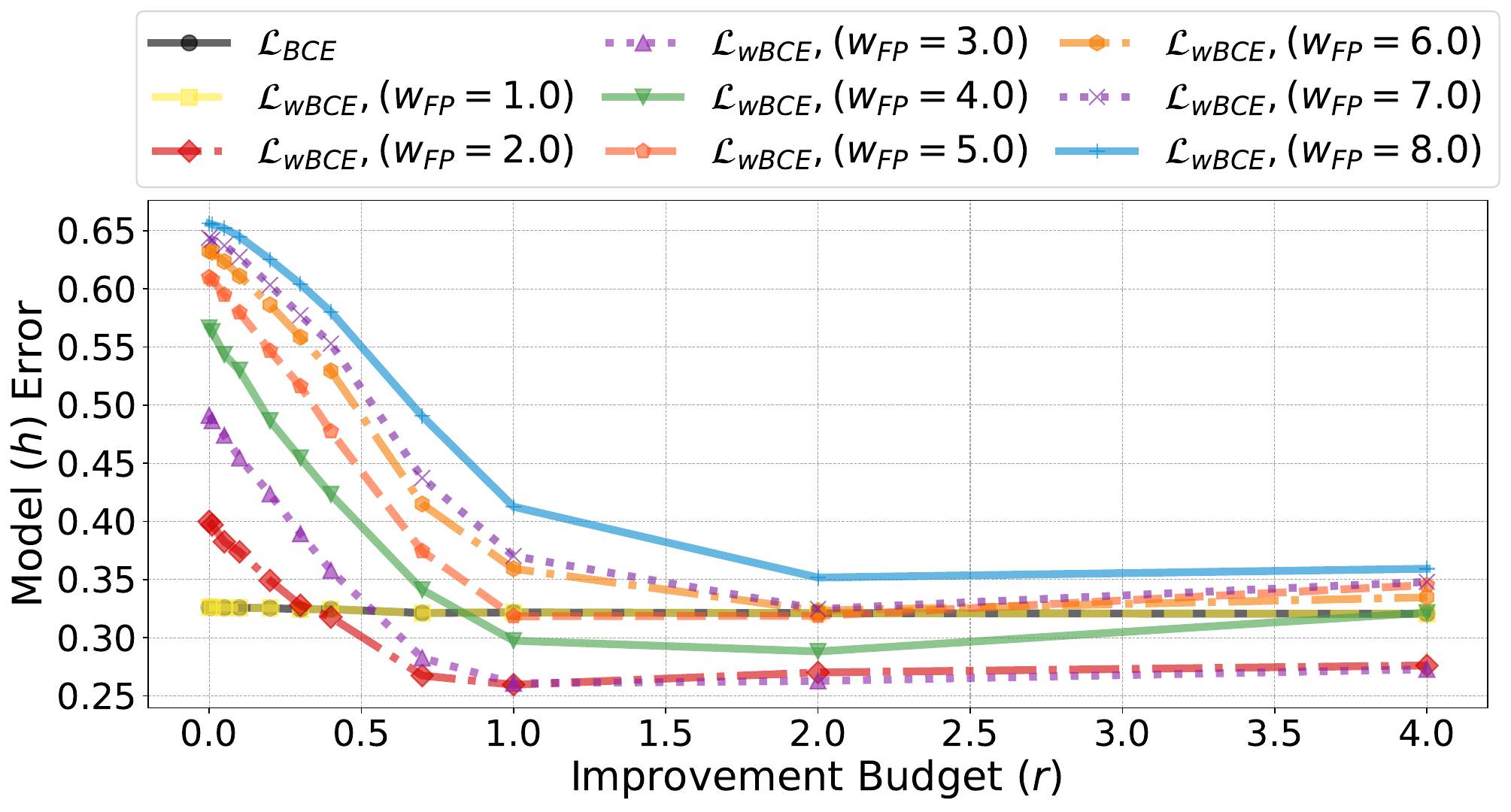}
        \end{minipage}
    } 
    ~
    \subfloat[OULAD \(\big(\mathcal{L}_{\textrm{wBCE}} (w_{\textrm{FN}}=1.33)\big)\) \label{fig:app_oulad_fnz_th0.5}]{
        \begin{minipage}{0.47\linewidth}
            \centering
            \includegraphics[width=\linewidth]{experimental_figures/lfconlyw_0.5_error_droprate_linfx_oulad.pdf}
        \end{minipage}
    }
    \vskip\baselineskip
    \subfloat[Law school \(\big(\mathcal{L}_{\textrm{wBCE}} (w_{\textrm{FN}}=1.0)\big)\) \label{fig:app_law_fn1_th0.5}]{
        \begin{minipage}{0.47\linewidth}
            \centering
            \includegraphics[width=\linewidth]{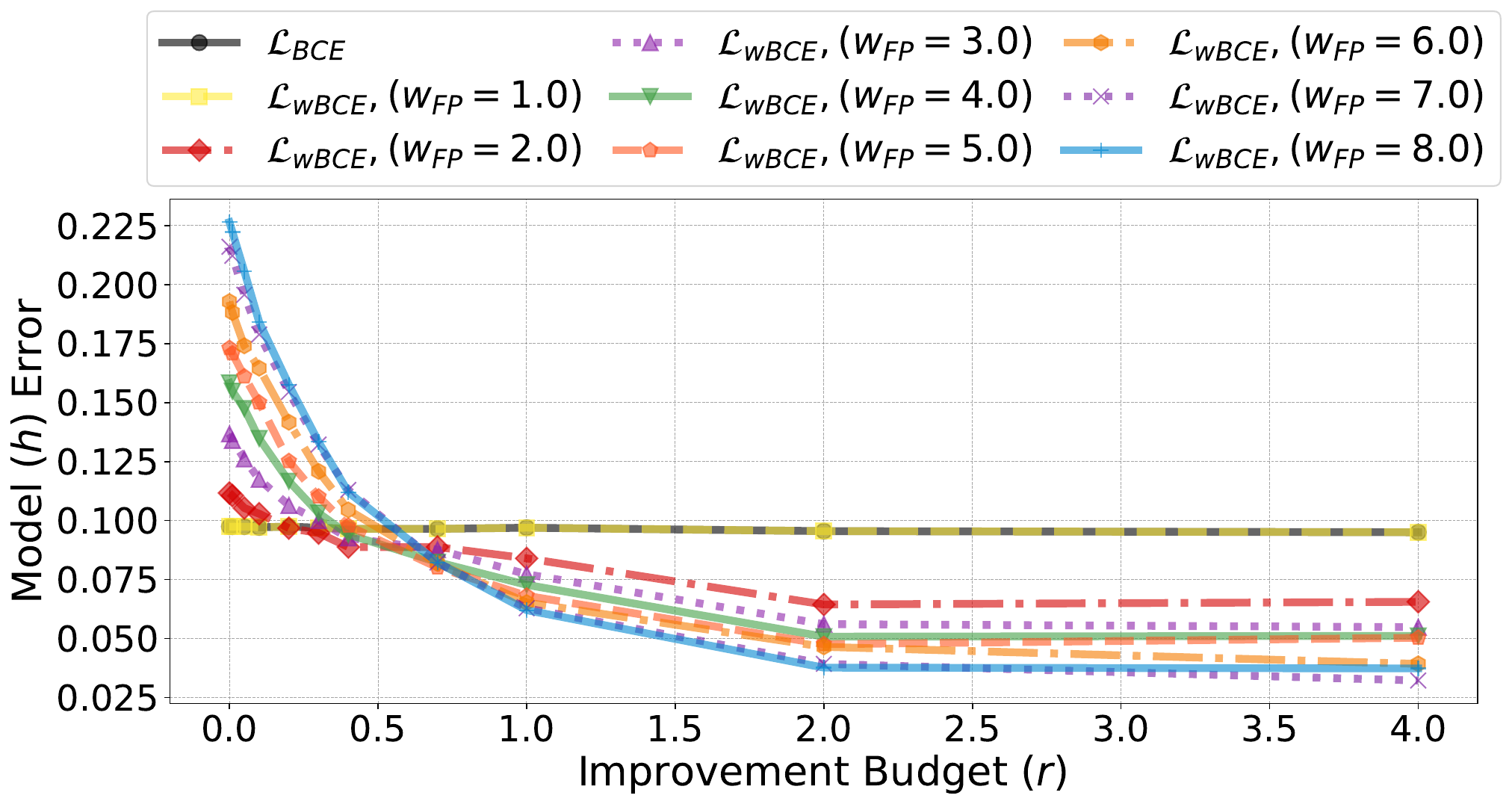}
        \end{minipage}
    } 
    ~
    \subfloat[Law school \(\big(\mathcal{L}_{\textrm{wBCE}} (w_{\textrm{FN}}=0.009)\big)\) \label{fig:app_law_fnz_th0.5}]{
        \begin{minipage}{0.47\linewidth}
            \centering
            \includegraphics[width=\linewidth]{experimental_figures/lfconlyw_0.5_error_droprate_linfx_law.pdf}
        \end{minipage}
    }
    \vskip\baselineskip
    \subfloat[Synthetic \(\big(\mathcal{L}_{\textrm{wBCE}} (w_{\textrm{FN}}=1.0)\big)\) \label{fig:app_syn2_fn1_th0.5}]{
        \begin{minipage}{0.47\linewidth}
            \centering
            \includegraphics[width=\linewidth]{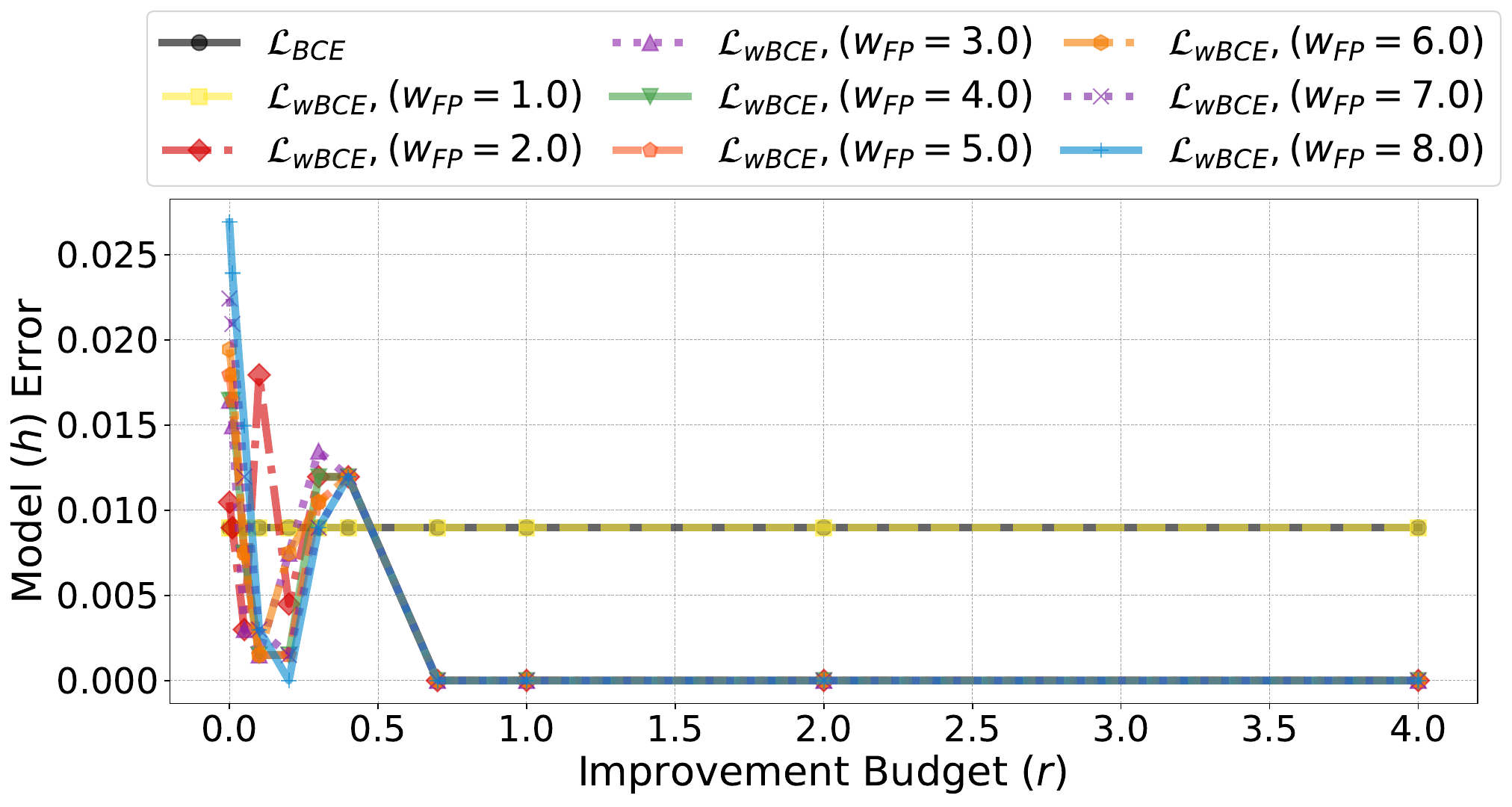}
        \end{minipage}
    } 
    ~
    \subfloat[Synthetic \(\big(\mathcal{L}_{\textrm{wBCE}} (w_{\textrm{FN}}=0.009\big)\) \label{fig:app_syn2_fnz_th0.5}]{
        \begin{minipage}{0.47\linewidth}
            \centering
            \includegraphics[width=\linewidth]{experimental_figures/lfconlyw_0.5_error_droprate_linfx_synthetic2.pdf}
        \end{minipage}
    } 
    \caption{Comparison of the error drop rate when agents improve to the risk-averse models trained with \(\mathcal{L}_{\textrm{wBCE}}\) where \(w_{\textrm{FN}}=1.0\) (\subref{fig:app_adult_fn1_th0.5}, \subref{fig:app_oulad_fn1_th0.5}, \subref{fig:app_law_fn1_th0.5}, and \subref{fig:app_syn2_fn1_th0.5}) and where \(w_{\textrm{FN}}\) (\subref{fig:app_adult_fnz_th0.5}, \subref{fig:app_oulad_fnz_th0.5}, \subref{fig:app_law_fnz_th0.5}, and \subref{fig:app_syn2_fnz_th0.5}) is optimized and false positive weight is varied \(w_{\textrm{FP}} = \{i\}_{i=1}^{8}\) across different datasets (Adult, OULAD, Law school and Synthetic). The standard model (black line) trained with \(\mathcal{L}_{\textrm{BCE}}\) loss function is such that \(w_{\textrm{FP}} = w_{\textrm{FN}}=1\), and in all cases an agent is classified as positive if the probability of being positive is above \(0.5\).}
    \label{fig:bce_wbce_threshvar}

    \begin{picture}(0,0)
        \put(-155,690){{\parbox{4cm}{\centering \(\mathcal{L}_{\textrm{wBCE}} (w_{\textrm{FN}}=1.0)\)}}}
        \put(55,690){{\parbox{4cm}{\centering \(\mathcal{L}_{\textrm{wBCE}} (w_{\textrm{FN}}=z)\)}}}
        \put(-240,596){\rotatebox{90}{Adult}}
        \put(-240,437){\rotatebox{90}{OULAD}}
        \put(-240,277){\rotatebox{90}{Law school}}
        \put(-240,128){\rotatebox{90}{Synthetic}}
        
    \end{picture}
\end{figure}
\begin{figure}[ht!]
    \centering
    \centering
    \subfloat[Adult (Movement of agents from TN/FN to TP/FP) \label{fig:app_adult_move_0.5}]{
        \begin{minipage}{0.46\linewidth}
            \centering
            \includegraphics[width=\linewidth]{experimental_figures/adult_all_movements_percentages_logscale.pdf}
        \end{minipage}
    } 
    \hfill
    \subfloat[Adult (FNR/FPR before and after agents' improvement) \label{fig:app_adult_move_fpr_fnr_0.5}]{
        \begin{minipage}{0.49\linewidth}
            \centering
            \includegraphics[width=\linewidth]{experimental_figures/adult_FNsFPs_percentages.pdf}
        \end{minipage}
    }%
    \vspace{1.6em}
    \subfloat[OULAD (Movement of agents from TN/FN to TP/FP)\label{fig:app_oulad_move_0.5}]{
        \begin{minipage}{0.46\linewidth}
            \centering
            \includegraphics[width=\linewidth]{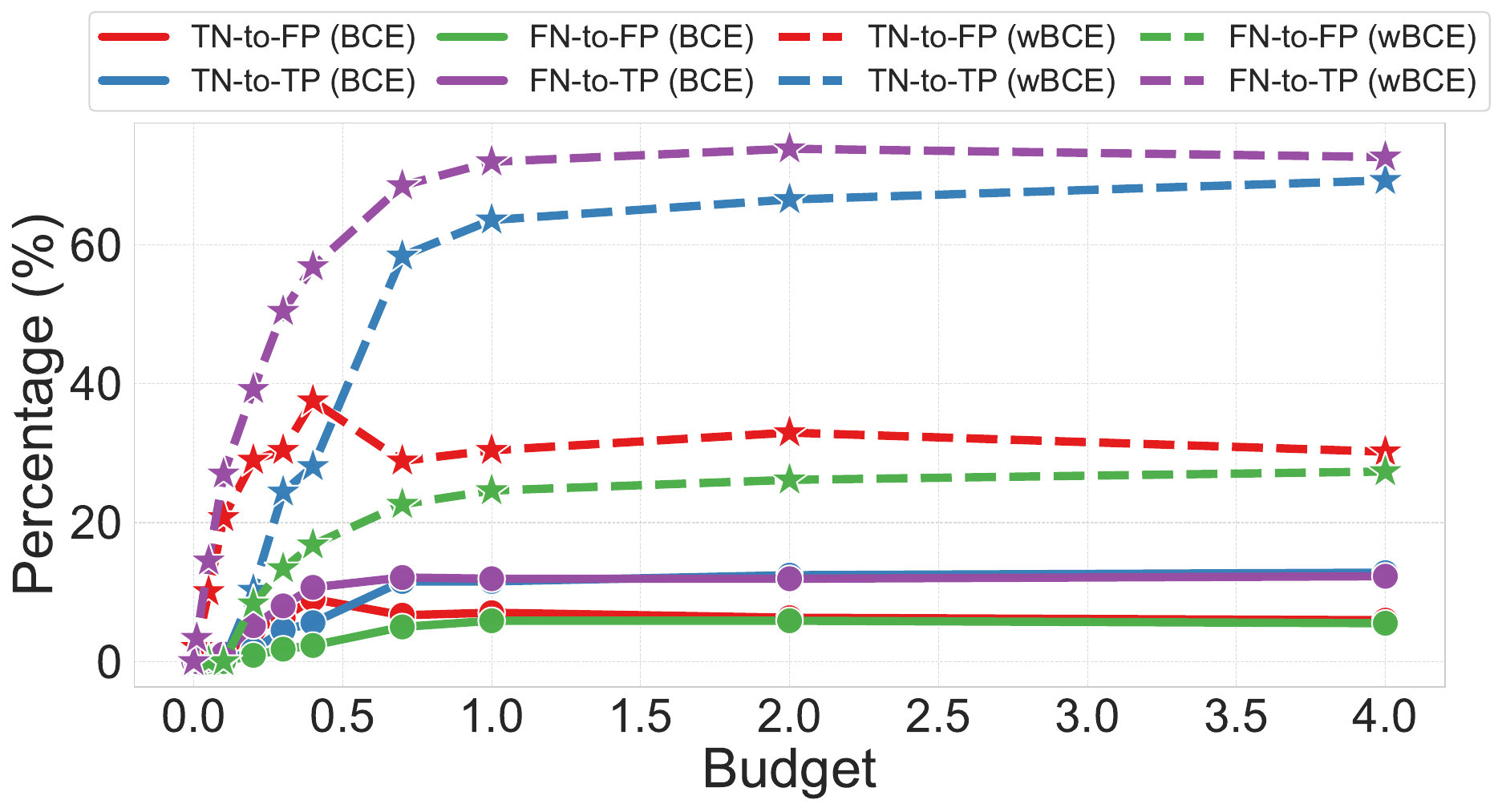}
        \end{minipage}
    }%
    \hfill
    \subfloat[OULAD (FNR/FPR before and after agents' improvement)  \label{fig:app_oulad_move_fpr_fnr_0.5}]{
        \begin{minipage}{0.49\linewidth}
            \centering
            \includegraphics[width=\linewidth]{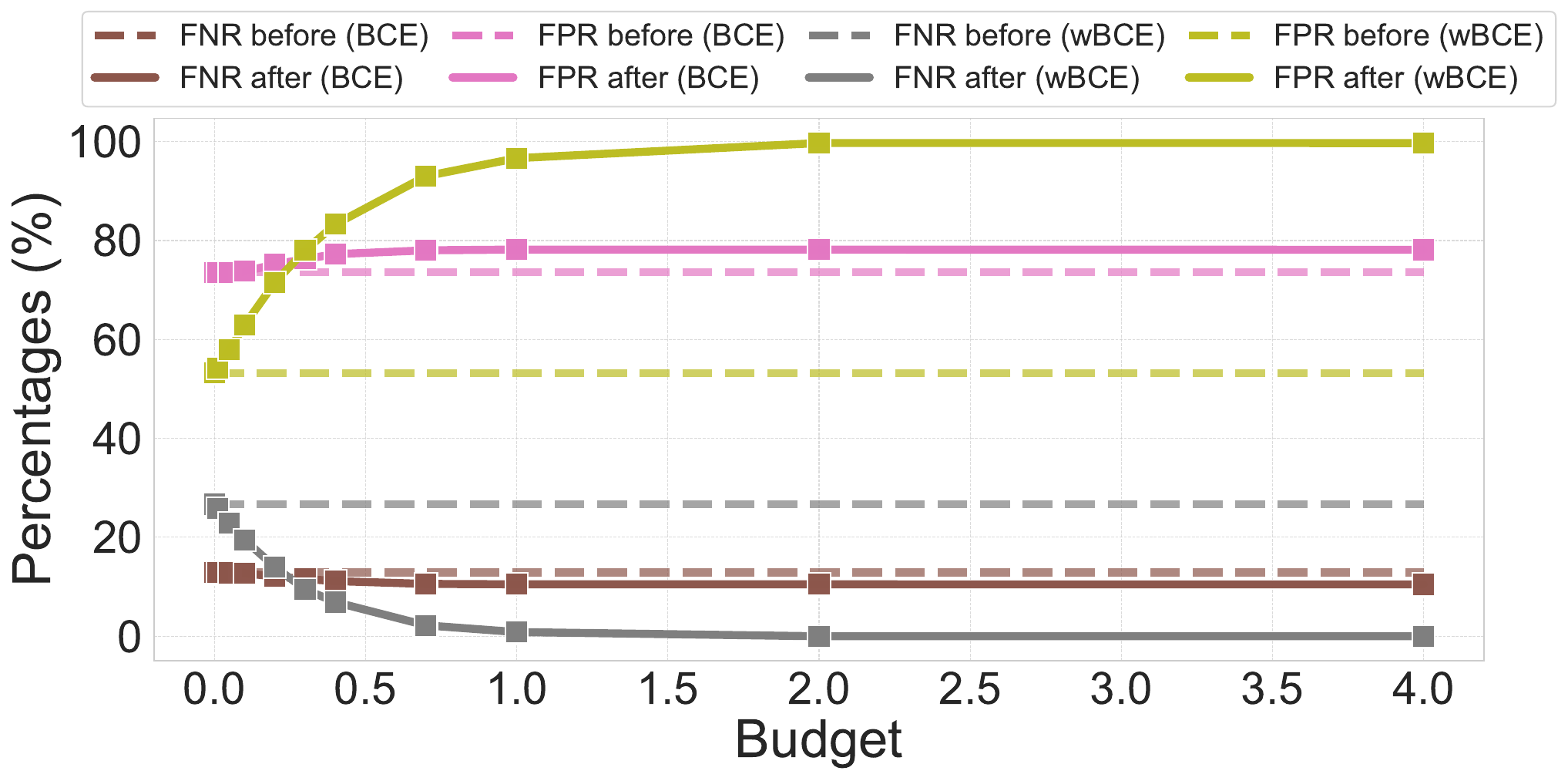}
        \end{minipage}
    }%
    \vspace{1em}
    \subfloat[Synthetic (Movement of agents from TN/FN to TP/FP) \label{fig:app_synthetic_move_0.5}]{
        \begin{minipage}{0.46\linewidth}
            \centering
            \includegraphics[width=\linewidth]{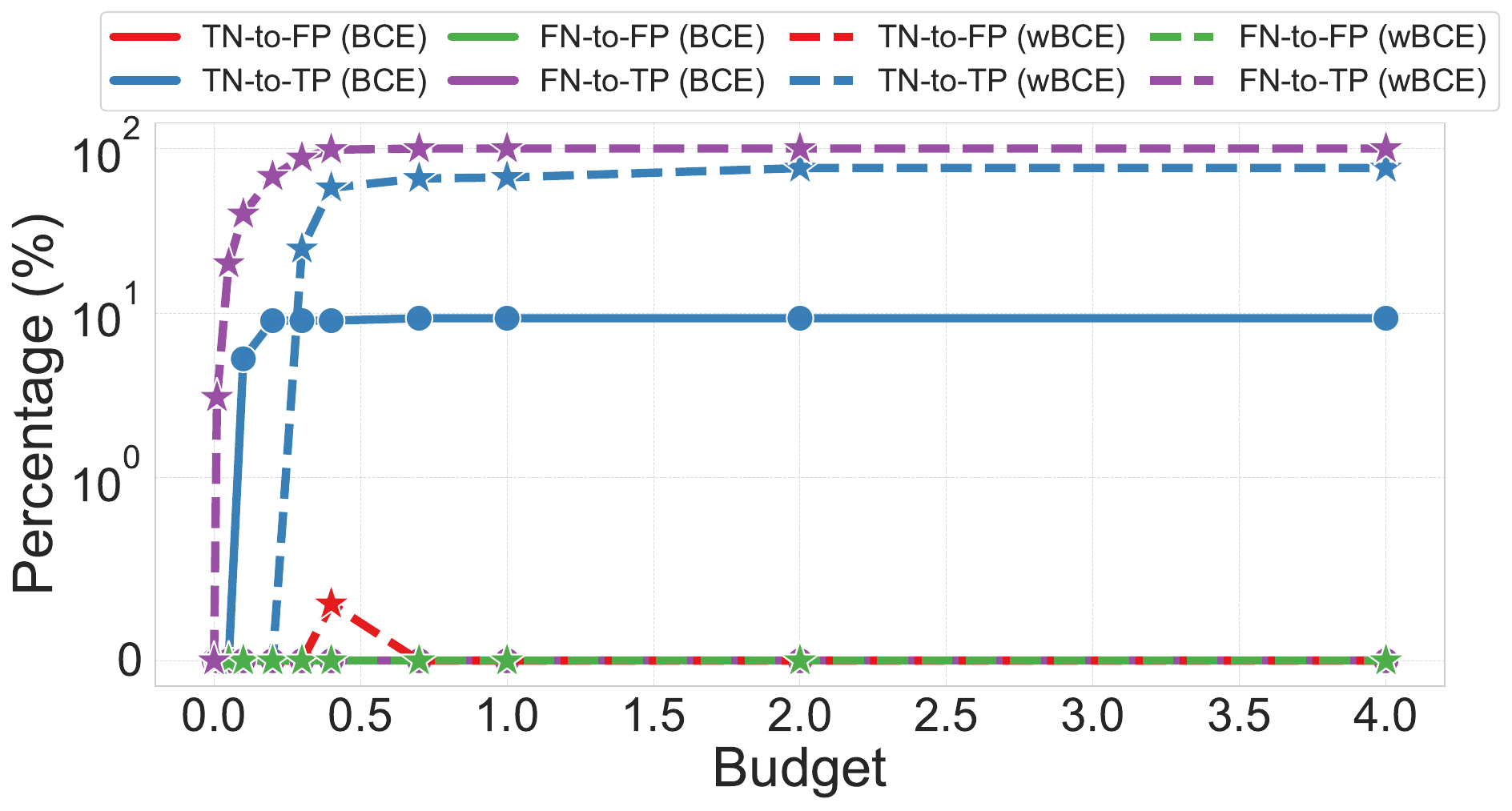}
        \end{minipage}
    }%
    \hfill
    \subfloat[Synthetic (FNR/FPR before and after agents' improvement) \label{fig:app_synthetic_move_fpr_fnr_0.5}]{
        \begin{minipage}{0.49\linewidth}
            \centering
            \includegraphics[width=\linewidth]{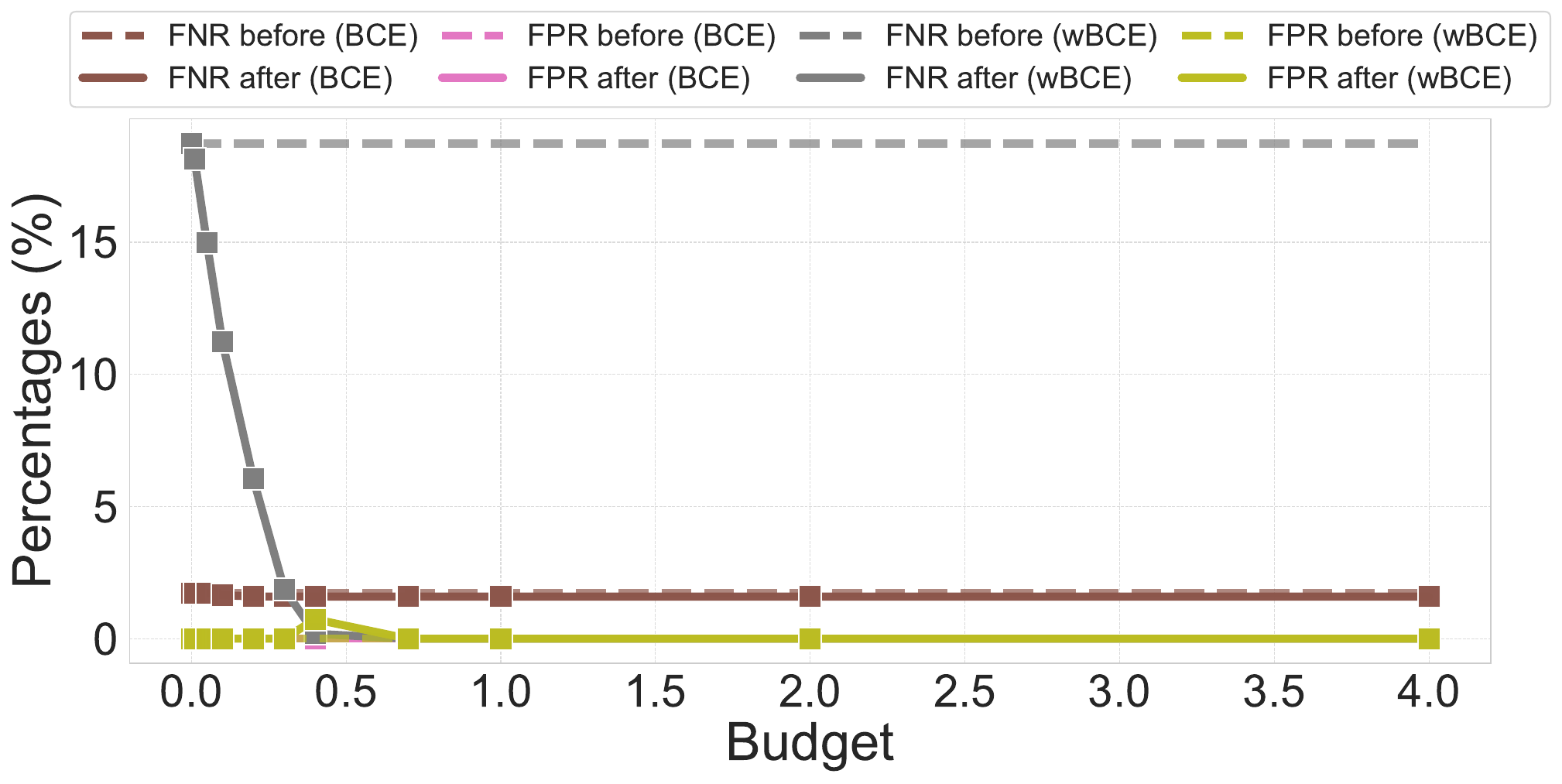}
        \end{minipage}
    }
    \caption{(\subref{fig:app_adult_move_0.5}, \subref{fig:app_oulad_move_0.5}, and \subref{fig:app_synthetic_move_0.5}) The percentage of negatively classified agents (TN and FN) that transition to TP and FP after responding to the classifier (\(h(x)\)). 
    (\subref{fig:app_adult_move_fpr_fnr_0.5}, \subref{fig:app_oulad_move_fpr_fnr_0.5}, and \subref{fig:app_synthetic_move_fpr_fnr_0.5}) The FPR and FNR before and after agents move. On the adult dataset, the wBCE-trained model used \(w_{\textrm{FP}}=0.001\) and \(w_{\textrm{FN}} = 4.4\), while on the OULAD dataset, it used \(w_{\textrm{FP}}=1.33\) and \(w_{\textrm{FN}} = 2\), and on the synthetic dataset, \(w_{\textrm{FP}}=0.009\) and \(w_{\textrm{FN}} = 1.0\). In all cases, BCE-trained models used \(w_{\textrm{FP}} = w_{\textrm{FN}} = 1\), and in all cases an agent is classified as positive if the probability of being positive is above \(0.5\).}
    \label{fig:oulad_synthetic_move_erroreval_0.5}
\end{figure}
\begin{figure}[htb!]
    \centering
    \subfloat[Adult \(\big(\mathcal{L}_{\textrm{wBCE}} (w_{\textrm{FN}}=0.001)\big)\)\label{fig:app_adult_0.5}]{
        \begin{minipage}{0.45\linewidth}
            \centering
            \includegraphics[width=\linewidth]{experimental_figures/lfconlyw_0.5_error_droprate_linfx_adult.pdf}
        \end{minipage}
    } 
    ~
    \subfloat[Adult \(\big(\mathcal{L}_{\textrm{wBCE}} (w_{\textrm{FN}}=0.001)\big)\)\label{fig:app_adult_0.9}]{
        \begin{minipage}{0.45\linewidth}
            \centering
            \includegraphics[width=\linewidth]{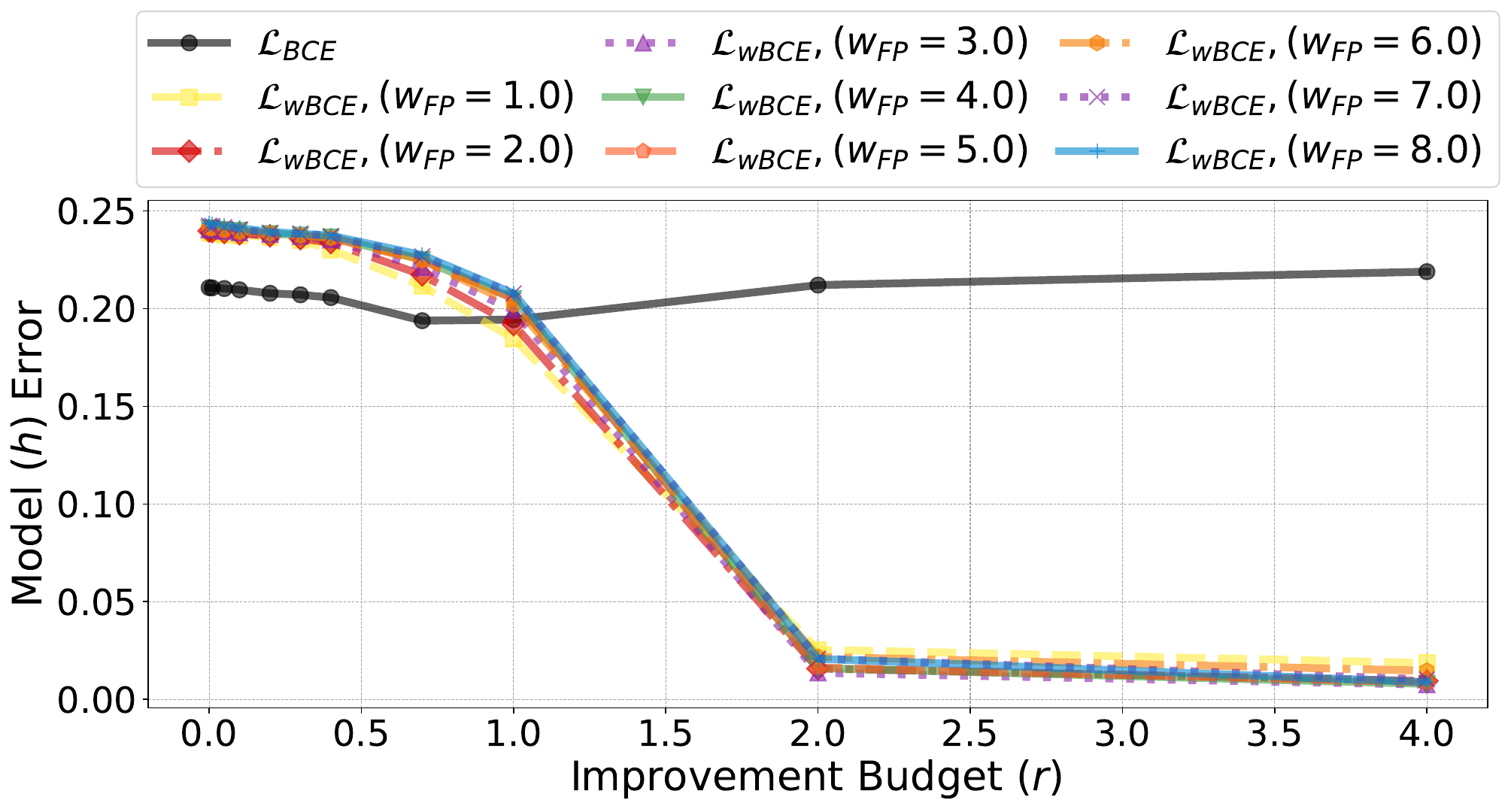}
        \end{minipage}
    } 
    \vskip\baselineskip
    \subfloat[OULAD \(\big(\mathcal{L}_{\textrm{wBCE}} (w_{\textrm{FN}}=1.33)\big)\) \label{fig:app_oulad_0.5}]{
        \begin{minipage}{0.45\linewidth}
            \centering
            \includegraphics[width=\linewidth]{experimental_figures/lfconlyw_0.5_error_droprate_linfx_oulad.pdf}
        \end{minipage}
    }
    ~
    \subfloat[OULAD \(\big(\mathcal{L}_{\textrm{wBCE}} (w_{\textrm{FN}}=1.33)\big)\) \label{fig:app_oulad_0.9}]{
        \begin{minipage}{0.45\linewidth}
            \centering
            \includegraphics[width=\linewidth]{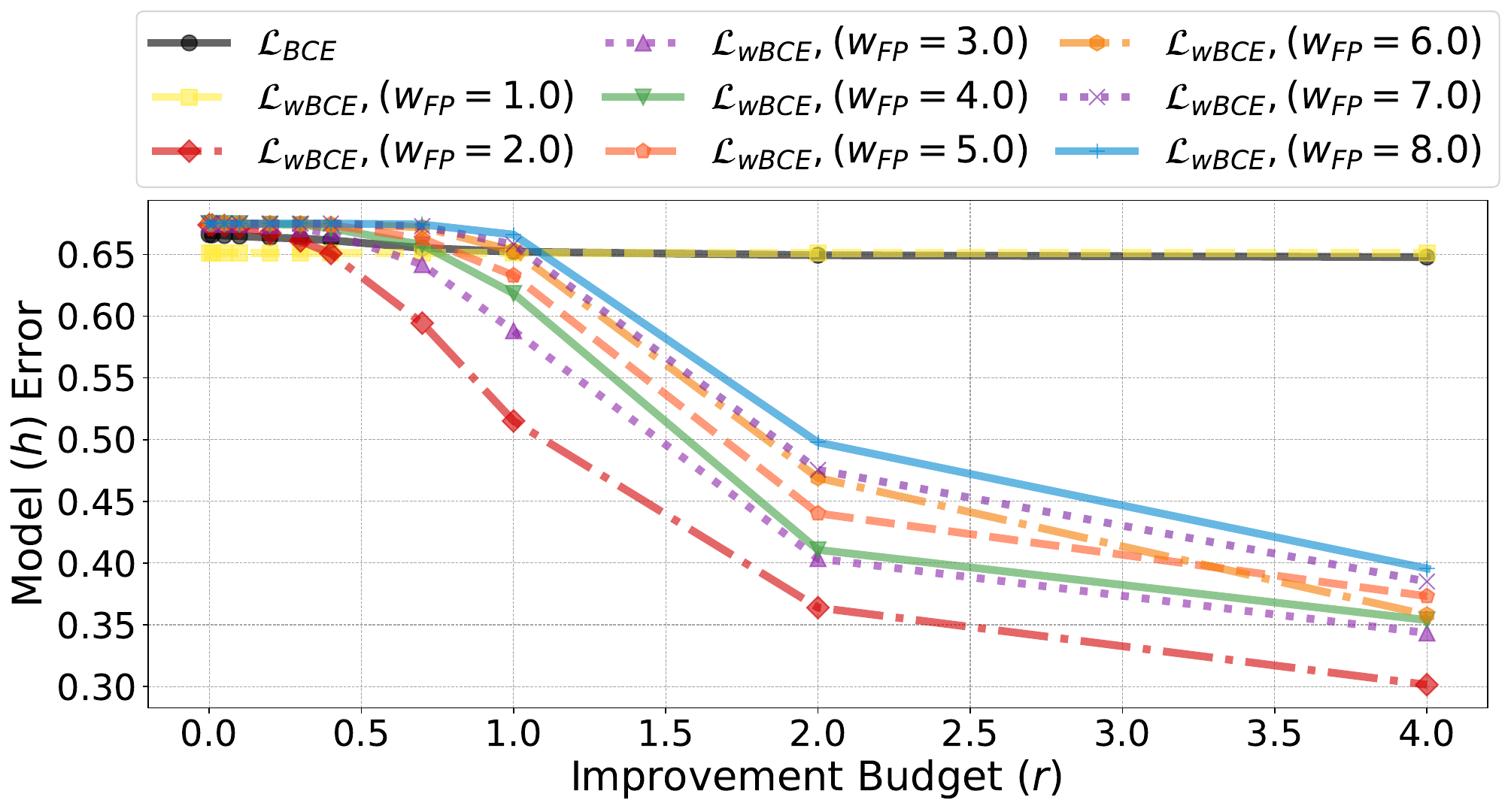}
        \end{minipage}
    }
    \vskip\baselineskip
    \subfloat[Law school \(\big(\mathcal{L}_{\textrm{wBCE}} (w_{\textrm{FN}}=0.009)\big)\)\label{fig:app_law_0.5}]{
        \begin{minipage}{0.45\linewidth}
            \centering
            \includegraphics[width=\linewidth]{experimental_figures/lfconlyw_0.5_error_droprate_linfx_law.pdf}
        \end{minipage}
    }%
    ~
    \subfloat[Law school \(\big(\mathcal{L}_{\textrm{wBCE}} (w_{\textrm{FN}}=0.009)\big)\)\label{fig:app_law_0.9}]{
        \begin{minipage}{0.45\linewidth}
            \centering
            \includegraphics[width=\linewidth]{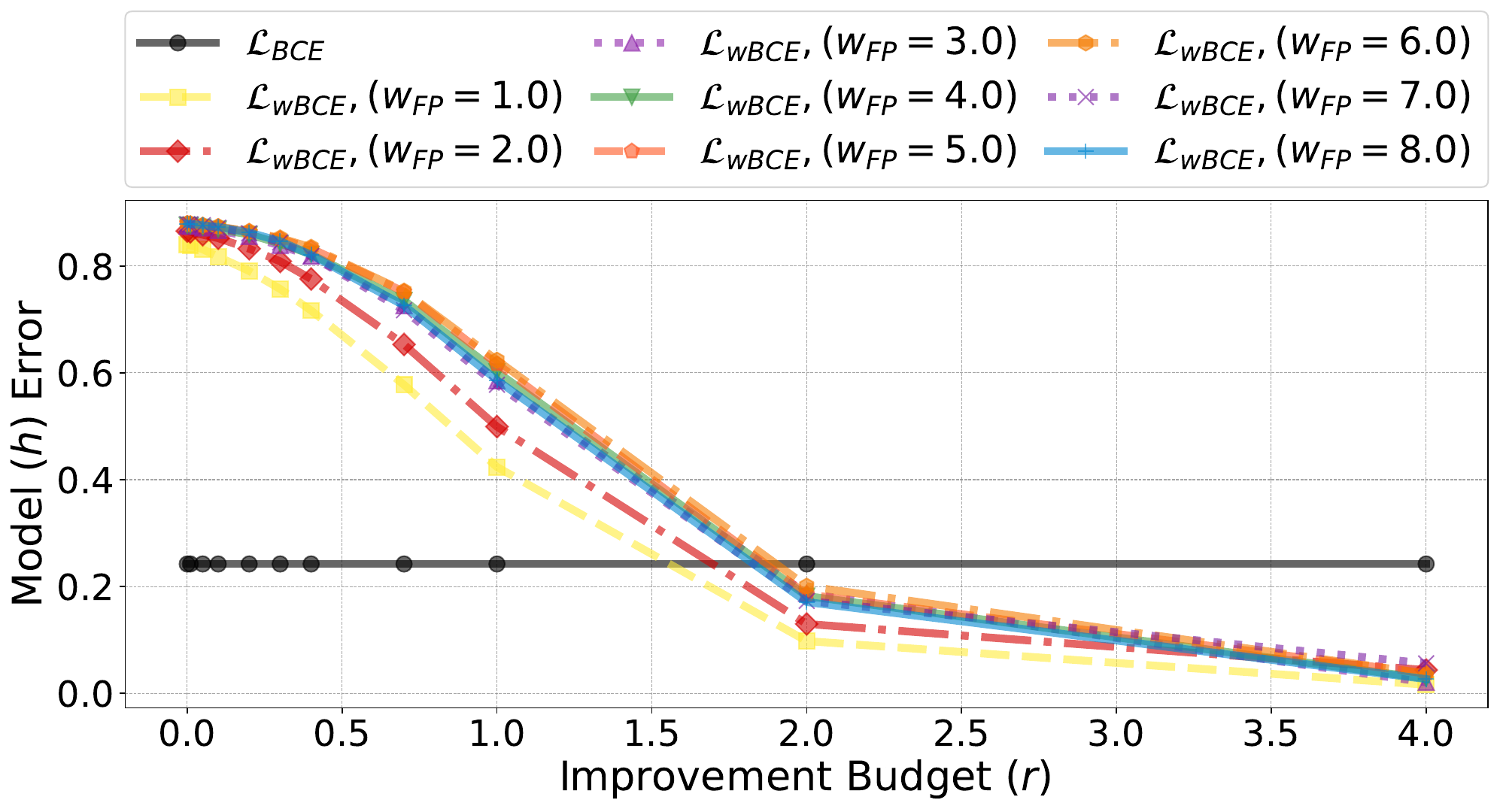}
        \end{minipage}
    }%
    \vskip\baselineskip
    \subfloat[Synthetic \(\big(\mathcal{L}_{\textrm{wBCE}} (w_{\textrm{FN}}=0.009)\big)\)\label{fig:app_synthetic_0.5}]{
        \begin{minipage}{0.45\linewidth}
            \centering
            \includegraphics[width=\linewidth]{experimental_figures/lfconlyw_0.5_error_droprate_linfx_synthetic2.pdf}
        \end{minipage}
    }
    ~
    \subfloat[Synthetic \(\big(\mathcal{L}_{\textrm{wBCE}} (w_{\textrm{FN}}=0.009)\big)\)\label{fig:app_synthetic_0.9}]{
        \begin{minipage}{0.45\linewidth}
            \centering
            \includegraphics[width=\linewidth]{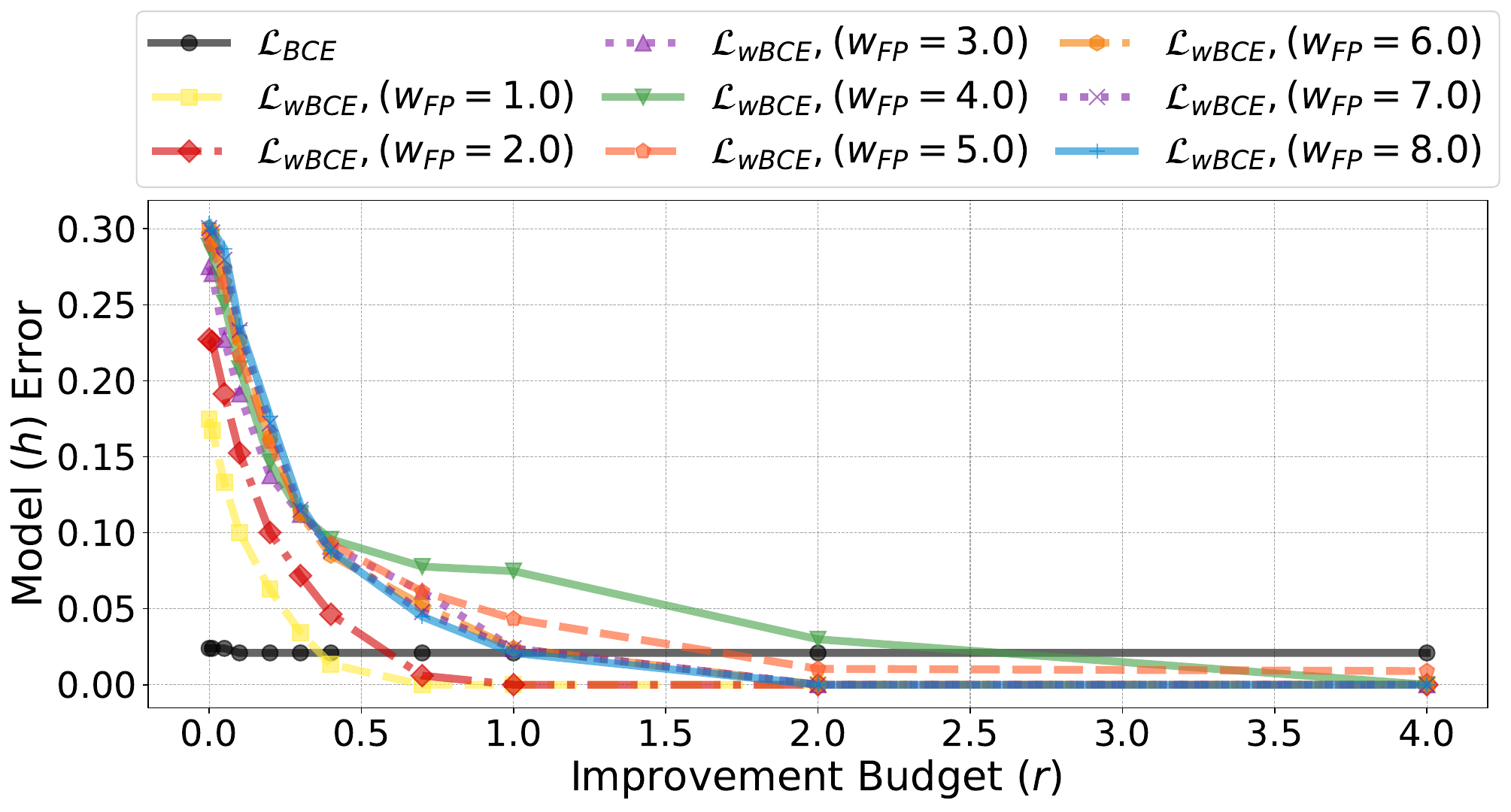}
        \end{minipage}
    }
    \caption{Comparison of the error drop rate when agents improve to the risk-averse \(\big(\mathcal{L}_{\textrm{wBCE}}, \frac{w_\textrm{FP}}{w_\textrm{FN}}>1,  w_{\textrm{FP}}=\{i\}_{i=1}^{8}\big)\) and standard  (\(\mathcal{L}_\textrm{BCE}, w_{\textrm{FP}}= w_{\textrm{FN}}=1\)) models across four datasets (Adult, OULAD, Law school, and Synthetic). \textbf{Column one} (\subref{fig:app_adult_0.5}, \subref{fig:app_oulad_0.5}, \subref{fig:app_law_0.5} and \subref{fig:app_synthetic_0.5}) considers models where an agent is classified as positive if the probability of being positive is above \(0.5\) and \textbf{column two}  (\subref{fig:app_adult_0.9}, \subref{fig:app_oulad_0.9}, \subref{fig:app_law_0.9} and \subref{fig:app_synthetic_0.9}) considers models where a higher threshold is used \(0.9\). Increasing the improvement budget and classifier risk-aversion (high \(\frac{w_{\textrm{FP}}}{w_{\textrm{FN}}}\)) leads to a sharper error drop rate, and loss-based risk aversion is more effective than the threshold-based risk-aversion.}
    \label{fig:app_thresh0.5thresh0.9}

    \begin{picture}(0,0)
        \put(-150,700){{\parbox{4cm}{\centering \(\text{Threshold}=0.5\)}}}
        \put(66,700){{\parbox{4cm}{\centering \(\text{Threshold}=0.9\)}}}
        \put(-230,605){\rotatebox{90}{Adult}}
        \put(-230,450){\rotatebox{90}{OULAD}}
        \put(-230,297){\rotatebox{90}{Law school}}
        \put(-230,152){\rotatebox{90}{Synthetic}}
        
    \end{picture}
\end{figure}
\begin{figure}[t!]
    \centering
    \subfloat[OULAD (\textbf{DTC1})  \label{fig:app_oulad_dtc1_0.5}]{
        \begin{minipage}{0.48\linewidth}
            \centering
            \includegraphics[width=\linewidth]{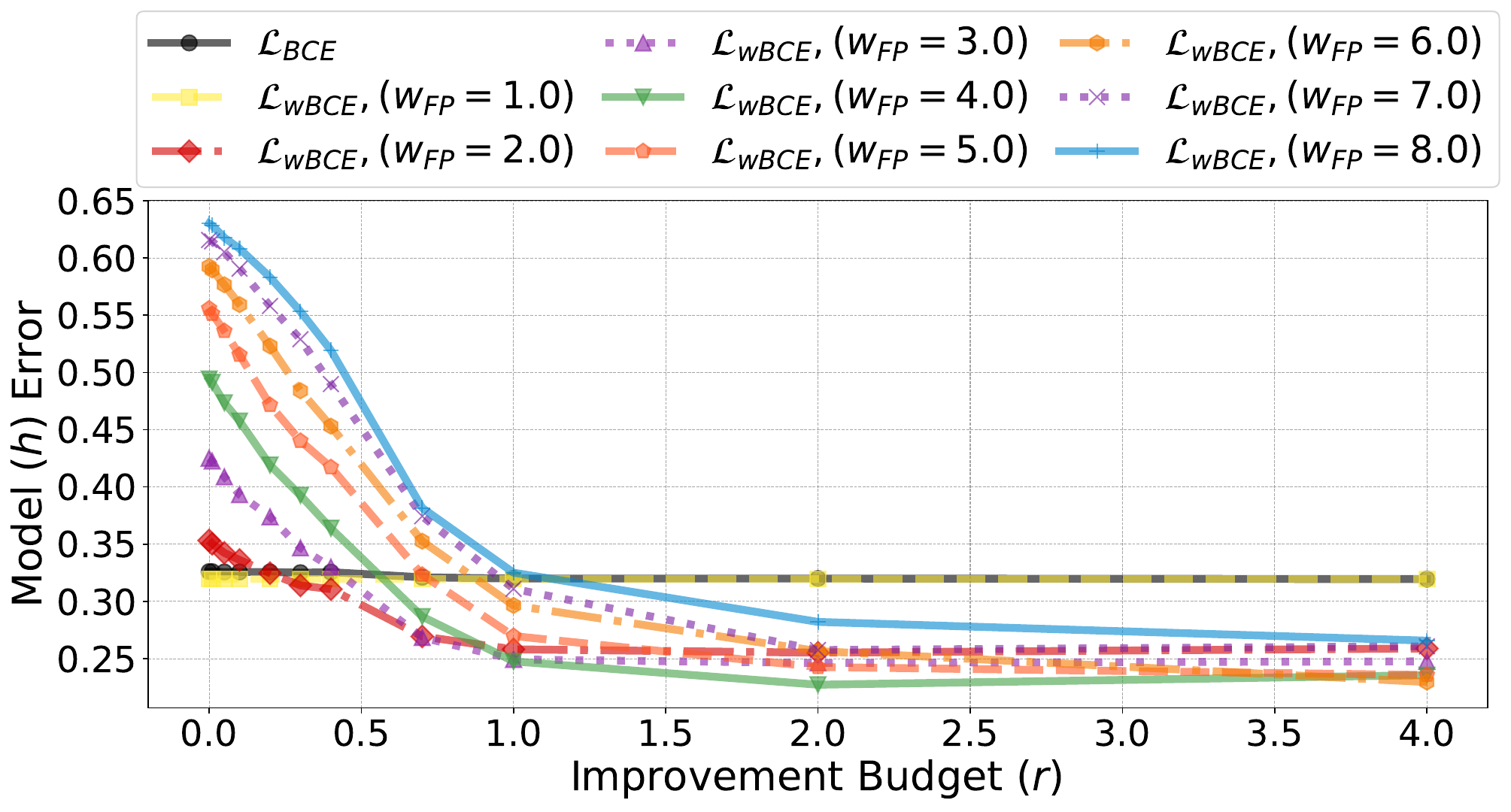}
        \end{minipage}
    } 
    \hfill
    \subfloat[OULAD (\textbf{RFC1}) \label{fig:app_oulad_rfc1_0.5}]{
        \begin{minipage}{0.48\linewidth}
            \centering
            \includegraphics[width=\linewidth]{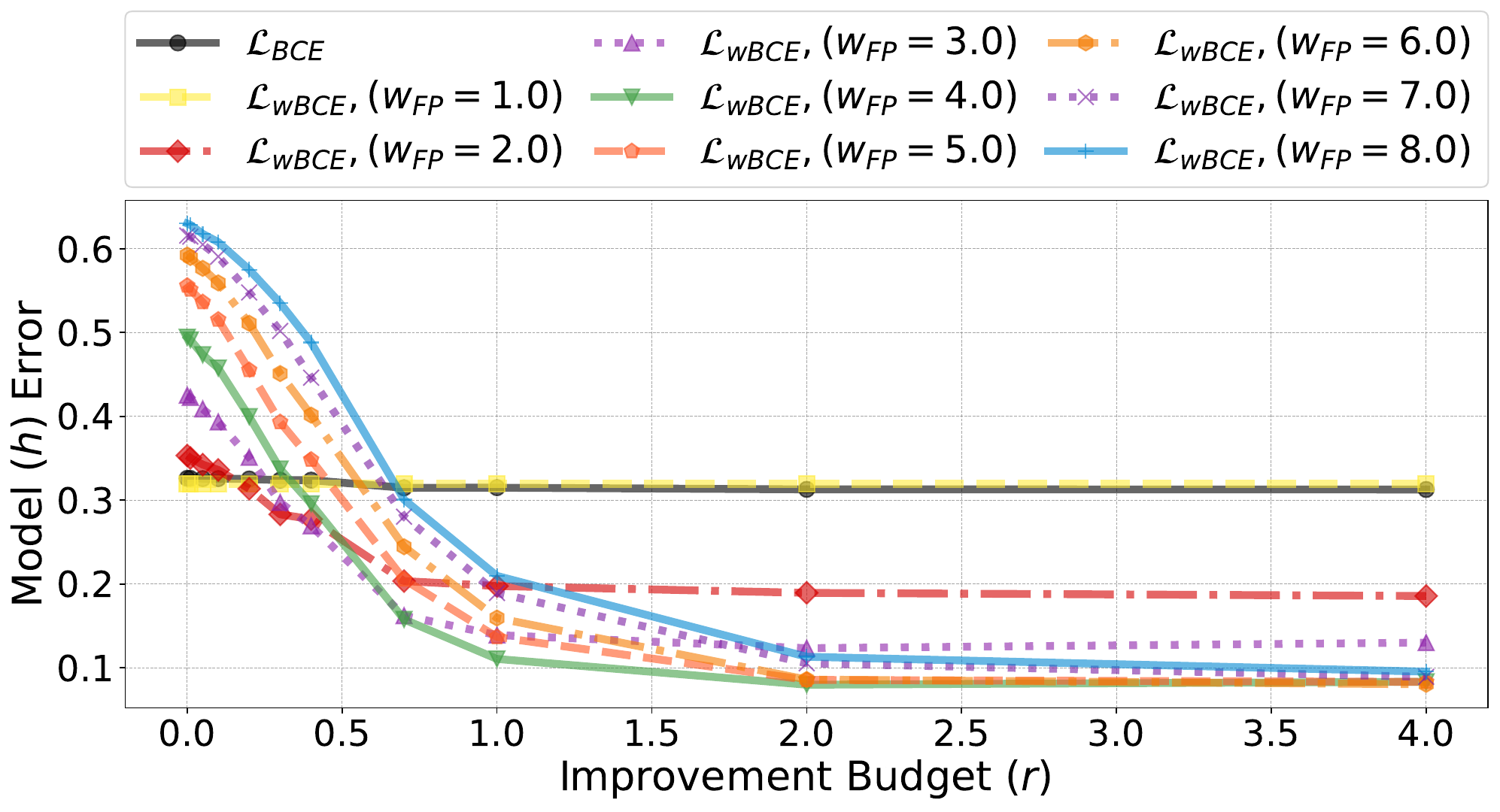}
        \end{minipage}
    }%
    \hfill
    \subfloat[OULAD (\textbf{RFC2}) \label{fig:app_oulad_rfc2_0.5}]{
        \begin{minipage}{0.48\linewidth}
            \centering
            \includegraphics[width=\linewidth]{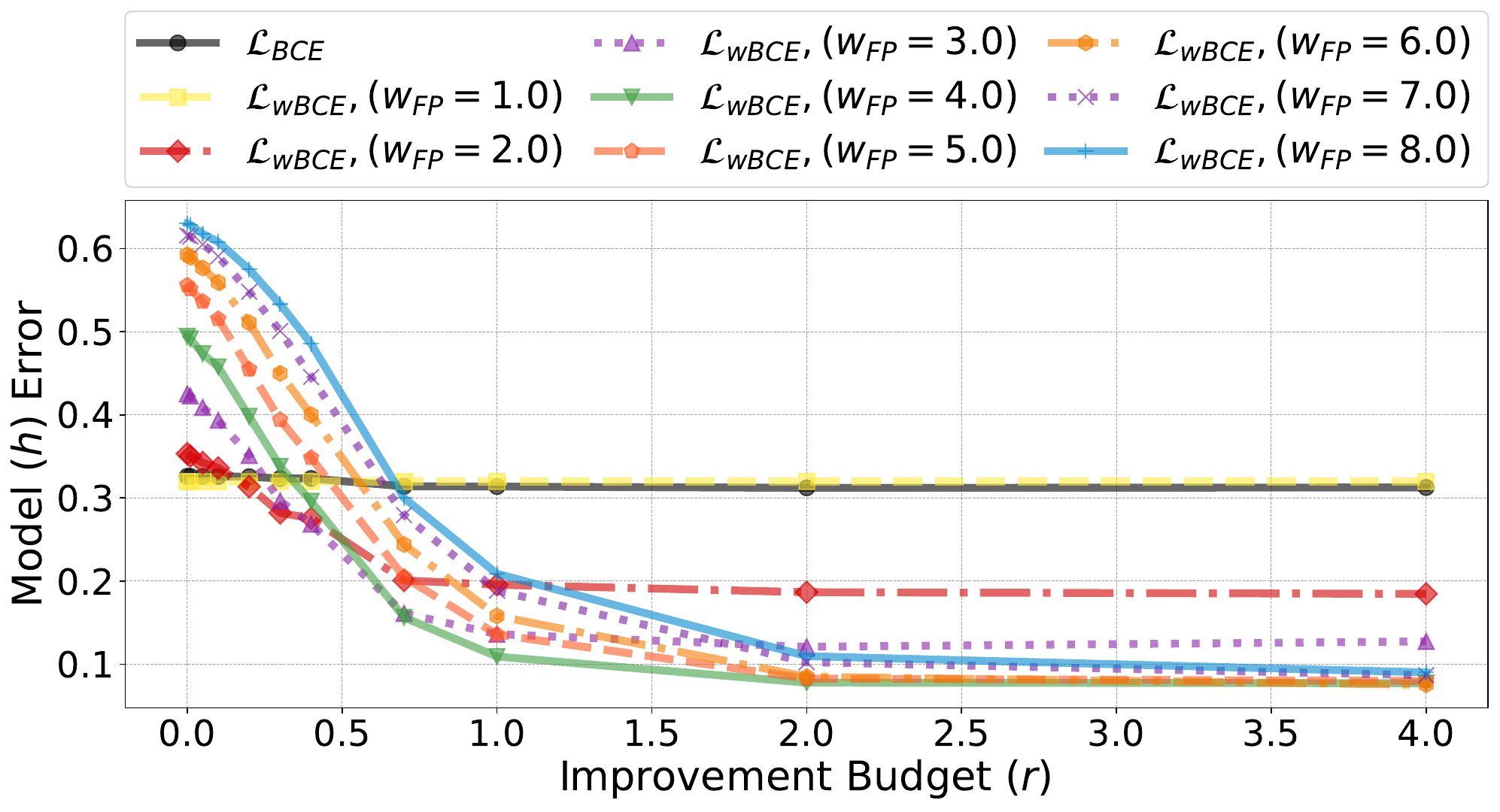}
        \end{minipage}
    }%
    \hfill
    \subfloat[OULAD (\textbf{XGB})\label{fig:app_oulad_xgb_0.5}]{
        \begin{minipage}{0.48\linewidth}
            \centering
            \includegraphics[width=\linewidth]{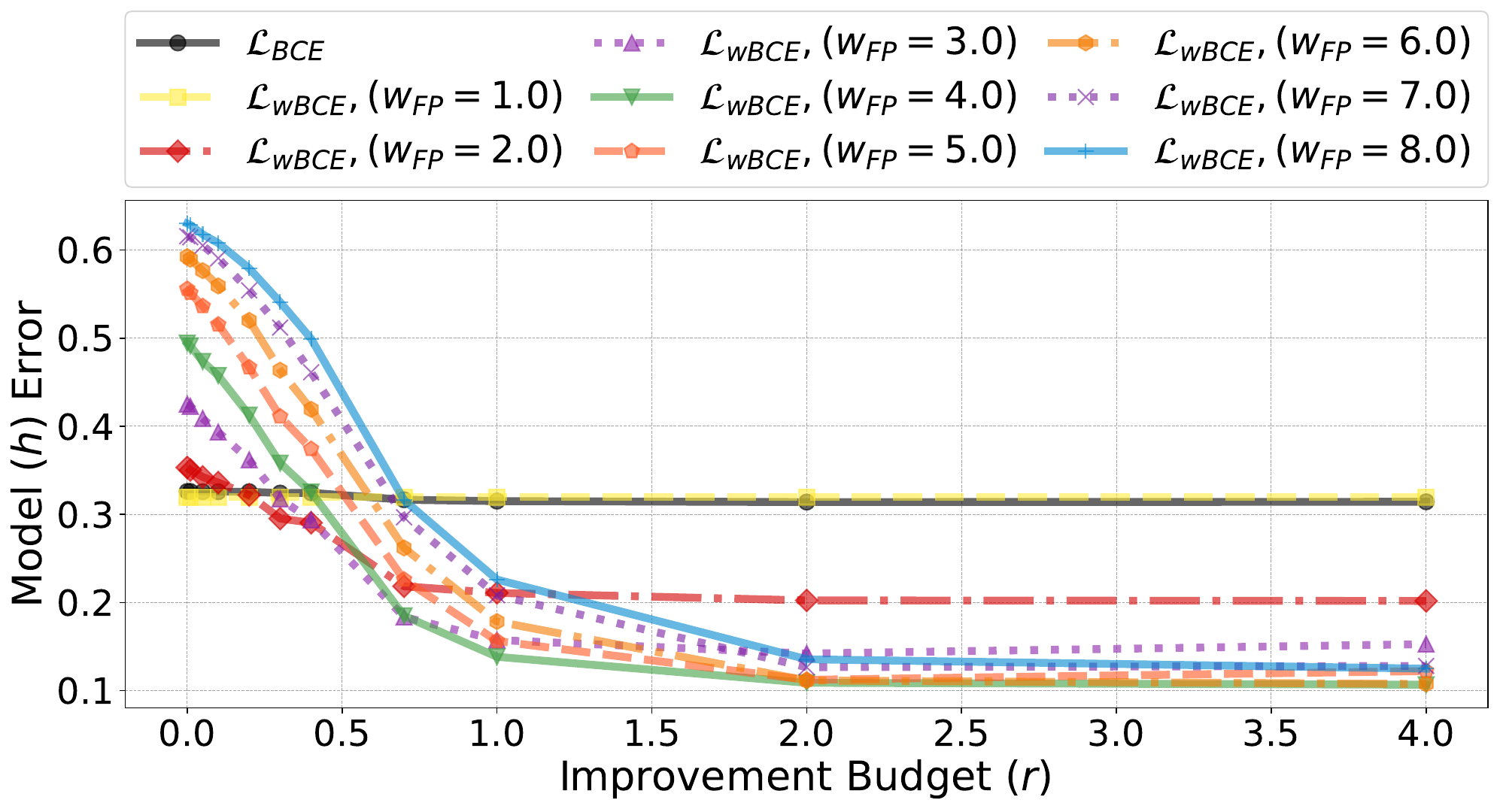}
        \end{minipage}
    }
    \caption{Risk-averse \(\big(\mathcal{L}_{\textrm{wBCE}} \ \text{with} \ w_{\textrm{FP}}=\{i\}_{i=1}^{8}, w_{\textrm{FN}}=1.33\big)\) and standard (\(\mathcal{L}_{\textrm{BCE}}, w_{\textrm{FP}}=w_{\textrm{FN}}=1\)) trained model function (\(h\)) error drop rates versus improvement budget (\(r\)) on the Adult dataset where different \textbf{singularly-defined} \(f^\star\) are used to verify successful-ness of improvement: (\subref{fig:app_oulad_dtc1_0.5}) with \(f^{\star}_{1}\) , (\subref{fig:app_oulad_rfc1_0.5}) with \(f^{\star}_{3}\), (\subref{fig:app_oulad_rfc2_0.5}) with \(f^{\star}_{4}\), and (\subref{fig:app_oulad_xgb_0.5}) with \(f^{\star}_{5}\). For \textbf{DTC2}, \(f^{\star}_{2}\) see Figure~\ref{fig:app_oulad_0.5}. In all cases, the threshold for classifying an agent as positive is \(0.5\).}
    \label{fig:app_oulad_multi_thresh0.5}
\end{figure}
\begin{figure}[ht!]
    \centering
    \subfloat[Adult \(\big(w_{\textrm{FN}}=0.001\big)\) \label{fig:app_adult_multi_0.5}]{
        \begin{minipage}{0.48\linewidth}
            \centering
            \includegraphics[width=\linewidth]{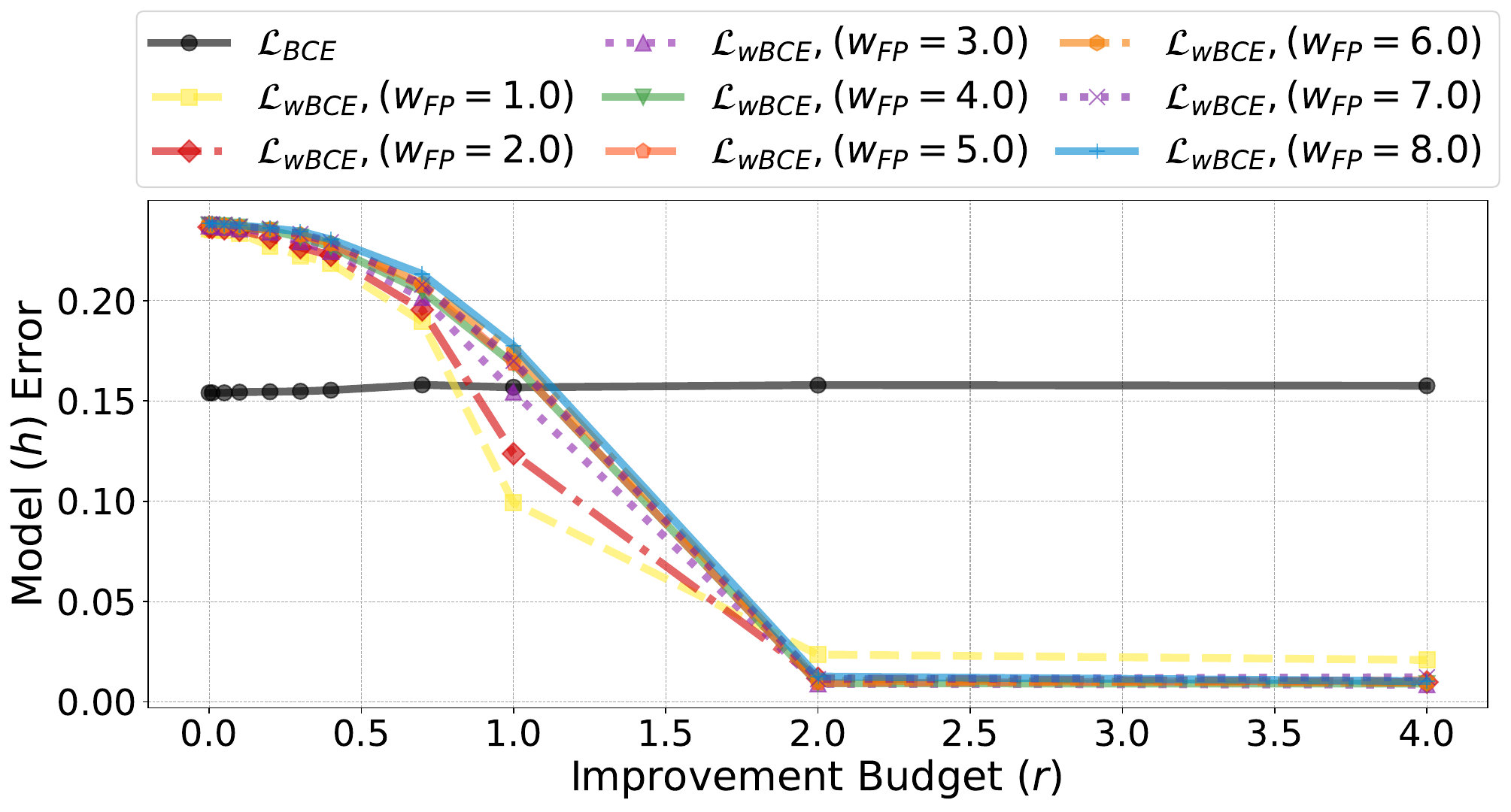}
        \end{minipage}
    } 
    \hfill
    \subfloat[OULAD \(\big(w_{\textrm{FN}}=1.33\big)\) \label{fig:app_oulad_multi_0.5}]{
        \begin{minipage}{0.48\linewidth}
            \centering
            \includegraphics[width=\linewidth]{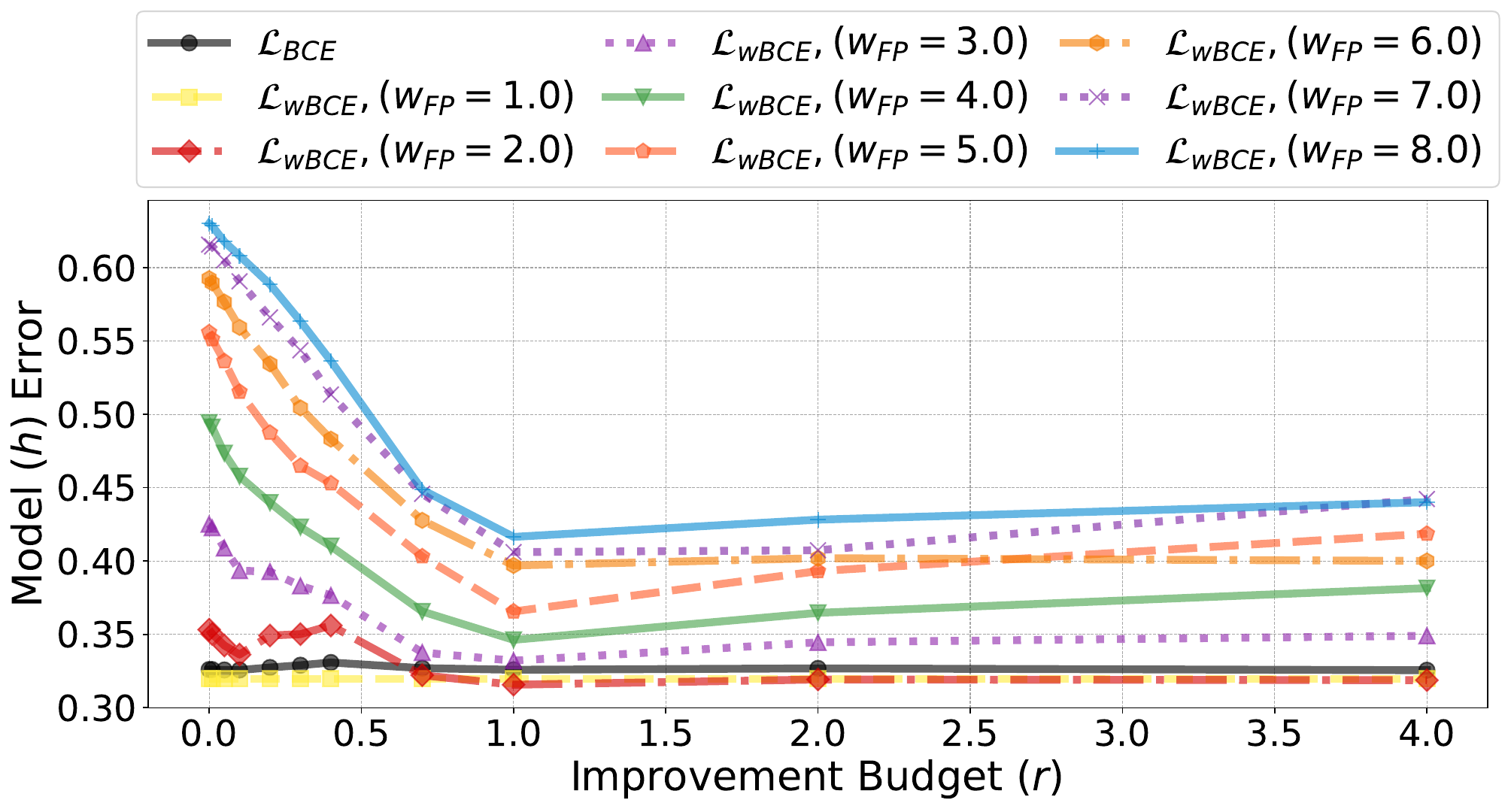}
        \end{minipage}
    }%
    \hfill
    \subfloat[Law school \(\big(w_{\textrm{FN}}=0.009\big)\)\label{fig:app_law_multi_0.5}]{
        \begin{minipage}{0.48\linewidth}
            \centering
            \includegraphics[width=\linewidth]{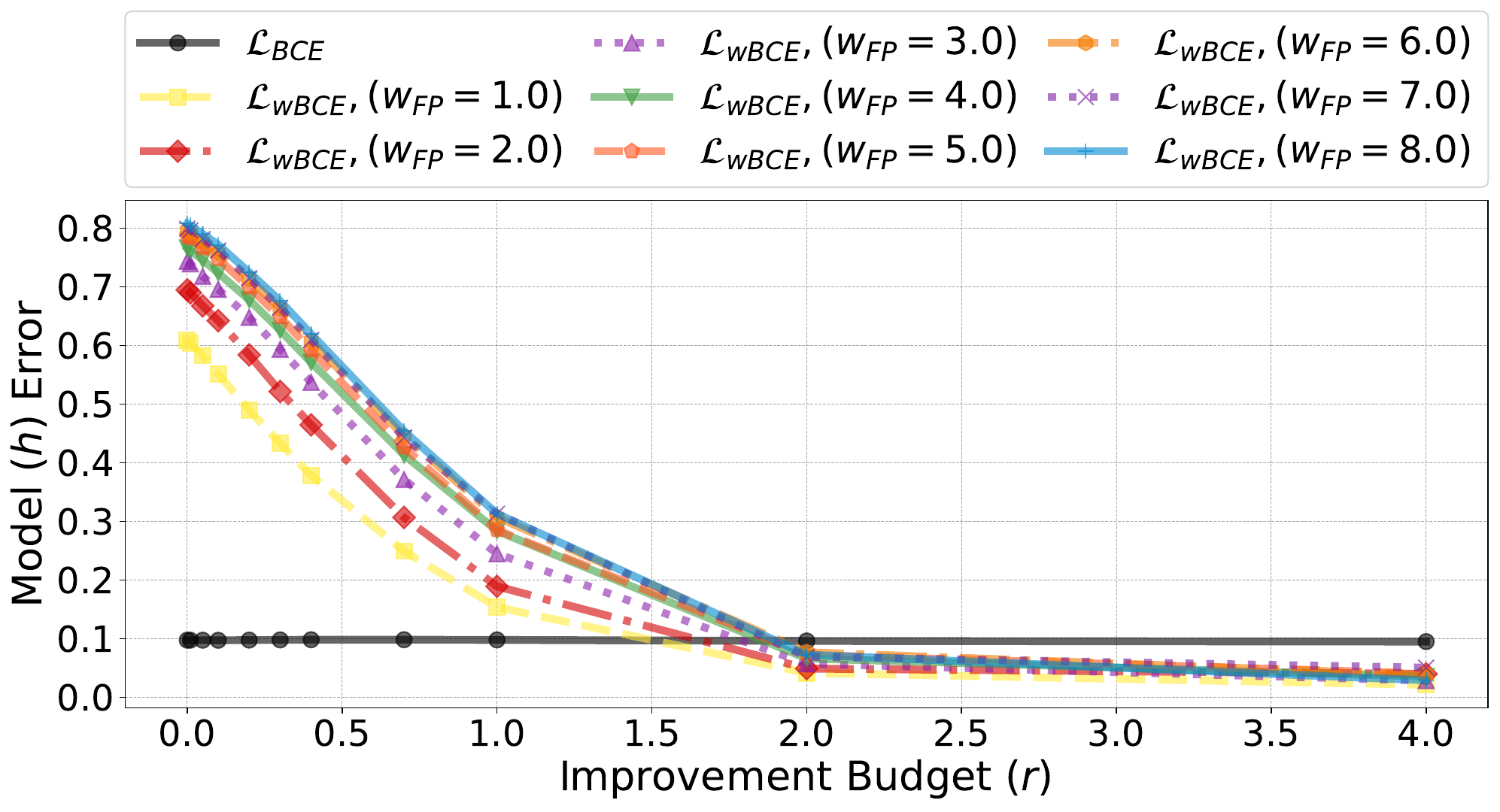}
        \end{minipage}
    }%
    \hfill
    \subfloat[Synthetic \(\big(w_{\textrm{FN}}=0.009\big)\)\label{fig:app_synthetic_multi_0.5}]{
        \begin{minipage}{0.48\linewidth}
            \centering
            \includegraphics[width=\linewidth]{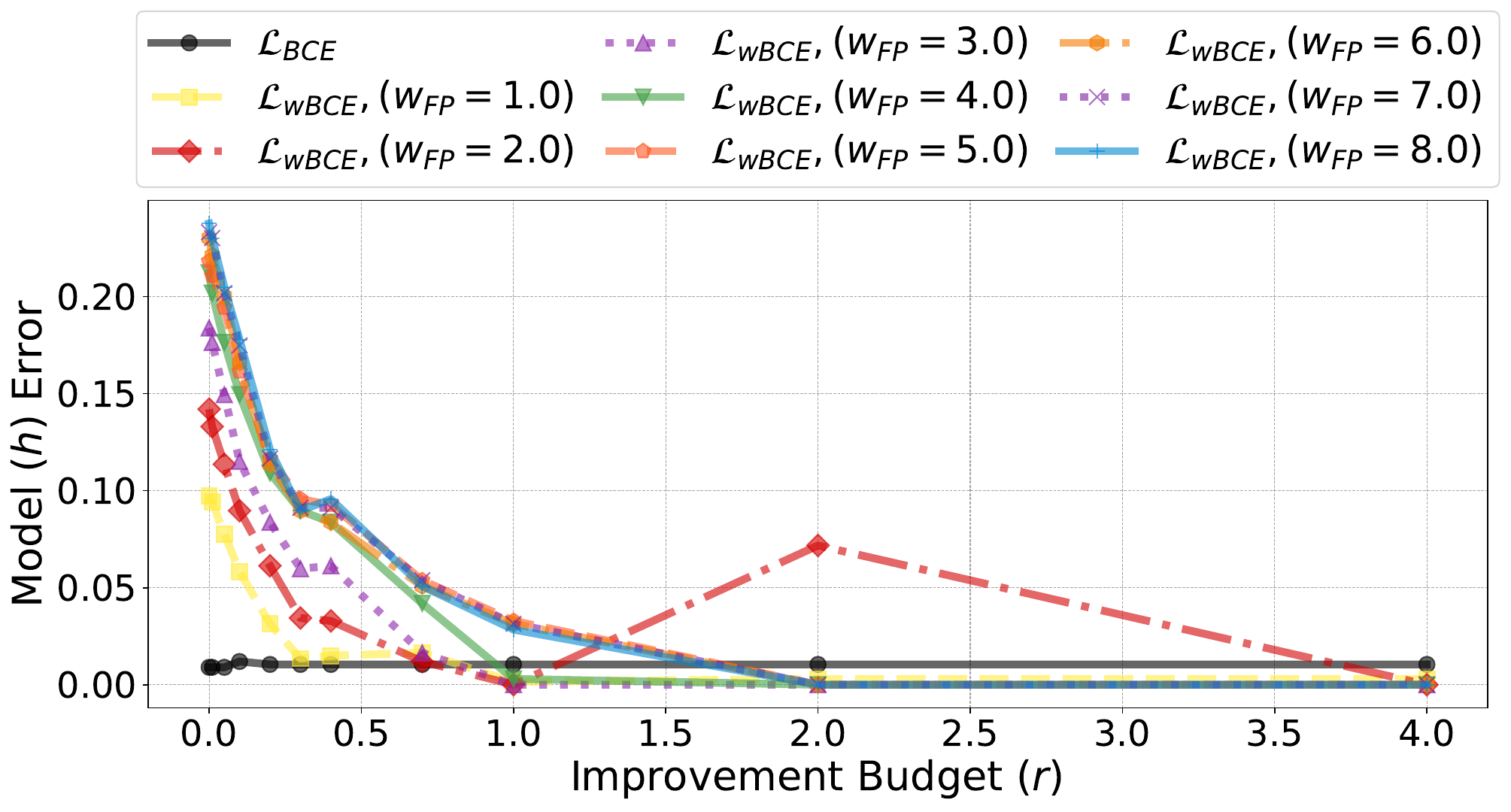}
        \end{minipage}
    }
    \caption{Risk-averse \(\big(\mathcal{L}_{\textrm{wBCE}} \ \text{with} \ w_{\textrm{FP}}=\{i\}_{i=1}^{8}\big)\) and standard (\(\mathcal{L}_{\textrm{BCE}}\)) trained model function (\(h\)) error versus improvement budget (\(r\)) across four datasets (Adult, OULAD, Law school, and Synthetic). For all cases, the threshold for classifying an agent as positive is \(0.5\) and a \textbf{multi-defined} \(f^\star\) model is used to verify successful-ness of improvement. Increasing the improvement budget and classifier risk-aversion (high \(\frac{w_{\textrm{FP}}}{w_{\textrm{FN}}}\)) leads to a faster error drop rate. }
    \label{fig:app_multi_thresh0.5}
\end{figure}

\end{document}